\declaretheorem[name=Fact]{fact}
\renewcommand{\emptyset}{\varnothing}
\newif\iffull
\newcommand{\R}{\mathbb{R}}
\newcommand{\N}{\mathbb{N}}
\newcommand{\feats}{\mathcal{X}}
\newcommand{\labels}{\mathcal{Y}}
\newcommand{\dataset}{\mathcal{D}}
\newcommand{\dtrain}{\dataset_{\textit{train}}}
\newcommand{\dtest}{\dataset_{\textit{test}}}
\DeclareMathOperator*{\argmin}{arg\,min}
\newcommand{\spread}{\psi_p}
\newcommand{\supp}{\textit{supp}}
\newcommand{\opt}{\textit{opt}}
\newcommand{\var}[1]{\textit{#1}}
\newcommand{\writtenbyLC}[1]{#1}
\newcommand{\revise}[1]{#1}
\title{Verifiable Learning for Robust Tree Ensembles}
\gdef\@copyrightpermission{
  \begin{minipage}{0.3\columnwidth}
   \href{https://creativecommons.org/licenses/by/4.0/}{\includegraphics[width=0.90\textwidth]{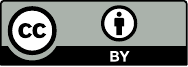}}
  \end{minipage}\hfill
  \begin{minipage}{0.7\columnwidth}
   \href{https://creativecommons.org/licenses/by/4.0/}{This work is licensed under a Creative Commons Attribution International 4.0 License.}
  \end{minipage}
  \vspace{5pt}
}
\author{Stefano Calzavara}
\affiliation{\institution{Università Ca’ Foscari Venezia} \country{}}
\email{stefano.calzavara@unive.it}
\author{Lorenzo Cazzaro}
\affiliation{\institution{Università Ca’ Foscari Venezia} \country{}}
\email{lorenzo.cazzaro@unive.it}
\author{Giulio Ermanno Pibiri}
\affiliation{\institution{Università Ca’Foscari Venezia} \country{}}
\email{giulioermanno.pibiri@unive.it}
\author{Nicola Prezza}
\affiliation{\institution{Università Ca’ Foscari Venezia} \country{}}
\email{nicola.prezza@unive.it}
\begin{document}

\begin{abstract}
Verifying the robustness of machine learning models against evasion attacks at test time is an important research problem. Unfortunately, prior work established that this problem is NP-hard for decision tree ensembles, hence bound to be intractable for specific inputs. In this paper, we identify a restricted class of decision tree ensembles, called \emph{large-spread} ensembles, which admit a security verification algorithm running in polynomial time. We then propose a new approach called \emph{verifiable learning}, which advocates the training of such restricted model classes which are amenable for efficient verification. We show the benefits of this idea by designing a new training algorithm that automatically learns a large-spread decision tree ensemble from labelled data, thus enabling its security verification in polynomial time. Experimental results on public datasets confirm that large-spread ensembles trained using our algorithm can be verified in a matter of seconds, using standard commercial hardware. Moreover, large-spread ensembles are more robust than traditional ensembles against evasion attacks, at the cost of an \revise{acceptable} loss of accuracy in the non-adversarial setting.
\end{abstract}

%%CCS CONCEPTS

\begin{CCSXML}
<ccs2012>
   <concept>
       <concept_id>10003752.10003777.10003779</concept_id>
       <concept_desc>Theory of computation~Problems, reductions and completeness</concept_desc>
       <concept_significance>300</concept_significance>
       </concept>
   <concept>
       <concept_id>10002978.10002986.10002990</concept_id>
       <concept_desc>Security and privacy~Logic and verification</concept_desc>
       <concept_significance>500</concept_significance>
       </concept>
   <concept>
       <concept_id>10010147.10010257.10010258.10010259.10010263</concept_id>
       <concept_desc>Computing methodologies~Supervised learning by classification</concept_desc>
       <concept_significance>500</concept_significance>
       </concept>
   <concept>
       <concept_id>10010147.10010257.10010293.10003660</concept_id>
       <concept_desc>Computing methodologies~Classification and regression trees</concept_desc>
       <concept_significance>500</concept_significance>
       </concept>
 </ccs2012>
\end{CCSXML}

\ccsdesc[300]{Theory of computation~Problems, reductions and completeness}
\ccsdesc[500]{Security and privacy~Logic and verification}
\ccsdesc[500]{Computing methodologies~Supervised learning by classification}
\ccsdesc[500]{Computing methodologies~Classification and regression trees}

\keywords{Machine Learning and Security, Robustness, Verification and Program Analysis for Machine Learning Models}
  
\maketitle

\section{Introduction}
Machine learning (ML) is now phenomenally popular and found an incredible number of applications. The more ML becomes pervasive and applied to critical tasks, however, the more it becomes important to verify whether automatically trained ML models satisfy desirable properties. This is particularly relevant in the security setting, where models trained using traditional learning algorithms proved vulnerable to \emph{evasion attacks}, i.e., malicious perturbations of inputs designed to force mispredictions at test time~\cite{BiggioCMNSLGR13,SzegedyZSBEGF13,DemetrioCBLAR21}. 

Unfortunately, verifying the security of ML models against evasion attacks is a computationally hard problem, because verification must account for all the possible malicious perturbations that the attacker may perform. In this work, we are concerned about the security of \emph{decision tree ensembles}~\cite{BreimanFOS84}, a well-known class of ML models particularly popular for non-perceptual classification tasks, which already received significant attention by the research community. Kantchelian et al.~\cite{KantchelianTJ16} first proved that the problem of verifying security against evasion attacks for decision tree ensembles is NP-complete when malicious perturbations are modeled by an arbitrary $L_p$-norm. In more recent work, Wang et al.~\cite{WangZCBH20} further investigated the problem and observed that the existing negative result largely generalizes to the apparently simpler case of decision stump ensembles, i.e., ensembles including just trees of depth one. They thus proposed \emph{incomplete} verification approaches for decision tree and decision stump ensembles, which can formally prove the absence of evasion attacks, but may incorrectly report evasion attacks also for secure inputs. This conservative approach is efficient and provides formal security proofs, however it is approximated and can draw a pessimistic picture of the actual security guarantees provided by the ML model. Complete verification approaches against specific attackers, e.g., modeled in terms of the $L_\infty$-norm, have also been proposed~\cite{ChenZS0BH19,RanzatoZ20}. They proved to be reasonably efficient in practice for many cases, however they have to deal with the NP-hardness of security verification, hence they are inherently bound to fail in the general setting, especially when the size of the decision tree ensembles increases. As a matter of fact, prior experimental evaluations show that security verification does not always terminate within reasonable time and memory bounds, leading to approximated estimates of the actual robustness of the decision tree ensemble against evasion attacks.

\subsubsection*{Contributions}
We propose a novel approach to the security verification of decision tree ensembles, called \emph{verifiable learning}. Our key idea is moving away from the intractable verification problems arising from arbitrary models to rather focus on learning restricted model classes designed to be easily verifiable. In particular:
\begin{enumerate}
    \item We identify a restricted class of decision tree ensembles, called \emph{large-spread} ensembles, which admit a security verification algorithm running in polynomial time for evasion attacks modeled in terms of an arbitrary $L_p$-norm, thus moving away from existing NP-hardness results (Section~\ref{sec:verification-algorithm}).
    
    \item We propose a new training algorithm that automatically learns a large-spread decision tree ensemble amenable for efficient security verification. In short, our algorithm first trains a traditional decision tree ensemble and then prunes it to satisfy the proposed large-spread condition (Section~\ref{sec:training}).
    
    \item We implement our training algorithm and experimentally verify its effectiveness on four public datasets. Our large-spread ensembles are more robust than traditional ensembles against evasion attacks and admit a much more efficient security verification, at the cost of just an \revise{acceptable} loss of accuracy in the non-adversarial setting (Section~\ref{sec:experiments}).
\end{enumerate}

\subsubsection*{Code availability}
We make our code available online (\url{https://github.com/LorenzoCazzaro/Verifiable-Learning-Robust-Tree-Ensembles}).%\url{https://tinyurl.com/54sw5v56}).%).
%).

% \SC{Fill the link to the repo.}
\section{Background}
In this section we review a few notions required to
appreciate the rest of the paper.
To improve readability, we summarize the main notation used in this paper in Table~\ref{tab:notation}. 

\begin{table}[t]
    \centering
    \begin{tabular}{c|l}
    \toprule
    % \textbf{Symbol} & \multicolumn{1}{c}{\textbf{Meaning}} \\
    % \midrule
    $\vec{x},\vec{z}$ &  Instances drawn from the feature space $\feats$ \\
    $x_i$ & $i$-th component of the vector $\vec{x}$ \\
    $y$ & Class label drawn from the set of labels $\labels$ \\
    $d$ & Number of features of $\vec{x}$ (i.e., dimensionality of $\feats$) \\
    $t$ & Decision tree \\
    $n$ & Number of nodes of a decision tree \\
    $T$ & Tree ensemble \\
    $N$ & Number of nodes of a tree ensemble \\
    $m$ & Number of trees of a tree ensemble \\
    $\vec{\delta}$ & Adversarial perturbation \\
    $\Delta$ & Norm of an adversarial perturbation \\
    $A_{p,k}$ & Attacker based on $L_p$-norm (max. perturbation $k$) \\
    \bottomrule
    \end{tabular}
\caption{Summary of notation. In the definitions of $n$, $N$, $m$ we assume that the decision tree and tree ensemble we are predicating upon are clear from the context.}
\label{tab:notation}
\end{table}

\subsection{Supervised Learning}
Let $\feats \subseteq \R^d$ be a $d$-dimensional vector space of real-valued \textit{features}. An \emph{instance} $\vec{x} \in \feats$ is a $d$-dimensional feature vector $\langle x_1, x_2, \ldots, x_d \rangle$ representing an object in the vector space $\feats$. Each instance is assigned a class label $y \in \labels$ by an unknown \emph{target} function $f: \feats \rightarrow \labels$. As common in the literature, we focus on binary classification, i.e., we let $\labels = \{+1,-1\}$, because any multi-class classification problem can be encoded in terms of multiple binary classification problems.

Supervised learning algorithms automatically learn a \emph{classifier} $g: \feats \rightarrow \labels$ from a \emph{training set} of correctly labeled instances $\dtrain = \{(\vec{x}_i,f(\vec{x}_i))\}_i$, with the goal of approximating the target function $f$ as accurately as possible based on the empirical observations in the training set. The performance of classifiers is normally estimated on a \emph{test set} of correctly labeled instances $\dtest = \{(\vec{z}_i,f(\vec{z}_i))\}_i$, disjoint from the training set, yet drawn from the same data distribution. For example, the standard \emph{accuracy} measure $a(g,\dtest)$ counts the percentage of test instances where the classifier $g$ returns a correct prediction.

\subsection{Decision Trees and Tree Ensembles}
In this paper, we focus on traditional \emph{binary decision trees} for classification~\cite{BreimanFOS84}. Decision trees can be inductively defined as follows: a decision tree $t$ is either a leaf $\lambda(y)$ for some label $y \in \labels$ or an internal node $\sigma(f,v,t_l,t_r)$, where $f \in \{1,\ldots,d\}$ identifies a feature, $v \in \R$ is a threshold for the feature, and $t_l,t_r$ are decision trees (left and right child). We just write $\sigma(f,v)$ to represent an internal node when $t_l,t_r$ are unimportant. Decision trees are learned by initially putting all the training set into the root of the tree and by recursively splitting leaves (initially: the root) by identifying the threshold therein leading to the best split of the training data, e.g., the one with the highest information gain, thus transforming the split leaf into a new internal node.
%We write $\lambda(y) \in t$ and $\sigma(f,v,t_l,t_r) \in t$ to indicate that some leaf and internal node belongs to $t$. (We will also write $\sigma(f,v)$ for short whenever $t_l$ and $t_r$ are not explicitly needed.)

At test time, the instance $\vec{x}$ traverses the tree $t$ until it reaches a leaf $\lambda(y)$, which returns the prediction $y$, denoted by $t(\vec{x}) = y$. Specifically, for each traversed tree node $\sigma(f,v,t_l,t_r)$, $\vec{x}$ falls into the left sub-tree $t_l$ if $x_f \leq v$, and into the right sub-tree $t_r$ otherwise. Fig.~\ref{fig:tree} represents an example decision tree of depth 2, which assigns label $+1$ to the instance $\langle 12,7 \rangle$ and label $-1$ to the instance $\langle 8,6 \rangle$. 

To improve their performance, decision trees are often combined into an \emph{ensemble} $T = \{t_1,\ldots,t_m\}$, which aggregates individual tree predictions, e.g., by performing majority voting. We write $T(\vec{x})$ for the prediction of $T$ on $\vec{x}$ and we let $N$ stand for the number of nodes of the ensemble $T$ when such ensemble is clear from the context. For simplicity, we focus on majority voting to aggregate individual tree predictions, assuming that the number of trees $m$ is odd to avoid ties. While ensembles trained using existing frameworks (like sklearn) may use more sophisticated aggregation techniques, our focus on large-spread ensembles trained using a custom algorithm gives us freedom on the choice of the aggregation strategy and majority voting already proves effective in practice. Notable ensemble methods include Random Forest~\cite{Breiman01} and Gradient Boosting~\cite{KeMFWCMYL17}.

\begin{figure}[t]
\centering
\begin{tikzpicture}[level 1/.style={sibling distance=4cm},level 2/.style={sibling distance=3cm}]
\tikzstyle{every node}=[rectangle,draw]
\node{$x_1 \leq 10$}
	child { node {$x_2 \leq 5$}
	        child { node {$+1$}}
	        child { node {$-1$}} }
	child { node {$x_2 \leq 8$}
		    child { node {$+1$}}
	        child { node {$-1$}} }
;
\end{tikzpicture}
\caption{Example of decision tree.}
\label{fig:tree}
\end{figure}
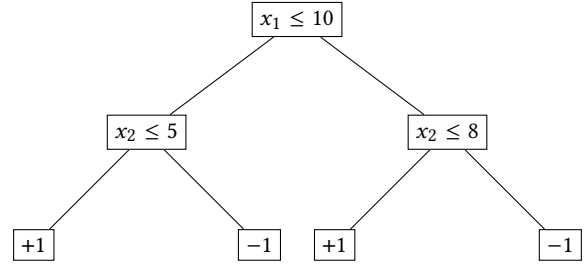

\subsection{Robustness}
Classifiers deployed in adversarial settings may be susceptible to \emph{evasion attacks}, i.e., malicious perturbations of test instances crafted to force prediction errors~\cite{BiggioCMNSLGR13,SzegedyZSBEGF13}. To capture this problem, the \emph{robustness} measure has been introduced~\cite{MadryMSTV18}. Below, we follow the presentation in~\cite{RanzatoZ20}.

An \emph{attacker} $A: \feats \rightarrow 2^{\feats}$ is modeled as a function from instances to sets of instances, i.e., $A(\vec{x})$ represents the set of all the adversarial manipulations of the instance $\vec{x}$, corresponding to the possible evasion attack attempts against $\vec{x}$. The \emph{stability} property requires that the classifier does not change its original prediction on some input for all its possible adversarial manipulations.

\begin{definition}[Stability]
The classifier $g$ is \emph{stable} on $\vec{x}$ for the attacker $A$ iff for all $\vec{z} \in A(\vec{x})$ we have $g(\vec{z}) = g(\vec{x})$.
\end{definition}

Stability is certainly a desirable property for classifiers deployed in adversarial settings;
however, a classifier that always predicts the same class for all the instances trivially satisfies stability for all the attackers,
but it is useless in practice because it lacks any predictive power.
Robustness improves upon stability by requiring the classifier to also perform correct predictions.

\begin{definition}[Robustness]
The classifier $g$ is \emph{robust} on $\vec{x}$ for the attacker $A$ iff $g(\vec{x}) = f(\vec{x})$ and $g$ is stable on $\vec{x}$ for $A$.
\end{definition}

Based on the definition of robustness, for a given attacker $A$, we can define the robustness measure $r_A(g,\dtest)$ by computing the percentage of test instances where the classifier $g$ is robust.

In the following, we focus on attackers represented in terms of an arbitrary $L_p$-norm, i.e., the attacker's capabilities are defined by some $p \in \N \cup \{0,\infty\}$ and the maximum perturbation $k$. For fixed $p$ and $k$, we assume the attacker $A_{p,k}(\vec{x}) = \{\vec{z} \in \feats ~|~ ||\vec{z} - \vec{x}||_p \leq k\}$.
\section{Efficient Robustness Verification}
\label{sec:verification-algorithm}

We first review results regarding the robustness verification problem
for single decision trees (Section~\ref{sec:one-tree}). We then generalize the result
to $m$ trees by introducing \emph{large-spread} decision tree ensembles (Section~\ref{sec:many-tree}),
which enable robustness verification in $O(N + m \log m)$ time. This is a major improvement over traditional decision tree ensembles, for which robustness verification is NP-complete~\cite{KantchelianTJ16}.

\subsection{Decision Trees}\label{sec:one-tree}

The robustness verification problem can be solved in $O(nd)$ time for a decision tree
with $n$ nodes when the attacker is expressed in terms of an arbitrary $L_p$-norm~\cite{WangZCBH20}.
This generalizes a previous result for the $L_\infty$-norm~\cite{ChenZS0BH19}.
The key idea of the algorithm is that stability on the instance $\vec{x}$
can be verified by identifying 
all the leaves that are reachable as the result of an evasion attack attempt $\vec{z} \in A_{p,k}(\vec{x})$; hence, stability holds iff all such leaves predict the same class.
This set of leaves can be computed by means of a simple tree traversal.
Correspondingly, assuming that $\vec{x}$ has label $y$, a decision tree $t$ is robust
on $\vec{x}$ iff $t(\vec{x}) = y$ and there does not exist any reachable leaf assigning
to $\vec{x}$ a label different from $y$.
The algorithm operates in two steps: (1) tree annotation and (2) robustness verification.

% \giulio{Isn't it better to call $\var{label}(\vec{x})$ the true, i.e., correct, label
% of the instance $\vec{x}$ and let $t(\vec{x})=y$ stand for the tree prediction (which could also be wrong)? This would simplify notation by avoiding $y'$.}

% \subsubsection{Step 1 -- Tree Annotation.}
% \label{sec:tree-annotation}
\subsubsection{Step 1 -- Tree Annotation}
The first step of the algorithm is a pre-processing operation -- performed only once --
where each node of the decision tree is annotated with auxiliary information for the second step.
The annotations are hyper-rectangles that symbolically represent
the set of instances which may traverse the nodes upon prediction.
The algorithm first annotates the root with the $d$-dimensional hyper-rectangle $(-\infty,+\infty]^d$, meaning that every instance will traverse the root. Children are then annotated by means of a recursive tree traversal: concretely, if the father node $\sigma(f,v,t_1,t_2)$ is annotated with $(l_1,r_1] \times \ldots \times (l_d,r_d]$, then the annotations of the roots of $t_1$ and $t_2$ are defined as $(l_1^1,r_1^1] \times \ldots \times (l_d^1,r_d^1]$ and $(l_1^2,r_1^2] \times \ldots \times (l_d^2,r_d^2]$ respectively, where:
\begin{equation}\label{eq:update-left}
(l_i^1,r_i^1] = \begin{cases}
(l_i,r_i] \cap (-\infty,v] = (l_i,\min\{r_i,v\}] & \textnormal{if } i = f \\
(l_i,r_i] & \textnormal{otherwise},
\end{cases}
\end{equation}
and:
\begin{equation}\label{eq:update-right}
(l_i^2,r_i^2] = \begin{cases}
(l_i,r_i] \cap (v,+\infty) = (\max\{l_i,v\},r_i] & \textnormal{if } i = f \\
(l_i,r_i] & \textnormal{otherwise}.
\end{cases}
\end{equation}
The annotation process terminates when all the nodes have been annotated.
Note that the complexity of this annotation step is $O(nd)$,
because all $n$ nodes are traversed and annotated with a hyper-rectangle of size $d$.

% \subsubsection{Step 2 -- Robustness Verification.}
% \label{sec:tree-verification}
\subsubsection{Step 2 -- Robustness Verification}
Given an annotated decision tree and an instance $\vec{x}$, it is possible to identify the set of leaves which may be reached by $\vec{x}$ upon prediction in presence of adversarial manipulations.

Let $H = (l_1,r_1] \times \ldots \times (l_d,r_d]$ be the hyper-rectangle annotating a leaf $\lambda(y')$.
The minimal perturbation required to push $\vec{x}$ into $\lambda(y')$ is $\textit{dist}(\vec{x},H) = \vec{\delta} \in \R^d$, where:\footnote{We write $l_i - x_i + \varepsilon$ to stand for the minimum floating point number which is greater than $l_i - x_i$. The original paper~\cite{ChenZS0BH19} uses $l_i - x_i$ rather than $l_i - x_i + \varepsilon$, but this is incorrect because $\vec{\delta}$ identifies the minimal perturbation such that $\vec{x} + \vec{\delta} \in H$, however $x_i + l_i - x_i = l_i \not\in (l_i,r_i]$. We also assume here that $H$ is not empty, i.e., there does not exist any $(l_j,r_j]$ in $H'$ such that $l_j \geq r_j$.}
\begin{equation}\label{eq:lambda}
\delta_i = \var{dist}(\vec{x},H_i)=
\begin{cases}
0& \textnormal{if } x_i \in H_i=(l_i, r_i] \\
l_i - x_i + \varepsilon & \textnormal{if } x_i \leq l_i \\
r_i - x_i & \textnormal{if } x_i > r_i.
\end{cases}
\end{equation}
Thus, given the instance $\vec{x}$ with label $y$, it is possible to compute the set:
\begin{align}\label{eq:L}
    D = \Big\{ ||\vec{\delta}||_p \, | \,  \exists \, & H: \var{dist}(\vec{x},H) = \vec{\delta} 
        \wedge  ||\vec{\delta}||_p \leq k  \nonumber \\
        \wedge \, & H \textnormal{ annotates a leaf } \lambda(y') \textnormal{ with } y' \neq y \Big\}.
\end{align}

In other words, during the visit we find the leaves with a wrong class
where $\vec{x}$ might fall as the result of adversarial manipulations by the attacker $A_{p,k}$
and we compute the norms $||\vec{\delta}||_p$ of the
minimal perturbations $\vec{\delta}$ to be applied to $\vec{x}$ to push it there.
Hence, the tree is robust against the attacker $A_{p,k}$ iff $D = \emptyset$.
This computation can be performed in $O(nd)$ time,
since we have $O(n)$ leaves and each vector $\vec{\delta}$ with its norm can be computed in $\Theta(d)$ time.

\subsection{Generalization to Tree Ensembles}\label{sec:many-tree}

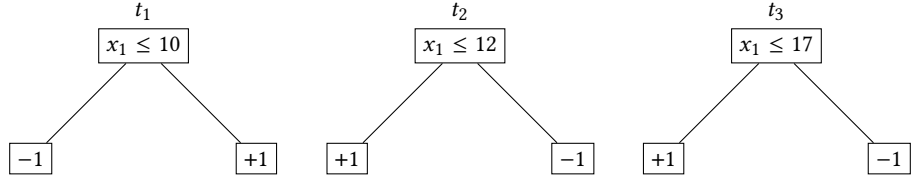
\begin{figure*}[t]
\centering
\begin{tikzpicture}[level 1/.style={sibling distance=3cm},level 2/.style={sibling distance=3cm}]
\tikzstyle{every node}=[rectangle,draw]
\node (T1)[label=$t_1$] {$x_1 \leq 10$}
	child { node {$-1$} }
	child { node {$+1$} }
;
\node (T2)[label=$t_2$,right= 3cm of T1]{$x_1 \leq 12$}
	child { node {$+1$} }
	child { node {$-1$} }
;
\node[label=$t_3$,right= 3cm of T2]{$x_1 \leq 17$}
	child { node {$+1$} }
	child { node {$-1$} }
;
\end{tikzpicture}
\caption{Example of tree ensemble with three decision trees.}
\label{fig:ensemble}
\end{figure*}

The robustness verification problem is NP-complete for tree ensembles
when the attacker is expressed in terms of an arbitrary $L_p$-norm~\cite{KantchelianTJ16}.
Of course, this negative result predicates over \emph{arbitrary} tree ensembles,
but does not exclude the possibility that restricted classes of ensembles
may admit a more efficient robustness verification algorithm.
In this section we introduce the class of \emph{large-spread} tree ensembles, which rule out the key source of complexity from the robustness verification problem and allow robustness verification in
$O(N + m \log m)$ time.

% \subsubsection{Key Intuitions.}
\subsubsection{Key Intuitions}
The key idea of the proposed large-spread condition allows one to verify the robustness guarantees of the individual decision trees in the ensembles
% (using the presented linear time algorithm)
and \emph{compose their results} to draw conclusions about the robustness of the whole ensemble.

To understand why composing robustness verification results is unfeasible for arbitrary ensembles, consider the ensemble $T$ in Fig.~\ref{fig:ensemble} and an instance $\vec{x}$ with label $+1$ such that $x_1 = 11$. Consider the attacker $A_{1,2}$ who can modify feature 1 of at most $\pm 2$, then for every adversarial manipulation $\vec{z} \in A_{1,2}(\vec{x})$ we have $z_1 \in [9,13]$. We observe that the trees $t_1$ and $t_2$ are not robust on $\vec{x}$, because there exists an adversarial manipulation that forces them to predict the wrong class $-1$. However, the whole ensemble $T$ is robust on $\vec{x}$, because $T(\vec{x}) = +1$ and for every adversarial manipulation $\vec{z} \in A_{1,2}(\vec{x})$ we have $T(\vec{z}) = +1$, because either $t_1$ or $t_2$ alone is affected by the attack, hence at least two out of the three trees in the ensemble always perform the correct prediction. The example is deliberately simple to show that attacks against two different trees might be \emph{incompatible}, i.e., an attack working against one tree does not necessarily work against the other tree and vice-versa. This implies that the combination of multiple non-robust trees can lead to the creation of a robust ensemble.

%\notaN{quello che c'è scritto qui è vero, ma non è la ragione della complessità esponenziale. La ragione è che, date due foglie in due alberi, non basta sommare la loro distanza per calcolare la distanza sull'ensemble ma occorre verificare se l'attacco sulle due foglie si interseca su qualche feature. Ne segue che per verificare la robustness occorre per forza considerare tutti i possibili sottoinsiemi di foglie (una per albero).}

%Indeed, the NP-completeness of the robustness verification problem for tree ensembles ensures that it is impossible to find a polynomial time algorithm that somehow composes the (linear time) robustness analyses of the individual trees to verify the robustness of the ensemble.

The key intuition enabling our compositional reasoning is that interactions among different trees are only possible when the thresholds therein are close enough to each other. Indeed, in our example we showed that there exists an instance $\vec{x}$ which can be successfully attacked in both $t_1$ and $t_2$, yet no attack succeeds against both trees at the same time. The reason why this happens is that the thresholds in the roots of the trees (10 and 12 respectively) are too close to each other when taking into account the possible adversarial manipulations: an adversarial manipulation can corrupt the original feature value 11 to produce an arbitrary value in the interval $[9,13]$, which suffices to enable attacks in both $t_1$ and $t_2$. However, none of the attacks against $t_1$ works against $t_2$ and vice-versa. Conversely, it is not possible to find any instance $\vec{x}$ which can be attacked in both $t_2$ and $t_3$, because for every adversarial manipulation $\vec{z} \in A_{1,2}(\vec{x})$ we have $z_1 \in [x_1-2,x_1+2]$ and the distance between the thresholds in the trees ($17 - 12 = 5 > 4$) is large enough to ensure that the problem of incompatible attacks cannot exist, because the feature 1 can be attacked just in one of the two trees. For example, if $x_1 = 14$, then only $t_2$ can be attacked, while if $x_1 = 16$ only $t_3$ can be attacked; if $x_1 = 15$, instead, neither $t_2$ nor $t_3$ can be attacked.

\subsubsection{Large-Spread Ensembles}
% \subsubsection{Large-Spread Ensembles.}
We formalize this intuition by defining the \emph{$p$-spread} of a tree ensemble $T$ as the minimum distance between the thresholds of the same feature across different trees, according to the $L_p$-norm. If $\spread(T) > 2k$, where $k$ is the maximum adversarial perturbation, we say that $T$ is large-spread.

\begin{definition}[Large-Spread Ensemble]
\label{def:spread}
Given the ensemble $T = \{t_1, \dots, t_m\}$, its $p$-spread $\spread(T)$ is:
$$
\spread(T) = \min \bigcup_{\substack{1 \leq f \leq d\\t, t' \in T, t\neq t'}} \Big\{ ||v-v'||_p : \sigma(f,v) \in t \wedge \sigma(f,v') \in t' \Big\}.
$$
We say that $T$ is \emph{large-spread} for the attacker $A_{p,k}$ iff $\spread(T) > 2k$.
\end{definition}

A large-spread ensemble $T$ allows one to compose attacks working against individual trees to produce an attack against the ensemble as follows. Assuming $\vec{z}_i = \vec{x} + \vec{\delta}_i$ is an attack against a tree $t_i \in T$ and $\vec{z}_j = \vec{x} + \vec{\delta}_j$ is an attack against a different tree $t_j \in T$,
then the large-spread condition guarantees %allows one to assume without loss of generality
that $\vec{\delta}_i$ and $\vec{\delta}_j$ target disjoint sets of features,
i.e., %their support is disjoint, or equivalently
they are orthogonal ($\vec{\delta}_i \cdot \vec{\delta}_j = 0$).
Indeed, each feature can be corrupted of $k$ at most, however the same feature can be reused in different trees only if the corresponding thresholds are more than $2k$ away, hence it is impossible for any feature value to traverse more than one threshold as the result of an evasion attack (we formalize and prove this result in 
\iffull Appendix~\ref{sec:proofs}). \else the full version~\cite{abs-2305-03626}).\fi
The disjointness condition of attacks implies that $\vec{z} = \vec{x} + \vec{\delta}_i + \vec{\delta}_j$ is an attack working against both $t_i$ and $t_j$ (assuming $||\vec{\delta}_i + \vec{\delta}_j||_p \leq k$), because $t_i(\vec{z})$ and $t_j(\vec{z})$ take the same prediction paths of $t_i(\vec{z}_i)$ and $t_j(\vec{z}_j)$ respectively, which are successful attacks against the two trees.
Note that this does not hold for arbitrary tree ensembles, like the one in Fig.~\ref{fig:ensemble}. Indeed, for that ensemble and an instance $\vec{x}$ such that $x_1 = 11$ the attack against $t_1$ subtracts 2 from the feature 1 and the attack against $t_2$ adds 2 to the feature 1, hence the sum of the two attacks would leave the instance $\vec{x}$ unchanged.

% \subsubsection{Efficient Robustness Verification of Large-Spread Ensembles.}
\subsubsection{Robustness Verification of Large-Spread Ensembles}
\label{sec:main-algo}
This compositionality result is powerful, because it allows
the efficient robustness verification of large-spread ensembles.
The intuition is that -- since the ensemble $T$ is large-spread --
the minimal perturbations $\{\vec{\delta}_i\}_i$
% (computed as explained in Section~\ref{sec:one-tree})
enabling attacks against the individual trees $\{t_i\}_i$ can be summed up together
to obtain a perturbation $\vec{\delta}$ enabling an attack against the whole ensemble.
More precisely, let $T' \subseteq T$ be the set of trees in $T$ which may suffer from a successful attack, then:
\begin{itemize}
    \item If $|T'| < \frac{m-1}{2}+1$, then the number of trees performing a wrong prediction under attack is too low to identify a successful attack against the whole ensemble.
    \item If $|T'| \geq \frac{m-1}{2}+1$, instead, we consider the $\frac{m-1}{2}+1$
attacks $\{\vec{\delta}_i\}_i$ with the \emph{smallest} $L_p$-norm.
An attack against $T$ is then possible iff $||\vec{\delta}||_p \leq k$, 
where $\vec{\delta}=\sum_{i=1}^{\frac{m-1}{2} + 1} \vec{\delta}_i$.
% \begin{equation}\label{eq:composite-perturbation}
%     \vec{\delta}=\sum_{i=1}^{\frac{m-1}{2} + 1} \vec{\delta}_i.
% \end{equation}
\end{itemize}

However, note that the complexity of this algorithm is $O(Nd + m \log m)$
because we annotate each of the $N$ nodes in the ensemble with a hyper-rectangle of size $d$
and we compute the minimum perturbations along with their norms,
as explained in Section~\ref{sec:one-tree}.
Moreover, to find the perturbations with the smallest norms,
we have to sort the pairs $(\vec{\delta_i},||\vec{\delta_i}||_p)$
in non-decreasing order of $L_p$-norm
in $O(m\log m)$ time.
We now show that the large-spread condition enables
a more efficient algorithm, running in $O(N + m \log m)$ time.

\subsubsection{Optimization}
If the minimal perturbations $\{ \vec{\delta}_i \}_i$
are pairwise orthogonal vectors, then the following facts hold.

\begin{fact}\label{fact:pairwise-orthogonality-0}
$|| \sum_{i=1}^q \vec{\delta}_i ||_0 = \sum_{i=1}^q || \vec{\delta}_i ||_0$,
if $\vec{\delta}_i \cdot \vec{\delta}_j = 0$, $\forall (i,j)$.
\end{fact}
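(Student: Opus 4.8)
The plan is to unpack the $L_0$ pseudo-norm as a support count and then notice that, in the setting at hand, the orthogonality hypothesis is really a disjointness-of-supports hypothesis. Write $\supp(\vec{v}) = \{\ell \in \{1,\dots,d\} : v_\ell \neq 0\}$, so that $||\vec{v}||_0 = |\supp(\vec{v})|$. The first step I would take is to record why, for the minimal perturbations $\vec{\delta}_1,\dots,\vec{\delta}_q$ arising from distinct trees of a large-spread ensemble, the condition $\vec{\delta}_i \cdot \vec{\delta}_j = 0$ coincides with $\supp(\vec{\delta}_i) \cap \supp(\vec{\delta}_j) = \emptyset$: each $\vec{\delta}_i$ perturbs only features whose threshold in $t_i$ can be crossed within budget $k$, and the large-spread condition forbids the same feature from being crossable in two different trees -- this is exactly the orthogonality argument sketched above Definition~\ref{def:spread} and proved in Appendix~\ref{sec:proofs}. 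I would state this as a one-line remark rather than reprove it, since it is content already established.

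Once the hypothesis is read as pairwise disjoint supports, the core observation is that no cancellation can occur: for every coordinate $\ell$ at most one of $(\vec{\delta}_1)_\ell,\dots,(\vec{\delta}_q)_\ell$ is nonzero, hence $\big(\sum_{i=1}^q \vec{\delta}_i\big)_\ell \neq 0$ if and only if exactly one summand is nonzero at $\ell$. Therefore
\[
\supp\Big(\sum_{i=1}^q \vec{\delta}_i\Big) = \bigcup_{i=1}^q \supp(\vec{\delta}_i),
\]
and the union on the right is disjoint. Taking cardinalities and using additivity of $|\cdot|$ over disjoint unions yields $||\sum_{i=1}^q \vec{\delta}_i||_0 = \sum_{i=1}^q |\supp(\vec{\delta}_i)| = \sum_{i=1}^q ||\vec{\delta}_i||_0$. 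If a fully formal write-up is preferred, the same conclusion follows by induction on $q$: the case $q=1$ is trivial, and for the step one uses $\supp(\sum_{i=1}^{q}\vec{\delta}_i) \subseteq \bigcup_{i=1}^{q}\supp(\vec{\delta}_i)$, which is disjoint from $\supp(\vec{\delta}_{q+1})$, together with the two-vector identity $||\vec{a}+\vec{b}||_0 = ||\vec{a}||_0 + ||\vec{b}||_0$ whenever $\supp(\vec{a}) \cap \supp(\vec{b}) = \emptyset$.

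The argument is short, so there is no deep obstacle; the one point that genuinely needs care is the interpretation of the hypothesis. For $p \geq 1$ plain inner-product orthogonality would already suffice via the Pythagorean identity, but for $p=0$ it does not -- for instance $(1,1)$ and $(1,-1)$ are orthogonal yet their sum has strictly smaller support than either -- so the proof must really invoke disjointness of supports, which is why the large-spread structure of the ensemble (and not mere orthogonality of arbitrary vectors) is what makes the fact true in this context. I would therefore make sure the statement of the fact, or its proof, flags that the $\vec{\delta}_i$ are the tree-wise minimal perturbations under a large-spread ensemble, so that the appeal to disjoint supports is justified.
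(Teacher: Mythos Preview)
The paper does not actually prove this fact---it declares the proofs of Facts~\ref{fact:pairwise-orthogonality-0} and~\ref{fact:pairwise-orthogonality-inf} ``immediate'' and only writes out the argument for Fact~\ref{fact:pairwise-orthogonality}. Your argument is correct and, in fact, more careful than the paper: you rightly observe that for the $L_0$ pseudo-norm the bare hypothesis $\vec{\delta}_i \cdot \vec{\delta}_j = 0$ is \emph{not} sufficient on its own (your counterexample $(1,1)$ and $(1,-1)$ demonstrates this cleanly), and that what is really needed is disjointness of supports. The paper does establish disjoint supports separately, in Lemma~\ref{lem:spread}, and it is this stronger property---not mere inner-product orthogonality---that makes the identity hold in the intended application. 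The paper silently conflates the two notions in the statement of the fact, so your insistence that the proof flag the large-spread context (or equivalently rephrase the hypothesis as $\supp(\vec{\delta}_i)\cap\supp(\vec{\delta}_j)=\varnothing$) is a genuine clarification the paper glosses over.
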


\begin{fact}\label{fact:pairwise-orthogonality-inf}
$|| \sum_{i=1}^q \vec{\delta}_i ||_\infty = \underset{1 \leq i \leq q}{\max} \{|| \vec{\delta}_i ||_\infty\}$,
if $\vec{\delta}_i \cdot \vec{\delta}_j = 0$, $\forall (i,j)$.
\end{fact}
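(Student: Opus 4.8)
The plan is to reduce the statement to the one structural property that makes it true: in a large-spread ensemble the minimal perturbations $\{\vec{\delta}_i\}_i$ enabling attacks against distinct trees act on \emph{pairwise disjoint sets of features}. This is precisely what the hypothesis $\vec{\delta}_i \cdot \vec{\delta}_j = 0$ encodes in our setting — it is the consequence of the large-spread condition formalized and proved in Appendix~\ref{sec:proofs}: a feature can be corrupted by at most $k$, whereas its thresholds in two different trees are more than $2k$ apart, so a single attack can never move a feature value past a threshold in two different trees simultaneously. Hence the first step is simply to record that $\supp(\vec{\delta}_i) \cap \supp(\vec{\delta}_j) = \emptyset$ for all $i \neq j$, so that every coordinate $c \in \{1,\dots,d\}$ lies in the support of at most one $\vec{\delta}_i$.

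Given this, the proof is a coordinate-wise bookkeeping argument. Fix a coordinate $c$. Since at most one of the $\vec{\delta}_i$ is nonzero at $c$, the sum $\sum_{i=1}^q (\vec{\delta}_i)_c$ has at most one nonzero term, so $\big|\sum_{i=1}^q (\vec{\delta}_i)_c\big| = \max_{1 \leq i \leq q} |(\vec{\delta}_i)_c|$ (both sides being $0$ when $c$ lies outside every support). Taking the maximum over $c$ and swapping the two maxima then yields
\begin{equation*}
\Big\| \sum_{i=1}^q \vec{\delta}_i \Big\|_\infty = \max_{1 \leq c \leq d} \Big| \sum_{i=1}^q (\vec{\delta}_i)_c \Big| = \max_{1 \leq c \leq d}\, \max_{1 \leq i \leq q} |(\vec{\delta}_i)_c| = \max_{1 \leq i \leq q}\, \max_{1 \leq c \leq d} |(\vec{\delta}_i)_c| = \max_{1 \leq i \leq q} \|\vec{\delta}_i\|_\infty ,
\end{equation*}
which is the claim; the degenerate case in which all $\vec{\delta}_i$ are the zero vector makes both sides equal to $0$ and is covered by the same formula. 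The identical disjoint-support argument, counting the number of nonzero coordinates of the sum as the sum of the numbers of nonzero coordinates of the summands, proves Fact~\ref{fact:pairwise-orthogonality-0}.

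The only delicate point — and the one I would flag explicitly in the write-up — is the first step. In an arbitrary inner-product space, orthogonality of two vectors does \emph{not} imply that their supports are disjoint (e.g. $\langle 1,1 \rangle \perp \langle 1,-1 \rangle$), and for such vectors both Facts fail. So the proof genuinely leans on the stronger, large-spread–specific statement that attacks against different trees can only touch disjoint features; the appeal to Appendix~\ref{sec:proofs} (rather than to plain orthogonality) is what carries the argument. Once disjointness of supports is available, everything else is elementary and purely combinatorial over the $d$ coordinates, so I expect no further obstacle.
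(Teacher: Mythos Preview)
Your argument is correct. The paper itself declares Facts~\ref{fact:pairwise-orthogonality-0} and~\ref{fact:pairwise-orthogonality-inf} ``immediate'' and gives no proof, so there is nothing to compare at the level of technique; your coordinatewise max-swap is exactly the one-line computation the authors have in mind. Your flagged subtlety is also on point: the hypothesis $\vec{\delta}_i \cdot \vec{\delta}_j = 0$, read literally as the standard inner product vanishing, would \emph{not} suffice (your $\langle 1,1\rangle$, $\langle 1,-1\rangle$ counterexample breaks both Facts). The paper's own proof of Fact~\ref{fact:pairwise-orthogonality} reveals the intended reading --- there it invokes ``$\delta_{1,i}\cdot\delta_{2,i}=0$ for any $1\leq i\leq d$'', i.e., the componentwise product vanishes, which is precisely disjointness of supports. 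So your appeal to the large-spread condition (via Appendix~\ref{sec:proofs}) to secure $\supp(\vec{\delta}_i)\cap\supp(\vec{\delta}_j)=\varnothing$ matches the paper's actual intent, and once that is in hand your chain of equalities is immediate.
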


\begin{fact}\label{fact:pairwise-orthogonality}
$|| \sum_{i=1}^q \vec{\delta}_i ||_p = ( \sum_{i=1}^q || \vec{\delta}_i ||_p^p )^{1/p} $,
if $\vec{\delta}_i \cdot \vec{\delta}_j = 0$, $\forall (i,j)$.
\end{fact}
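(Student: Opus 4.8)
The plan is to derive all three identities from one structural fact about the vectors in the hypothesis: that $\vec{\delta}_1,\dots,\vec{\delta}_q$ have \emph{pairwise disjoint supports}, not merely that they are pairwise orthogonal. In the setting where these facts are applied, the $\vec{\delta}_i$ are the minimal perturbations $\var{dist}(\vec{x},H_i)$ of Equation~\eqref{eq:lambda} for leaves of distinct trees of a large-spread ensemble; since under an $A_{p,k}$-attack no feature value can cross a threshold in more than one tree, the set $S_i=\{\ell : (\delta_i)_\ell \neq 0\}$ of features actually modified by $\vec{\delta}_i$ is disjoint from $S_j$ whenever $i\neq j$ (this is precisely the disjointness claim discussed above and formalized in Appendix~\ref{sec:proofs}). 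I would therefore open the proof by restating the hypothesis as ``$S_i \cap S_j = \emptyset$ for $i\neq j$'', and flag that this is strictly what is needed: mere orthogonality does not suffice for Facts~\ref{fact:pairwise-orthogonality-0} and~\ref{fact:pairwise-orthogonality-inf}, nor for Fact~\ref{fact:pairwise-orthogonality} when $p\neq 2$ (e.g.\ $(1,1)$ and $(1,-1)$ are orthogonal in $\R^2$ but their sum $(2,0)$ violates all three identities).

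Given disjoint supports, set $\vec{\delta}=\sum_{i=1}^q \vec{\delta}_i$. The one elementary observation to record is that for every coordinate $\ell$ at most one index $i$ has $(\delta_i)_\ell \neq 0$, so $\delta_\ell = (\delta_i)_\ell$ when $\ell\in S_i$ and $\delta_\ell = 0$ otherwise; consequently $\supp(\vec{\delta})$ is the disjoint union $\bigcup_{i=1}^q S_i$, and coordinatewise $|\delta_\ell|$ equals $|(\delta_i)_\ell|$ for the unique $i$ with $\ell\in S_i$ (and $0$ elsewhere). From here each fact is immediate. For Fact~\ref{fact:pairwise-orthogonality-0}, $\|\vec{\delta}\|_0 = |\supp(\vec{\delta})| = \sum_{i=1}^q |S_i| = \sum_{i=1}^q \|\vec{\delta}_i\|_0$. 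For Fact~\ref{fact:pairwise-orthogonality-inf}, $\|\vec{\delta}\|_\infty = \max_\ell |\delta_\ell| = \max_{1\leq i\leq q}\max_{\ell\in S_i}|(\delta_i)_\ell| = \max_{1\leq i\leq q}\|\vec{\delta}_i\|_\infty$, reading an empty maximum as $0$ to cover the degenerate case in which every $\vec{\delta}_i$ is the zero vector. For Fact~\ref{fact:pairwise-orthogonality} with $1\leq p<\infty$, $\|\vec{\delta}\|_p^p = \sum_\ell |\delta_\ell|^p = \sum_{i=1}^q\sum_{\ell\in S_i}|(\delta_i)_\ell|^p = \sum_{i=1}^q \|\vec{\delta}_i\|_p^p$, and taking $p$-th roots yields the claim.

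The computations themselves are routine; the only point requiring care — and the one I would call out explicitly in the write-up — is the hypothesis. One must ensure the argument invokes disjointness of the supports of the $\vec{\delta}_i$ (which the large-spread condition guarantees for the perturbations these facts are applied to in Section~\ref{sec:main-algo}) rather than the weaker dot-product condition literally stated. If one prefers to keep the statements phrased via orthogonality, the cleanest fix is to strengthen the hypothesis of each Fact to ``pairwise disjoint supports'' (equivalently, the $\vec{\delta}_i$ arise from Equation~\eqref{eq:lambda} in a large-spread ensemble), after which all the subsequent reasoning goes through unchanged.
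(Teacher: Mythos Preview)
Your argument is correct and, in fact, cleaner than the paper's. The paper proves Fact~\ref{fact:pairwise-orthogonality} for $q=2$ by expanding $|\delta_{1,i}+\delta_{2,i}|^p$ via the binomial theorem and then killing all cross terms using $\delta_{1,i}\cdot\delta_{2,i}=0$ for every coordinate $i$; you instead observe directly that disjoint supports force each coordinate of the sum to equal the single nonzero contribution, so $\sum_\ell|\delta_\ell|^p$ splits into blocks indexed by the $S_i$ without any expansion. Both routes ultimately rest on the same coordinatewise condition, but yours avoids the detour through binomial (and multinomial) coefficients and handles all $q$ at once rather than reducing to $q=2$. You are also right to flag the hypothesis: the paper's statement reads ``$\vec{\delta}_i\cdot\vec{\delta}_j=0$'' but its own proof silently uses the stronger coordinatewise vanishing $\delta_{1,i}\,\delta_{2,i}=0$ for all $i$, i.e., disjoint supports, which is exactly what Lemma~\ref{lem:spread} in Appendix~\ref{sec:proofs} actually supplies. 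Your counterexample $(1,1),(1,-1)$ makes the distinction concrete and would be a worthwhile addition.
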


\iffull

Note that the proof of Fact~\ref{fact:pairwise-orthogonality-0} and Fact~\ref{fact:pairwise-orthogonality-inf}
is immediate, hence we just prove Fact~\ref{fact:pairwise-orthogonality} for the $L_p$-norm, with $p \in \mathbb{N}$.

\begin{proof}
We show the equivalence for $q=2$; the case for $q>2$ is a simple generalization.
By definition of $L_p$-norm, $|| \vec{\delta}_1 + \vec{\delta}_2 ||_p = ( \sum_{i=1}^d | \delta_{1,i} + \delta_{2,i} |^p )^{1/p}$.
The quantity $| \delta_{1,i} + \delta_{2,i} |^p$ is $\sum_{j=0}^p {p \choose j} | \delta_{1,i}^{p-j} \cdot  \delta_{2,i}^j|$ for the binomial theorem. Note that the latter sum can be rewritten as
$|\delta_{1,i}|^p + |\delta_{2,i}|^p + \sum_{j=1}^{p-1} {p \choose j} | \delta_{1,i}^{p-j} \cdot 
 \delta_{2,i}^j|$, where
$\sum_{j=1}^{p-1} {p \choose j} | \delta_{1,i}^{p-j} \delta_{2,i}^j| = 0$
because $\delta_{1,i} \cdot \delta_{2,i} = 0$ for any $1 \leq i \leq d$.
Therefore, $|| \vec{\delta}_1 + \vec{\delta}_2 ||_p = ( \sum_{i=1}^d |\delta_{1,i}|^p + \sum_{i=1}^d |\delta_{2,i}|^p )^{1/p} = ( ||\vec{\delta}_1||_p^p + ||\vec{\delta}_2||_p^p )^{1/p}$.

The generalization to an arbitrary number of vectors $q>2$ involves a multinomial theorem instead of a binomial theorem.
\end{proof}

\fi

We introduce the following operator to have a suitable way
of referring to the result of the three facts above:
\begin{equation}\label{eq:operator_oplus}
\bigoplus_{i=0}^q ||\vec{\delta_i}||_p =
\begin{cases}
\sum_{i=1}^q || \vec{\delta}_i ||_0, & \textnormal{if } p = 0 \\
\underset{1 \leq i \leq q}{\max} \{|| \vec{\delta}_i ||_\infty\} & \textnormal{if } p = \infty \\
( \sum_{i=1}^q || \vec{\delta}_i ||_p^p )^{1/p} & \textnormal{if } p \in \mathbb{N}.
\end{cases}
\end{equation}

Fact~\ref{fact:pairwise-orthogonality-0},~\ref{fact:pairwise-orthogonality-inf}, and~\ref{fact:pairwise-orthogonality} imply that we do not actually need to explicitly compute an adversarial perturbation if we just want its $L_p$-norm, which is exactly our case because we just need to check whether such norm does not exceed $k$. Since any adversarial perturbation against a large-spread ensemble results from the sum of pairwise orthogonal vectors, we can use Eq.~\ref{eq:operator_oplus} to compute the norm directly from the norms of the orthogonal vectors, i.e., the verification algorithm can operate on scalars rather than vectors, thus reducing its complexity by a $d$ factor.

In light of these considerations, we now revisit the tree traversal
from Section~\ref{sec:one-tree} to show that we can compute for each leaf of the tree
just a scalar $\Delta = ||\vec{\delta}||_p$, where 
$\vec{\delta} = \var{dist}(\vec{x},H)$ and $H$ is the hyper-rectangle which would
normally annotate the leaf.
Similarly to the linear-time tree visit described in~\cite{ChenZS0BH19} for the $L_\infty$-norm,
the idea is to maintain one \emph{global} hyper-rectangle during the visit
instead of one hyper-rectangle \emph{per node}.
Ultimately, this reduces the time complexity from $O(nd)$ to the optimal $O(n)$,
since the hyper-rectangle is not copied from parent to children.
The optimized variant of the algorithm is described in the {\sc Reachable} procedure of Algorithm \ref{alg:tree-annotation}.
%, and essentially coincides with the one described in \cite{ChenZS0BH19} for the $L_\infty$-norm, except that our algorithm can handle an arbitrary $L_p$-norm.
This $O(n)$-time algorithm for arbitrary $L_p$-norm is, in fact, a combination of the $O(n)$-time algorithm of~\cite{ChenZS0BH19} (which  works only for the $L_\infty$-norm) with the generalization to any $L_p$-norm of~\cite{WangZCBH20} (which however runs in $O(nd)$ time).

\begin{algorithm}[t]
\caption{Optimized robustness verification algorithm for decision trees.}
\label{alg:tree-annotation}
\begin{algorithmic}[1]

\Function{Reachable}{$t,p,k,\vec{x},y$}
    \State{$H \gets (-\infty,+\infty]^d$}
    \State{$\Delta \gets 0$}
    \State{\Return $\Call{Traverse}{t,p,k,\vec{x},y,H,\Delta}$}
\EndFunction

\State{}

\Function{Traverse}{$t,p,k,\vec{x},y,H,\Delta$}
    \If{$t = \lambda(y')$}
        \If{$\Delta \leq k$ and $y' \neq y$}
            \State{\Return{$\{\Delta\}$}}
        \EndIf
        \State{\Return{$\emptyset$}}
    \EndIf

    \State{Let $t=\sigma(f,v,t_l,t_r)$}
    \State{$D \gets \emptyset$}
    \State{$H_f^* \gets H_f$} \Comment{copy}
    % \State{$\Delta^* \gets \Delta$} \Comment{copy}
    \State{$\delta_f = \var{dist}(\vec{x},H_f)$} \Comment{Eq.~\ref{eq:lambda}}
    \State{$H_f \gets H_f^* \cap (-\infty,v]$} \Comment{Eq.~\ref{eq:update-left}} % \Comment{Hyper-rectangle of left child (just $H_f$ changes)}
    \State{$\delta'_f = \var{dist}(\vec{x},H_f)$} \Comment{Eq.~\ref{eq:lambda}}
    \State{$\Delta_l \gets \Call{Update-Norm}{p,\Delta,\delta_f,\delta'_f}$} \Comment{Eq.~\ref{eq:update-delta}} % \Comment{norm of distance of left child}
    \State{$D \gets D \cup \Call{Traverse}{t_l,p,k,\vec{x},y,H,\Delta_l}$} % \Comment{Recursive call on left child}
    \State{$H_f \gets H_f^* \cap (v,+\infty)$} \Comment{Eq.~\ref{eq:update-right}} % \Comment{Hyper-rectangle of right child (just $H_f$ changes)}
    \State{$\delta'_f = \var{dist}(\vec{x},H_f)$} \Comment{Eq.~\ref{eq:lambda}}
    \State{$\Delta_r \gets \Call{Update-Norm}{p,\Delta,\delta_f,\delta'_f}$} \Comment{Eq.~\ref{eq:update-delta}} % \Comment{norm of distance of right child}
    \State{$D \gets D \cup \Call{Traverse}{t_r,p,k,\vec{x},y,H,\Delta_r}$} % \Comment{Recursive call on right child}
    \State{$H_f \gets H_f^*$} \Comment{Restore hyper-rectangle}
    % \State{$\Delta \gets \Delta^*$} \Comment{Restore norm}
    \State{\Return $D$}
\EndFunction

\State{}

\Function{Robust-Tree}{$t,p,k,\vec{x},y$}
    \If{$t(\vec{x}) = y$} 
        \State{$D \gets \Call{Reachable}{t,p,k,\vec{x},y}$}
        \If{$D = \emptyset$}
            \State{\Return{True}}
        \EndIf
    \EndIf
    \State{\Return{False}}
\EndFunction
\end{algorithmic}
\end{algorithm}

We implement $H$ as an initially-empty map (e.g., using a hash table): $H_i \in \mathbb R^2$ is the entry associated to the $i$-th feature. If the map does not contain an entry for the $i$-th feature, then it is implicitly assumed $H_i= (-\infty,+\infty]$.
Let $H = (l_1,r_1] \times \ldots \times (l_d,r_d]$ be the 
state of the hyper-rectangle when
visiting
node $t = \sigma(f,v,t_1,t_2)$.
When moving to a child $t_j$ of $t$, with $j\in\{1,2\}$, note that the distance vector $\vec{\delta}$ changes only in its $f$-th component $\delta_f$, since only the $f$-th component $(l_f,r_f]$ of the hyper-rectangle $H$ changes.
We can therefore update $\Delta$ efficiently as follows.
Let $\Delta'$ and $H' = (l'_1,r'_1] \times \ldots \times (l'_d,r'_d]$ be the perturbation distance and hyper-rectangle associated to any of $t$'s children. 
%For any $r \in \mathbb{R}$, let $||r||_0 = 0$ if $r=0$ and $||r||_0 = 1$ if $r \neq 0$.
Let $\delta'_f$ be the quantity defined in Eq.~\ref{eq:lambda}.
We extend the linear-time algorithm of \cite{ChenZS0BH19} to an arbitrary $L_p$-norm by noting that the following
is implied by Facts~\ref{fact:pairwise-orthogonality-0}, \ref{fact:pairwise-orthogonality-inf}, and \ref{fact:pairwise-orthogonality}:
\begin{equation}\label{eq:update-delta}\small
\Call{Update-Norm}{p,\Delta,\delta_f,\delta'_f} = 
\begin{cases}
\Delta - ||\delta_f||_0 + ||\delta'_f||_0 & \textnormal{if } p = 0 \\
\max\big(\Delta,  |\delta'_f|\big) & \textnormal{if } p = \infty \\
\big(\Delta^p - |\delta_f|^p + |\delta'_f|^p\big)^{1/p} & \textnormal{if } p \in \mathbb{N}.
\end{cases}
\end{equation}

By definition, it is clear that {\sc Update-Norm} is computed in $O(1)$ time.
The correctness of the case $p=\infty$ (as also discussed in \cite{ChenZS0BH19})
follows from the fact that it must be $|\delta'_f| \geq |\delta_f|$, since $(l'_i,r'_i] \subseteq (l_i,r_i]$.
In conclusion, we spend $O(1)$ time per node and the time complexity of the whole visit is therefore $O(n)$.
Hence, the set $D$ in Eq.~\ref{eq:L} is computed in $O(n)$ time rather than $O(nd)$ time
as we previously described in Section~\ref{sec:one-tree}.
This also lowers the time complexity of the robustness verification for decision trees
shown in the {\sc Robust-Tree} procedure of Algorithm~\ref{alg:tree-annotation} to just $O(n)$ rather than $O(nd)$. 
Since robustness verification for large-spread ensembles builds on the verification algorithm of
the individual trees therein, this optimization reduces the complexity of our final algorithm.

\subsubsection{Final Algorithm}
We conclude this section with Algorithm~\ref{alg:ensembles},
our robustness verification algorithm for large-spread ensembles, whose
correctness is stated in the following theorem and proved in \iffull Appendix~\ref{sec:proofs}. \else the full version~\cite{abs-2305-03626}. \fi
It follows the description in Section~\ref{sec:main-algo}, revised to operate with
norms (scalars) rather than vectors.
 
\begin{restatable}{theorem}{ensembles}
\label{thm:ensembles}
Let $\vec{x}$ be an instance with label $y$. A tree ensemble $T$ such that $\spread(T) > 2k$ is robust on $\vec{x}$ against the attacker $A_{p,k}$ iff $\textsc{Robust}(T,p,k,\vec{x},y)$ returns True.
\end{restatable}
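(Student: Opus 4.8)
The plan is to prove the equivalence by contraposition, i.e.\ to show that $T$ is \emph{not} robust on $\vec{x}$ against $A_{p,k}$ if and only if $\textsc{Robust}(T,p,k,\vec{x},y)$ returns False. First I would record the reduction induced by majority voting with odd $m$: robustness requires both $T(\vec{x})=y$ and stability, and since $T(\vec{z})\neq y$ holds exactly when at least $\frac{m-1}{2}+1$ trees of $T$ predict a label $\neq y$ on $\vec{z}$, the ensemble is non-robust on $\vec{x}$ iff either $T(\vec{x})\neq y$ (detected by the guard of Algorithm~\ref{alg:ensembles}, exactly as in \textsc{Robust-Tree}) or there exists some $\vec{z}\in A_{p,k}(\vec{x})$ misclassified by at least $\frac{m-1}{2}+1$ trees. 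It therefore suffices to analyse the latter case under the assumption $T(\vec{x})=y$.

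Next I would import the correctness of the per-tree procedure from Section~\ref{sec:one-tree}, together with the optimization argument of Section~\ref{sec:main-algo}: for each tree $t_i\in T$, $\textsc{Reachable}(t_i,p,k,\vec{x},y)$ returns the set $D_i$ of the values $\|\vec{\delta}\|_p\le k$ where $\vec{\delta}=\var{dist}(\vec{x},H)$ ranges over the hyper-rectangles $H$ annotating the leaves of $t_i$ labelled $\neq y$; Facts~\ref{fact:pairwise-orthogonality-0}--\ref{fact:pairwise-orthogonality} and Eq.~\ref{eq:update-delta} guarantee that the scalar $\Delta$ maintained by \textsc{Traverse} equals $\|\vec{\delta}\|_p$ at every leaf. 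Let $T'=\{t_i : D_i\neq\emptyset\}$ and, for $t_i\in T'$, let $d_i=\min D_i$, witnessed by a minimal perturbation $\vec{\mu}_i$ (with $\|\vec{\mu}_i\|_p=d_i$) reaching a wrong leaf of $t_i$; Algorithm~\ref{alg:ensembles} returns False iff $|T'|\ge\frac{m-1}{2}+1$ and the $\bigoplus$ of the $\frac{m-1}{2}+1$ smallest $d_i$ is at most $k$.

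The core consists of the two implications, both resting on the large-spread hypothesis via the lemma proved in Appendix~\ref{sec:proofs}: when $\spread(T)>2k$, for any $\vec{z}\in A_{p,k}(\vec{x})$ a single feature value can cross a threshold in at most one tree, whence (i) minimal perturbations attacking distinct trees are pairwise orthogonal, and (ii) adding the minimal perturbations of a set of trees to $\vec{x}$ leaves the prediction path of each of those trees unchanged. For the "if'' direction ($\textsc{Robust}$ returns False $\Rightarrow$ $T$ non-robust) I would take the $\frac{m-1}{2}+1$ trees realizing the smallest $d_i$, set $\vec{z}=\vec{x}+\sum_i\vec{\mu}_i$, use (i) with Facts~\ref{fact:pairwise-orthogonality-0}--\ref{fact:pairwise-orthogonality} to get $\|\vec{z}-\vec{x}\|_p=\bigoplus_i\|\vec{\mu}_i\|_p=\bigoplus_i d_i\le k$ so that $\vec{z}\in A_{p,k}(\vec{x})$, and use (ii) to conclude each of those trees still reaches its wrong leaf on $\vec{z}$, so more than half of $T$ misclassifies $\vec{z}$. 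For the "only if'' direction (contrapositive: some $\vec{z}\in A_{p,k}(\vec{x})$ fooling $\ge\frac{m-1}{2}+1$ trees $\Rightarrow$ $\textsc{Robust}$ returns False) I would note that for each such tree the minimal perturbation $\vec{\mu}_i$ to the wrong leaf reached by $\vec{z}$ satisfies $|(\vec{\mu}_i)_f|\le|z_f-x_f|$ componentwise (up to the $\varepsilon$ offset of Eq.~\ref{eq:lambda}), hence $\|\vec{\mu}_i\|_p\le k$ and $t_i\in T'$; then, using (i) again, $\sum_i\vec{\mu}_i$ is componentwise bounded in absolute value by $\vec{z}-\vec{x}$, so $\bigoplus_i\|\vec{\mu}_i\|_p=\|\sum_i\vec{\mu}_i\|_p\le\|\vec{z}-\vec{x}\|_p\le k$; finally monotonicity of $\bigoplus$, $d_i\le\|\vec{\mu}_i\|_p$, and the fact that the $\frac{m-1}{2}+1$ globally smallest $d_i$ over $T'$ are dominated termwise by any selection of $\frac{m-1}{2}+1$ of these trees yield that the algorithm's test passes.

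The step I expect to be the main obstacle is claim (ii) underlying the "if'' direction: orthogonality of the per-tree attacks is not by itself enough — one must also show that adding $\vec{\mu}_j$ (which crosses thresholds of $t_j$ on its support features) to $\vec{x}$ never drives a feature value across a threshold of another tree $t_i$ on the path to $t_i$'s wrong leaf. This is precisely where $\spread(T)>2k$ enters: a feature moved by at most $k$ can cross a threshold of $t_j$ only if that threshold lies within $k$ of $x_f$, while a threshold of $t_i$ on the same feature is more than $2k$ away, hence unreachable — and making this rigorous requires some care with the $\varepsilon$ offsets and the half-open intervals of Eq.~\ref{eq:lambda}--\ref{eq:update-right}. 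By contrast, the $p=0$ and $p=\infty$ cases are routine, since all the norm bookkeeping is encapsulated in the $\bigoplus$ operator and Facts~\ref{fact:pairwise-orthogonality-0}--\ref{fact:pairwise-orthogonality}.
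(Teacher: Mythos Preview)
Your proposal is correct and follows essentially the paper's approach: orthogonality of per-tree minimal attacks (your claim (i), the paper's Lemma~\ref{lem:spread}) together with non-interference when summing them (your claim (ii), stated informally in Section~\ref{sec:many-tree} and used implicitly in Lemma~\ref{lem:completeness}) drive both directions, and you correctly identify (ii) as the crux. The only minor variation is in the ``only if'' direction, where you bound each $\vec{\mu}_i$ componentwise by $\vec{z}-\vec{x}$ and invoke orthogonality directly, whereas the paper first decomposes $\vec{z}-\vec{x}$ into orthogonal per-tree pieces via an explicit tree-traversal construction (Lemma~\ref{lem:spread2}) and then compares each piece to the corresponding optimal perturbation (Lemma~\ref{lem:good-spread}); your shortcut is valid and slightly more direct.
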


Observe that the complexity of Algorithm~\ref{alg:ensembles} is $O(N + m \log m)$,
where $N$ and $m$ are, respectively, the total number of nodes and trees in the ensemble.
Verifying the robustness of the $m$ individual trees in the ensemble and updating vector $\vec{\Delta}$
takes $O(N)$ time thanks to the linear-time Algorithm~\ref{alg:tree-annotation}.
Afterwards, the algorithm sorts $\vec{\Delta}$ in $O(m\log m)$ time
and computes the minimum norm required to attack at least $\frac{m-1}{2}+1$ trees in $O(m)$ time.

\begin{algorithm}[t]
\caption{Robustness verification algorithm for large-spread tree ensembles.}
\label{alg:ensembles}
\begin{algorithmic}[1]
\Function{Robust}{$T,p,k,\vec{x},y$}
    \If{$T(\vec{x}) = y$}
        \State{\Return{$\Call{Stable}{T,p,k,\vec{x},y}$}}
    \EndIf
    \State{\Return{False}}
\EndFunction

\State{}

\Function{Stable}{$T, p, k, \vec{x},y$}
    \State{$\var{num\_unstable\_trees} \gets 0$}
    \State{$\vec{\Delta} \gets [+\infty,\ldots,+\infty]$} \Comment{Vector of size $m$}
    \For{$i \gets 1$ to $m$}
        \State{$D \gets \Call{Reachable}{t_i,p,k,\vec{x},y}$}
        \If{$D \neq \varnothing$}
            \State{$\Delta_i \gets \min D$}
            \State{$\var{num\_unstable\_trees} \gets \var{num\_unstable\_trees} + 1$}
        \EndIf
    \EndFor
    \If{$\var{num\_unstable\_trees} \geq (m-1)/2+1$}
        \State{Sort $\vec{\Delta}$ in non-decreasing order}
        \State{$\Delta = \bigoplus_{i=0}^{(m-1)/2+1} \Delta_i$} \Comment{Eq.~\ref{eq:operator_oplus}}
        \If{$\Delta \leq k$}
            \State{\Return{False}}
        \EndIf
    \EndIf
    \State{\Return{True}}
\EndFunction
\end{algorithmic}
\end{algorithm}
\section{Training Large-Spread Ensembles}
\label{sec:training}
We have described an efficient robustness verification algorithm for large-spread ensembles
in Section~\ref{sec:verification-algorithm}.
However, traditional decision tree ensembles trained using, e.g., sklearn, do not necessarily enjoy the large-spread condition. Here we discuss possible ideas for training algorithms designed to enforce the large-spread condition and we present a specific solution from the design space.

\subsection{Design Space}
While reasoning about the design of a training algorithm for large-spread ensembles, we considered different approaches falling in three broad classes:

\begin{enumerate}
    \item \emph{Custom ensemble learning algorithms}. Develop new learning algorithms in the spirit of Random Forest~\cite{Breiman01} or Gradient Boosting~\cite{KeMFWCMYL17}, designed to constrain the ensemble shape so as to satisfy the large-spread condition. For example, one might train each tree while taking into account the thresholds already present in the previously trained trees, to then remove the training data which might lead to learning thresholds which are too close to the existing ones. Indeed, recall that thresholds are learned from the training data, hence all the possible thresholds are known a priori.

    \item \emph{Training set partitioning}. Pre-compute a partition of the training data so that each decision tree in the ensemble is trained over highly separated instances, thus leading to an ensemble of trees satisfying the large-spread condition. The simplest instantiation of this idea would be partitioning the set of features and train different trees over different subsets of features, so that the large-spread condition is trivially satisfied, but more fine-grained strategies based on instance partitioning would also be feasible.

    \item \emph{Pruning techniques}. Train a standard decision tree ensemble, e.g., using the Random Forest algorithm, and prune it so as to keep only trees satisfying the large-spread condition. A variant of this technique might perform different types of mutations of the available trees to improve the effectiveness of pruning.
\end{enumerate}

Although we consider all these routes to be viable and worth investigating, in this work we decide to prioritize the third class of solutions. Compared to the first class, pruning leads to a range of simple and intuitive solutions, which take advantage of state-of-the-art implementations of existing training algorithms, e.g., those available in sklearn. This simplifies the deployment of an efficient and robust implementation. Moreover, pruning does not necessarily require a massive amount of training data and features, as needed for an effective training set partitioning (second class). In the last part of this section, we also discuss how to leverage feature partitioning to improve the effectiveness of our pruning-based learning algorithm in those settings where a high number of features is available (\emph{hierarchical training}).

\subsection{Proposed Training Algorithm}
\label{sec: training-algorithm}
Here we present our training algorithm. We motivate its design, describe how it works and discuss a few relevant aspects of the proposed solution.

\subsubsection{Preliminaries}
Our problem of interest can be formulated as follows: given a decision tree ensemble $T$ and a size $0 < s \leq |T|$, determine whether there exists an ensemble $T' \subseteq T$ such that $T'$ is large-spread and $|T'| = s$. We refer to this problem as the \emph{large-spread subset} problem for decision tree ensembles. Unfortunately, we can prove that this problem is NP-hard. The proof is provided in \iffull Appendix~\ref{sec:proofs2}. \else the full version~\cite{abs-2305-03626}. \fi

\begin{restatable}{theorem}{nphard}
\label{thm:np-hard}
The large-spread subset problem is NP-hard.
\end{restatable}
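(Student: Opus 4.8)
The plan is to reduce from a well-known NP-hard problem in which one must select a large pairwise-compatible subset of objects, subject to pairwise constraints — the natural candidate is \textsc{Independent Set} on graphs (or equivalently \textsc{Clique}). The intuition is that the large-spread condition $\spread(T') > 2k$ is a \emph{pairwise} condition on trees: $T'$ is large-spread iff for every pair $t, t' \in T'$ and every feature $f$, the thresholds on $f$ occurring in $t$ and $t'$ are more than $2k$ apart. So given a graph $G = (V,E)$, I would build one tree $t_u$ per vertex $u \in V$, and design the thresholds so that two trees $t_u, t_{u'}$ are \emph{mutually large-spread} (i.e., $\{t_u,t_{u'}\}$ satisfies the spread condition) if and only if $\{u,u'\} \notin E$. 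Then a large-spread subset of size $s$ exists iff $G$ has an independent set of size $s$, which gives the reduction.

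The key construction step is realizing the "edge iff threshold collision" gadget using a single shared feature (say feature $1$), with $k$ fixed to any convenient value, e.g., $k = 1$ so that "too close" means distance $\le 2$. For each vertex $u$ I would assign a threshold value $v_u \in \R$; the trivial attempt "$v_u = v_{u'}$ close iff $\{u,u'\} \in E$" cannot be met with a single scalar per vertex because closeness of reals is not an arbitrary graph relation (it must be an interval graph / unit-interval structure). The fix is to give each tree \emph{multiple} internal nodes on feature $1$ — one "slot" per potential neighbor — so that $t_u$ carries, for each edge $\{u,u'\} \in E$, a threshold placed at a location that collides (distance $\le 2k$) with a matching threshold in $t_{u'}$, while all non-incident pairs of thresholds are spread far apart (distance $> 2k$). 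Concretely, reserve a well-separated "block" of the real line for each edge $e = \{u,u'\}$ of $G$: in block $e$, tree $t_u$ and tree $t_{u'}$ each get one threshold at (say) the block's center, so their distance is $0 \le 2k$, forcing $\{u,u'\}$ to be "incompatible"; trees $t_w$ with $w \notin e$ get no threshold in block $e$. Blocks are spaced more than $2k$ apart, so the only cross-tree threshold collisions are exactly the intended ones. One also has to discharge the degenerate pairs: for non-adjacent $u,u'$ there is simply no feature on which both have a threshold, so the min in Definition~\ref{def:spread} is over the empty set — I would either conventionally set this to $+\infty$ (so such pairs are vacuously large-spread), or, to be safe, add one extra feature on which every tree has a single threshold, all mutually $> 2k$ apart, so that every pair of trees shares at least one feature and the spread is finite; this does not affect the reduction. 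Each $t_u$ can be taken to be a path (a decision list) whose leaves are labeled arbitrarily, so the whole ensemble has polynomial size, and the reduction is clearly polynomial-time.

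The main obstacle — and the part that needs the most care — is the placement argument: I must verify that after laying out all $O(|E|)$ blocks, \emph{every} pair of thresholds $\sigma(1,v) \in t_u$, $\sigma(1,v') \in t_{u'}$ with $u \neq u'$ satisfies $|v - v'| > 2k$ exactly when $\{u,u'\} \notin E$, and $\le 2k$ when $\{u,u'\} \in E$. The "$\le 2k$" direction is immediate from the shared-block construction; the "$> 2k$" direction requires checking all the off-diagonal combinations: thresholds of $t_u$ and $t_{u'}$ that live in different blocks (distance $>2k$ by block spacing), and — a subtle point — thresholds that belong to the \emph{same} tree $t_u$ in different blocks do not matter for $\spread$ at all since Definition~\ref{def:spread} ranges only over $t \ne t'$. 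If I instead try to avoid multiple thresholds per tree and realize the graph as a unit-interval graph, the reduction would only establish hardness for interval-structured instances, which is weaker; the multi-threshold block construction above avoids that limitation and yields NP-hardness in full generality. Finally, I would note that the problem is in NP as well (a candidate $T'$ of size $s$ is checkable in polynomial time by computing $\spread(T')$), so the large-spread subset problem is in fact NP-complete, though the theorem statement only claims NP-hardness.
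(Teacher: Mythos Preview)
Your reduction is correct and yields the claimed NP-hardness. The paper's proof, however, encodes the graph quite differently: it reduces from \textsc{Clique}, introduces one \emph{feature} per edge of the complement graph $\overline{G}$, and lets tree $\pi_V(v)$ test exactly the features corresponding to $\overline{G}$-edges incident to $v$, all with the same threshold~$1$; with attacker $A_{0,0}$ the large-spread condition then degenerates to ``no two trees share a feature'', which holds iff the corresponding vertices form a clique in $G$. So the paper trades threshold arithmetic for feature disjointness, whereas you keep a \emph{single} feature and realize incompatibility via colliding thresholds in well-separated blocks, one block per edge of $G$. Both encodings are polynomial and both exploit that large-spread is a purely pairwise condition; the paper's version makes the combinatorics trivial (shared feature $=$ non-edge) and shows hardness even for $k=0$, while yours shows hardness already in dimension one with $k>0$, which is a different strengthening. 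One small slip in your write-up: for non-adjacent $u,u'$ both of positive degree, it is not true that ``there is no feature on which both have a threshold'' --- both live on feature~$1$ --- but your subsequent block-spacing argument handles this case correctly, so the slip is purely expository. Your closing remark that the problem is in NP (hence NP-complete) is also correct, though the paper stops at NP-hardness.
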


The theorem implies that it is computationally hard to train large-spread ensembles by pruning when the desired number of trees therein is enforced a priori, which is normally the case because the number of trees is a standard hyper-parameter of ensemble methods. One might argue that this negative result is not a showstopper, because training is performed just once and one might devise efficient heuristic approaches to approximate the large-spread subset problem, however preliminary experiments on public datasets suggest that any training approach which is \emph{purely} based on pruning is likely ineffective in practice. Indeed, we empirically observed on our datasets that traditional random forests trained using sklearn are not directly amenable for pruning, because any two trees in the ensemble already violate the large-spread condition when joined into an ensemble of size two. Our understanding of this phenomenon is that there exist some important features which are pervasively reused across different trees, which often learn the same thresholds, thus making the identification of a large-spread ensemble unfeasible. Our training algorithm thus integrates a greedy heuristic approach to pruning with a mutation operation, which perturbs thresholds so as to actively enforce the large-spread condition even when it would not be possible by pruning alone.

\subsubsection{Training Algorithm}
The proposed training algorithm takes as input a training set $\dtrain$, a number of trees $m$, a norm $p$ and a maximum perturbation $k$. \revise{In addition to the classic hyper-parameters of tree learning such as tree depth, the algorithm relies on a few specific hyper-parameters: a maximum number of iterations $MAX\_ITER \in \N$, a multiplicative factor $MULT \in \N$ and a real-valued interval $INTV \in \R \times \R$.} From a high level point of view, the algorithm operates by training a standard random forest $T$ including $\revise{MULT \cdot m}$ trees to then select a set of $m$ trees constituting a large-spread ensemble $T^*$. This is done by a combination of pruning and mutation of the trees in $T$. After picking a random tree of $T$ to begin with, the algorithm iteratively \writtenbyLC{tries to identify the other $m-1$ trees by means of a greedy approach. The candidate tree $t$ to be inserted in $T^*$ is always the tree in $T$ minimizing the number of \emph{feature overlaps} with $T^*$, i.e., the number of features violating the large-spread condition in $T^* \cup \{t\}$}. If the number of feature overlaps is greater than zero, the ensemble is fixed to enforce the large-spread condition by iteratively removing the overlaps. In particular, let $\sigma(f,v)$ and $\sigma(f,v')$ be two nodes from different trees such that $||v-v'||_p \leq 2k$. We sample a perturbation $\delta \in \revise{INTV}$, we subtract $\delta$ from $\min(v,v')$ and we sum $\delta$ to $\max(v,v')$ in the attempt to fix the overlap. Since this change might introduce new overlaps, we then iterate through the ensemble until all the overlaps have been fixed (i.e., the ensemble is large-spread) or the maximum number of iterations \revise{$MAX\_ITER$} have been reached. \writtenbyLC{If all the overlaps of $T^* \cup \{t\}$ have been fixed, i.e., the resulting tree-based ensemble is large-spread, then the extended large-spread ensemble becomes the new large-spread ensemble $T^*$, otherwise $T^*$ is not extended and the tree $t$ is discarded. Then the algorithm tries to extend $T^*$ with another tree in $T$, unless $T^*$ has reached the desired number of trees or all the trees in $T$ have been selected for extending the large-spread ensemble.} The pseudocode of the training algorithm is presented in Algorithm~\ref{alg:training}.

\begin{algorithm}[t]
\small
\caption{Training algorithm for large-spread ensembles.}
\label{alg:training}
\begin{algorithmic}[1]
\Require{Hyper-param. $MAX\_ITER \in \N, MULT \in \N, INTV \in \R \times \R$}
\Function{TrainLargeSpread}{$\dtrain,m,p,k$}
\State{$T \gets \Call{TrainRandomForest}{\dtrain,MULT \cdot m}$}  
\State{$t \gets \Call{SampleTree}{T}$}                  \Comment{Choose a random tree from $T$}
\State{$T \gets T \setminus \{t\}$}
\State{$T^* \gets \{t\}$}
\State{$i \gets 1$}
\While{$i < MULT \cdot m$ and $|T^*| < m$}
    \State{$i \gets i+1$}
    \State{$t \gets \Call{GetBestTree}{T,T^*,p,k}$}
    \State{$T \gets T \setminus \{t\}$}
    \State{$\overline{T^*} \gets T^* \cup \{t\}$}
    \State{$\overline{T^*} \gets \Call{FixForest}{\overline{T^*},p,k}$}
    \If{$\overline{T^*} \neq \bot$}                                   \Comment{{\sc FixForest} succeded}
        \State{$T^* \gets \overline{T^*}$}
    \EndIf
\EndWhile
\If{$|T^*| = m$} \Comment{{\sc TrainLargeSpread} succeded}
    \State{\Return{$T^*$}}
\Else
    \State{\Return{$\bot$}}
\EndIf
\EndFunction

\State{}

\Function{GetBestTree}{$T,T^*,p,k$}
\State{$t^* \gets \bot$}
\State{$min\_f\_overlaps \gets +\infty$}
\For{$t \in T$}
    \State{$overlaps \gets \{\sigma(f,v) \in t ~|~ \exists \sigma(f,v') \in T^*: ||v-v'||_p \leq 2k\}$}
    \State{$f\_ov \gets |\{f ~|~ \exists \sigma(f,v) \in overlaps\}|$}
    \If{$f\_ov < min\_f\_overlaps$}
        \State{$min\_f\_overlaps \gets f\_ov$}
        \State{$t^* \gets t$}
    \EndIf
\EndFor
\State{\Return{$t^*$}}
\EndFunction

\State{}

\Function{FixForest}{$T^*,p,k$}
\State{$iter \gets 0$}
\While{$T^*$ is not large-spread and $iter < MAX\_ITER$}
    \State{$iter \gets iter + 1$}
    \For{$t, t' \in T^*,\ \sigma(f,v)\in t,\ \sigma(f',v')\in t'$ }
        \If{$f = f'$ and $||v-v'||_p \leq 2k$}
            \State{$\delta \gets \Call{Random}{INTV}$}    \Comment{Sample a float in $INTV$}
            \If{$v \leq v'$}
                \State{$\sigma(f,v) \gets \sigma(f,v-\delta)$}
                \State{$\sigma(f',v') \gets \sigma(f',v'+\delta)$}
            \Else
                \State{$\sigma(f,v) \gets \sigma(f,v+\delta)$}
                \State{$\sigma(f',v') \gets \sigma(f',v'-\delta)$}
            \EndIf
        \EndIf
    \EndFor
\EndWhile
\If{$T^*$ is not large-spread}
    \State{\Return{$\bot$}}
\EndIf
\State{\Return{$T^*$}}
\EndFunction
\end{algorithmic}
\end{algorithm}

%\notaN{il for quadruplo (quintuplo considerando il while) in Algo 4 è terrificante XD si potrebbero compattare almeno i primi due scrivendo $\mathbf{for}\ t,t'\in T^*$. Io collasserei direttamente tutti e quattro scrivendo $\mathbf{for}\ t,t'\in T^*,\ \sigma(f,v)\in t,\ \sigma(f',v')\in t'$}

\subsubsection{Complexity}
%Some observations are in order. First, each iteration of the algorithm \notaN{cosa intendiamo con "each iteration of the algorithm"? il while a linea 7? non è chiaro. Perchè non diamo direttamente la complessità totale? ($O(m^3n^2)$ quindi?) è molto strano dare la complessità di un'iterazione e non quella totale. } takes $O(m^2n^2)$ time, assuming that each tree has at most $n$ nodes.
%\giulio{``We assume that each tree has at most $n$ nodes'' non dovrebbe passare come un'assunzione ma come notazione.}
%\giulio{Direi: ``We fix $MULT$ to be a small constant, e.g., $MULT \in [2,4]$''. Quanto è MULT nei nostri esperimenti? L'abbiamo riportato? Non lo trovo attualmente...}
\revise{Recall that each tree has at most $n$ nodes and we fix $MULT$ to be a small constant, e.g., $MULT \in [2,6]$.}
{\sc TrainLargeSpread} calls $O(m)$ times {\sc GetBestTree}  and {\sc FixForest}. The former function {\sc GetBestTree} iterates at most $|T|\in O(m)$ times ($t\in T$, line 23) the construction of set \emph{overlaps}. A naive way of building this set is to iterate over all nodes of $t$ (at most $n$ nodes) and compare their thresholds with all the thresholds appearing in the nodes of $T^*$ (at most $mn$ nodes), leading to time $O(mn^2)$ to build one instance of \emph{overlaps}. We observe that it is easy to speed up this step using balanced search trees, but we leave optimizations to further extensions of this work. To sum up, {\sc GetBestTree} takes $O(m^2n^2)$ and, hence, the
$O(m)$ calls to {\sc GetBestTree} cost overall time $O(m^3n^2)$.
Function {\sc FixForest} iterates $MAX\_ITER$ times the \texttt{for} loop at line 35.
Each iteration of the \texttt{for} loop costs $O(1)$ time and there are at most $m^2n^2$ iterations
because the loop iterates over all possible combinations of $\sigma(f,v)$ and $\sigma(f',v')$ belonging to two distinct trees of $T^*$. Since there are at most $mn$ nodes in $T^*$, the number of iterations is at most $m^2n^2$.
To this cost, we have to add the $MAX\_ITER$ evaluations of "$T^*$ is not large-spread" (line 33); this predicate can be evaluated in $O(|T^*|^2) = O(m^2n^2)$ time by comparing all pairs of thresholds appearing in $T^*$.
We conclude that the running time of the $O(m)$ iterations of {\sc FixForest} is in total $O(MAX\_ITER \cdot m^3n^2)$. This dominates the running time of the $O(m)$ iterations of {\sc GetBestTree}, so we conclude that $O(MAX\_ITER \cdot m^3n^2)$ is also the running time of {\sc TrainLargeSpread}. This cost is paid in addition to the cost of training the standard random forest at line 2.

As noted above, although it is feasible to reduce this complexity using appropriate data structures, we observe that $(i)$ training is often performed only once, so any optimization just offers limited benefits and is left to future work, and $(ii)$ the number of trees and nodes is often small enough to make a cubic complexity acceptable in practice. As a matter of fact, \revise{our experimental evaluation gives evidence about the acceptable empirical efficiency of the proposed training algorithm.}

\subsubsection{Hierarchical Training}
\label{sec: hierarchical-training}
We observe that our training algorithm can fail, in particular when it is not possible to add one tree to the current large-spread ensemble and reduce to zero the overlaps by our mutation routine, i.e., the number of overlaps resulting from adding a tree to the large-spread ensemble is too high. However, we show in our experimental evaluation (see Section~\ref{sec:experiments}) that it is possible to train large-spread ensembles of different dimensions after some parameter tuning. In particular, we propose an intuitive and effective technique to mitigate the risks of failures during training. A key insight is that the larger the ensemble is, the more difficult it becomes to avoid violations of the large-spread requirement, because ensembles including many trees also have many thresholds, hence overlaps become harder to avoid. We thus propose a \emph{hierarchical} training approach as follows:
\begin{enumerate}
    \item We first partition the set of features in $l$ disjoint subsets and we build $l$ different projections of the training set $\dtrain$, based on such feature sets.
    \item We train a large-spread ensemble of size $\frac{m}{l}$ on each of the $l$ different training sets using Algorithm~\ref{alg:training} and we finally merge all the trained ensembles into an ensemble of $m$ trees. 
\end{enumerate}

Note that the final ensemble is indeed large-spread, because each of the merged ensembles ensures the large-spread condition on the trees therein, and trees from different ensembles cannot violate the large-spread condition because they are built on disjoint sets of features. For example, an ensemble of 100 trees can be trained by building 4 disjoint projections of the training data (based on feature partitioning) and training an ensemble of 25 trees on each of them. We empirically observed that this approach may improve the effectiveness of the training process, by enabling the construction of larger ensembles in practice. We report on experiments confirming this observation in the next section.

%\SC{``roughly'' and ``around'' are a little bit sloppy. Can we be more precise or simplify a bit to avoid this criticism?}
\section{Experimental Evaluation}
\label{sec:experiments}

\subsection{Experimental Setup}
To show the practical relevance of our theory, we develop two tools on top of it and we prove their effectiveness on public datasets.

\subsubsection{Tools}
Our first tool CARVE\footnote{CARVE - CompositionAl Robustness Verifier for tree Ensembles} is a C++ implementation of the proposed robustness verification algorithm for large-spread ensembles (Algorithm~\ref{alg:ensembles}). It takes as input a random forest classifier $T$, a norm $p$, a maximum perturbation $k$ and a test set $\dtest$ to return as output the robustness score $r_{A_{p,k}}(T,\dtest)$. CARVE assumes that $T$ is large-spread and implements majority voting as the aggregation scheme of individual tree predictions. Our second tool LSE is a sequential Python implementation of the proposed training algorithm for large-spread ensembles (Algorithm~\ref{alg:training}). Starting from a training set $\dtrain$, a number of trees $m$, a norm $p$ and a maximum perturbation $k$, it returns a large-spread ensemble $T^*$ of $m$ trees (unless the training algorithm fails by returning $\bot$). 
%Additional parameters include the maximum number of iterations of the training algorithms (\emphirical{750}) and the maximum depth of the decision trees in the ensemble (\emphirical{6}).
The random forest trained before pruning is created using sklearn.

\begin{table}[t]
\caption{Dataset statistics.}
\label{tab:datasets}
    \centering
    \begin{tabular}{c|c|c|c}
    \toprule
    \textbf{Dataset} & \textbf{Instances} & \textbf{Features} & \textbf{Distribution} \\
    \midrule
    Fashion-MNIST & 13,866 & 784 & $50\% / 50\%$ \\
    MNIST & 14,000 & 784 & $51\% / 49\%$ \\
    REWEMA & 6,271 & 630 & $50\% / 50\%$ \\
    \revise{Webspam} & \revise{350,000} & \revise{254} & $\revise{70\% / 30\%}$ \\
    \bottomrule
    \end{tabular}
\end{table}

\subsubsection{Methodology}
\label{sec:exp-methodology}
\writtenbyLC{Our experimental evaluation is performed on \revise{four} public datasets: Fashion-MNIST\footnote{\url{https://www.openml.org/search?type=data\&sort=runs\&id=40996&status=active}}, MNIST\footnote{\url{https://www.openml.org/search?type=data\&sort=runs\&id=554}}, REWEMA\footnote{\url{https://www.kaggle.com/code/kerneler/starter-rewema-c5ce57b7-e/input}} and \revise{Webspam}\footnote{\url{https://www.csie.ntu.edu.tw/~cjlin/libsvmtools/datasets/binary.html}}. Since Fashion-MNIST and MNIST are datasets associated to multiclass classification tasks and we focus on binary classification tasks in this work, we consider two subsets of them. In particular, for Fashion-MNIST we consider the instances with class 0 (T-shirt/top) and 3 (Dress), while for MNIST we keep the instances representing the digits 2 and 6. The key characteristics of the chosen datasets are reported in Table~\ref{tab:datasets}. 
\revise{The chosen datasets are representative for different reasons: Fashion-MNIST, MNIST and Webspam have already been considered in the robustness verification literature~\cite{Andriushchenko019, ChenZS0BH19, RanzatoZ20, WangZCBH20}; moreover, REWEMA and Webspam are associated with a security-relevant classification task (malware and spam detection, respectively) for which the robustness verification of the employed classifier is critically important.}
%The first \revise{three} datasets are extensively used in the robustness verification literature~\cite{Andriushchenko019, ChenZS0BH19, RanzatoZ20, WangZCBH20}, while the \revise{fourth} dataset is associated with a security relevant classification task (malware detection), for which the robustness verification of the employed classifier is critically important. 
In general, we choose datasets with a high number of features, where it may be useful to train large tree ensembles to reach the best performance. Each dataset is partitioned into a training set and a test set, using 70/30 stratified random sampling.}

%\SC{Inconsistency: MNIST26 or MNIST2-6? Make it uniform (I prefer the former). The definition of the subsets is unclear: mention that MNIST26 uses digits 2 and 6, explain how clothes are defined for Fashion-MNIST. Other inconsistency: the paper uses ``test set'' but here I see occurrence of ``test set'': fix to always use the former.}

In our experimental evaluation we make use of two training algorithms to learn different types of classifiers: $(i)$ a majority-voting classifier based on a traditional random forest (RF) trained using sklearn, and $(ii)$ a majority-voting classifier based on a large-spread tree ensemble trained using LSE. \writtenbyLC{Moreover, we consider tree-based classifiers of different sizes: $(i)$ small ensembles with $25$ trees of maximum depth $4$; $(ii)$ large ensembles with $101$ trees with maximum depth $6$. We only consider ensembles with an odd number of trees in order to avoid ties in classification.}
%consider settings on which the traditional random forests and the large-spread ensembles exhibits diverse accuracy and robustness properties.

Robustness verification is then performed using CARVE and SILVA, a state-of-the-art verifier for traditional decision tree ensembles based on abstract interpretation~\cite{RanzatoZ20}. Note that SILVA can be applied to arbitrary ensembles, while CARVE can only be used on large-spread ensembles. Since SILVA leverages the hyper-rectangle abstract domain for verification, which does not introduce any loss of precision for $L_\infty$-norm attackers but might lead to an over-approximation for generic $L_p$-norm attackers, we only focus on $L_\infty$-attackers in our \revise{comparison. For the sake of completeness, in our evaluation of CARVE we also consider robustness against $L_1$-attackers and $L_2$-attackers for large-spread ensembles.} 

\revise{Finally, in our evaluation we consider different perturbations $k \in \{0.0050, 0.0100,$ $ 0.0150\}$ for the MNIST, Fashion-MNIST and REWEMA datasets, while we assume $k \in \{0.0002, 0.0004, 0.0006\}$ for Webspam. %smaller perturbations drawn from the set $\{0.0002, 0.0004, 0.0006\}$
We choose different perturbations for the Webspam dataset to be aligned with previous work and to obtain roughly the same decrease in robustness observed on the other three datasets for the considered tree-based classifiers. Indeed, Chen et. al.~\cite{ChenZBH19} showed in their experimental evaluation that the certified minimum adversarial perturbation obtained for the Webspam dataset is one order of magnitude smaller than the one obtained for the MNIST dataset, i.e., models trained over Webspam would be too fragile to be usable when tested against larger perturbations.}

\subsubsection{LSE Setup}
\label{sec:lse-setup}
\writtenbyLC{Our tool LSE requires the user to specify the value of some additional parameters (described in Section~\ref{sec: training-algorithm}) with respect to the traditional implementation of the training algorithm for random forests by sklearn. The norm $p$ and the perturbation $k$ depend on the assumed attacker's capabilities, so they do not require a particular tuning. Still, \revise{other parameters such as} the number of partitions $l$ for the hierarchical training and the maximum number of iterations $MAX\_ITER$ of the $\textproc{FixForest}$ procedure require some tuning. Indeed, although partitioning the features may enable the training of larger ensembles, a too high number of partitions might negatively affect the accuracy of the resulting large-spread ensemble, because each sub-forest has only a partial view on the set of available features and some patterns may not be learned. In the same way, the maximum number of rounds $MAX\_ITER$ has an impact on the success of the training procedure, since a minimum number of rounds is required to adjust the thresholds of the ensemble, but a too high number of rounds may modify the thresholds too much and downgrade the predictive power of the model. We perform some experiments in order to assess the influence of these parameters on the success of training a large-spread ensemble and on the accuracy of the resulting model \revise{to then pick the best-performing models in our experimental evaluation}. For space reasons, we discuss details in \iffull Appendix~\ref{sec:appendix-tuning}. \else the full version~\cite{abs-2305-03626}. \fi}

\begin{table*}[t]
\caption{Accuracy and robustness measures for traditional and large-spread ensembles. Robustness is computed against $A_{\infty,k}$. We highlight in bold the cases in which the gap between the accuracy and the robustness of the traditional tree-based ensemble and large-spread ensemble is at least of $0.05$.}
\label{tab:measures}
    \centering
    \begin{tabular}{c|c|c|c|c|c|c|c}
    \toprule
    \multirow{2}{*}{\textbf{Dataset}} & \multirow{2}{*}{\textbf{$k$}} & \multirow{2}{*}{\textbf{Trees}} & \multirow{2}{*}{\textbf{Depth}} & \multicolumn{2}{c}{\textbf{Accuracy}} & \multicolumn{2}{c}{\textbf{Robustness}} \\
    \cmidrule{5-8} 
    & & & & Traditional & Large-Spread & Traditional & Large-Spread \\
    \midrule
    \multirow{6}{*}{Fashion-MNIST} & \multirow{2}{*}{$0.0050$} & 25 & 4 & 0.93 & \revise{0.92} & 0.90 & 0.90\\
    & & 101 & 6 & 0.96 & \revise{0.96} & 0.91 & \revise{0.93}\\
    \cline{2-8}
    & \multirow{2}{*}{$0.0100$} & 25 & 4 & 0.93 & \revise{0.92} & 0.86 & \revise{0.87} \\
    & & 101 & 6 & 0.96 & \revise{0.94} & \textbf{0.79} & \revise{\textbf{0.91}}\\
    \cline{2-8}
    & \multirow{2}{*}{$0.0150$} & 25 & 4 & 0.93  & 0.91 & \textbf{0.60} & \revise{\textbf{0.88}} \\
    & & 101 & 6 & 0.96 & \revise{0.92} & \textbf{0.51 $\pm$ 0.01} & \revise{\textbf{0.89}} \\
    \midrule
    \multirow{6}{*}{MNIST} & \multirow{2}{*}{$0.0050$} & 25 & 4 & 0.97 & \revise{0.97} & \textbf{0.90} & \revise{\textbf{0.96}} \\
    & & 101 & 6 & 0.99 &  0.99 & 0.94 & 0.97 \\
    \cline{2-8}
    & \multirow{2}{*}{$0.0100$} & 25 & 4 & 0.97 & \revise{0.97} & \textbf{0.72} & \revise{\textbf{0.90}} \\
    & & 101 & 6 & 0.99 & 0.99 & \textbf{0.77 $\pm$ 0.02} & \revise{\textbf{0.97}} \\
    \cline{2-8}
    & \multirow{2}{*}{$0.0150$} & 25 & 4 & 0.97 & \revise{0.97} & \textbf{0.64} & \revise{\textbf{0.83}}\\
    & & 101 & 6 & 0.99 & 0.99 & \textbf{0.67 $\pm$ 0.05} & \revise{\textbf{0.94}} \\
    \midrule
    \multirow{6}{*}{REWEMA} & \multirow{2}{*}{$0.0050$} & 25 & 4 & 0.88 & \revise{0.88} & 0.85 & \revise{0.87}\\
    & & 101 & 6 & \textbf{0.98} & \textbf{0.89} & 0.88 & 0.89 \\
    \cline{2-8}
    & \multirow{2}{*}{$0.0100$} & 25 & 4 & 0.88 & \revise{0.88} & 0.83 & \revise{0.87} \\
    & & 101 & 6 & \textbf{0.98} & \revise{\textbf{0.89}} & 0.86 & \revise{0.88} \\
    \cline{2-8}
    & \multirow{2}{*}{$0.0150$} & 25 & 4 & 0.88 & 0.88 & 0.83 & \revise{0.85}\\
    & & 101 & 6 & \textbf{0.98} & \revise{\textbf{0.88}} & \textbf{0.78} & \revise{\textbf{0.88}} \\
    \midrule
    \multirow{6}{*}{\revise{Webspam}} & \multirow{2}{*}{\revise{$0.0002$}} & \revise{25} & \revise{4} & \revise{0.90} & \revise{0.90} & \revise{0.83} & \revise{0.87}\\
    & & \revise{101} & \revise{6} & \revise{0.94} & \revise{0.91} & \revise{0.88} & \revise{0.90}\\
    \cline{2-8}
    & \multirow{2}{*}{\revise{$0.0004$}} & \revise{25} & \revise{4} & \revise{0.90} & \revise{0.89} & \revise{\textbf{0.80}} & \revise{\textbf{0.86}} \\
    & & \revise{101} & \revise{6} & \revise{\textbf{0.94}} & \revise{\textbf{0.89}} & \revise{0.85} & \revise{0.86}\\
    \cline{2-8}
    & \multirow{2}{*}{\revise{$0.0006$}} & \revise{25} & \revise{4} & \revise{0.90} & \revise{0.89} & \revise{\textbf{0.78}} & \revise{\textbf{0.85}} \\
    & & \revise{101} & \revise{6} & \revise{\textbf{0.94}} & \revise{\textbf{0.85}} & \revise{0.81} & \revise{0.82} \\
    \bottomrule
    \end{tabular}
\end{table*}

\subsection{Accuracy and Robustness Results}
\label{sec: acc-rob}
\revise{In our first experiment} we assess whether large-spread ensembles are effective at classification and we analyze their robustness properties. Indeed, the large-spread condition enforced on the ensemble limits the model shape, thus potentially reducing its predictive power with respect to traditional tree ensembles. Since we are not just concerned about accuracy but we target robustness, we also analyze how large-spread ensembles fare against evasion attacks.
\revise{Our evaluation consists of two parts. We first compare the accuracy and robustness of the large-spread ensembles against traditional random forests of the same size, considering an $L_{\infty}$-attacker. The robustness of the traditional models is computed using SILVA, since CARVE can only be used for verifying large-spread ensembles. We set a timeout per instance of one second, as in~\cite{RanzatoZ20}. Then, we use CARVE to verify the robustness of large-spread ensembles against $L_{1}$-attackers and $L_{2}$-attackers that are not supported by SILVA.}

\subsubsection{Comparison for $L_{\infty}$-norm Attackers.}
\writtenbyLC{Table~\ref{tab:measures} shows the experimental results of our \revise{comparison}. Note that the value of robustness may be approximated, since SILVA may not be able to verify robustness on some instances within the time limit; for these cases, we provide lower and upper bounds of robustness, using the $\pm$ notation. The results highlight that the large-spread ensembles are \textit{reasonably accurate} and often \textit{more robust} than the random forests of the same size. In particular, the accuracy of the large-spread ensembles is at most $0.03$ lower than the accuracy of the corresponding traditional model in the majority of the cases, while the improvement in robustness is at least \revise{$0.04$} in around half of the cases. This is reassuring, because accuracy was at stake, since the large-spread condition restricts the shape of the ensemble and might be associated to a reduction of predictive power. The increase of robustness is an interesting byproduct of the large-spread condition: since thresholds in different trees are far way, evasion attacks are empirically harder to craft. Observe that the accuracy and robustness values of the large-spread ensembles on the MNIST and Fashion-MNIST test sets show that large-spread models present better performance overall than the traditional ensembles. The accuracy of the large-spread ensembles on these two test sets is usually equal to the one of the traditional ensembles, while the robustness value improves of at least \revise{$0.06$} in half of the cases, in particular when the largest considered perturbation $k$ is used as the attacker's capability. For example, the robustness of the large-spread ensemble with $101$ trees of maximum depth $6$ and perturbation $0.0150$ is at least \revise{$0.22$} higher than the robustness of the corresponding random forest, while the accuracy decreases only by \revise{$0.04$} at most. When the value of the perturbation $k$ is the lowest considered, the results are still positive, since the large-spread ensembles \revise{present} the same accuracy and \revise{a higher robustness than the ones of the traditional ensembles.}}

We see a slightly different trend in the results for the REWEMA \revise{and Webspam datasets}: the robustness of large-spread ensembles is always equal to or greater than the robustness of the traditional ensembles, but the gap in accuracy with respect to the traditional ensembles may increase\revise{, in particular when considering large adversarial perturbations, which make it harder to enforce the large-spread condition}. For example, the large spread ensembles of $101$ trees with maximum depth $6$ \revise{trained on the two datasets present $0.88$ and $0.82$ robustness with perturbation $0.015$ and $0.0006$ (respectively, $+0.10$ and $+0.01$ than the robustness of the corresponding traditional tree ensembles), but their accuracy is $0.88$ and $0.85$ (respectively, $-0.10$ and $-0.09$ than the accuracy of the traditional tree ensembles).} This confirms that an improvement in robustness often occurs at the price of a decrease in accuracy, because of the classic trade-off between accuracy and robustness~\cite{TsiprasSETM19, MullerE0V23}. Even in these cases though, adopting large-spread ensembles continues to be useful: the accuracy is always way above the majority class distribution, so the model is usable in the non-adversarial setting, while being normally more robust than the traditional counterpart and amenable for efficient security verification. 
\revise{To explain the observed drop in accuracy for large-spread models, we compare the \emph{permutation feature importance}~\cite{Breiman01} for traditional ensembles and large-spread ensembles to assess which features have more predictive power according to the different models. The analysis is quite interesting. For REWEMA, it shows that traditional models give significant importance to a few numerical features which are less important for large-spread models; large-spread models, in turn, privilege some categorical / ordinal features which are less important for traditional models. Instead, for Webspam, it shows that traditional and large-spread models privilege numerical features with many distinct values. However, the traditional models give also importance to some features with a very skewed empirical distribution towards the value 0, while the large-spread ensembles give more importance to features with scattered values. This motivates why large-spread models sacrifice some predictive power, but show better robustness in general: categorical / ordinal features and, in general, features with more scattered values are harder to target for $L_p$-norm attackers, because their sparse nature makes them more robust to adversarial perturbations, i.e., larger perturbations are required to actually traverse thresholds and thus affect predictions.}

\subsubsection{Additional Attackers}
\revise{Table~\ref{tab:robustness-measures} shows the robustness of the trained large-spread ensembles against different $L_p$-attackers for $p \in \{1,2,\infty\}$. As expected, the large-spread ensembles trained on MNIST and Fashion-MNIST are generally more robust against the weakest $L_1$-attacker and less robust against the strongest $L_{\infty}$-attacker. Instead, we observe that the large-spread ensembles trained on the REWEMA and Webspam datasets show a different behaviour: the robustness values of the large-spread ensemble models are almost the same for every attacker considered. This is explained by the fact that large-spread models trained over such datasets make a more significant use of categorical / ordinal features and features with more scattered values, as discussed in the previous section. The attacker thus cannot perturb the test instances to cross thresholds of important features for prediction, independently of the chosen $L_p$-norm. We remark here that the effectiveness of CARVE does not depend upon $p$: robustness verification is always exact and the complexity of the analysis is independent from $p$. This motivates why the rest of our evaluation only considers the case $p = \infty$.}

\begin{table}[t]
\caption{Robustness measures for large-spread ensembles against different $L_p$-attackers.}
\label{tab:robustness-measures}
    \centering
    \begin{tabular}{c|c|c|c|c|c|c}
    \toprule
    \multirow{2}{*}{\textbf{Dataset}} & \multirow{2}{*}{\textbf{$k$}} & \multirow{2}{*}{\textbf{Trees}} & \multirow{2}{*}{\textbf{Depth}} & \multicolumn{3}{c}{\textbf{Robustness}} \\
    \cmidrule{5-7} 
    & & & & $A_{\infty,k}$ & $A_{2,k}$ & $A_{1,k}$\\
    \midrule
    \multirow{6}{*}{\revise{Fashion-MNIST}} & \multirow{2}{*}{\revise{$0.0050$}} & \revise{25} & \revise{4} & \revise{0.90} & \revise{0.90} & \revise{0.90}\\
    & & \revise{101} & \revise{6} & \revise{0.93} & \revise{0.93} & \revise{0.94} \\
    \cline{2-7}
    & \multirow{2}{*}{\revise{$0.0100$}} & \revise{25} & \revise{4} & \revise{0.87} & \revise{0.88} & \revise{0.89} \\
    & & \revise{101} & \revise{6} & \revise{0.91} & \revise{0.91} & \revise{0.93} \\
    \cline{2-7}
    & \multirow{2}{*}{\revise{$0.0150$}} & \revise{25} & \revise{4} & \revise{0.88} & \revise{0.89} & \revise{0.89} \\
    & & \revise{101} & \revise{6} & \revise{0.89} & \revise{0.89} & \revise{0.91} \\
    \midrule
    \multirow{6}{*}{\revise{MNIST}} & \multirow{2}{*}{\revise{$0.0050$}} & \revise{25} & \revise{4} & \revise{0.96} & \revise{0.96} & \revise{0.97} \\
    & & \revise{101} & \revise{6} & \revise{0.97} & \revise{0.98} & \revise{0.98} \\
    \cline{2-7}
    & \multirow{2}{*}{\revise{$0.0100$}} & \revise{25} & \revise{4} & \revise{0.90} & \revise{0.93} & \revise{0.95} \\
    & & \revise{101} & \revise{6} & \revise{0.97} & \revise{0.98} & \revise{0.98} \\
    \cline{2-7}
    & \multirow{2}{*}{\revise{$0.0150$}} & \revise{25} & \revise{4} & \revise{0.83} & \revise{0.88} & \revise{0.93} \\
    & & \revise{101} & \revise{6} & \revise{0.94} & \revise{0.95} & \revise{0.97} \\
    \midrule
    \multirow{6}{*}{\revise{REWEMA}} & \multirow{2}{*}{\revise{$0.0050$}} & \revise{25} & \revise{4} & \revise{0.87} & \revise{0.87} & \revise{0.87} \\
    & & \revise{101} & \revise{6} & \revise{0.89} & \revise{0.89} & \revise{0.89}\\
    \cline{2-7}
    & \multirow{2}{*}{\revise{$0.0100$}} & \revise{25} & \revise{4} & \revise{0.87} & \revise{0.87} & \revise{0.87}\\
    & & \revise{101} & \revise{6} & \revise{0.88} & \revise{0.88} & \revise{0.88}\\
    \cline{2-7}
    & \multirow{2}{*}{\revise{$0.0150$}} & \revise{25} & \revise{4} & \revise{0.85} & \revise{0.87} & \revise{0.87}\\
    & & \revise{101} & \revise{6} & \revise{0.88} & \revise{0.88} & \revise{0.88}\\
    \midrule
    \multirow{6}{*}{\revise{Webspam}} & \multirow{2}{*}{\revise{$0.0002$}} & \revise{25} & \revise{4} & \revise{0.87} & \revise{0.88} & \revise{0.88}\\
    & & \revise{101} & \revise{6} & \revise{0.90} & \revise{0.90} & \revise{0.90}\\
    \cline{2-7}
    & \multirow{2}{*}{\revise{$0.0004$}} & \revise{25} & \revise{4} & \revise{0.86} & \revise{0.86} & \revise{0.86}\\
    & & \revise{101} & \revise{6} & \revise{0.86} & \revise{0.86} & \revise{0.87}\\
    \cline{2-7}
    & \multirow{2}{*}{\revise{$0.0006$}} & \revise{25} & \revise{4} & \revise{0.85} & \revise{0.86} & \revise{0.86}\\
    & & \revise{101} & \revise{6} & \revise{0.82} & \revise{0.83} & \revise{0.83}\\
    \bottomrule
    \end{tabular}
\end{table}

\subsection{Efficiency of Robustness Verification}
\writtenbyLC{We now compare the SILVA and CARVE robustness verification tools along two different dimensions: verification time and memory consumption. For simplicity, we only focus on the verification of large ensembles with $101$ trees and maximum depth $6$ on the MNIST dataset with $k=0.0150$. As emerged from the results in Section~\ref{sec: acc-rob}, this is a setting where a state-of-the-art approach like SILVA clearly shows its limits: indeed, SILVA could not provide a precise estimate of the robustness of this model ($\pm$ 0.05). In order to measure the verification time per instance and setting timeouts in the same way for both the tools, we use the GNU commands \texttt{time} and \texttt{timeout} that measure the elapsed wall clock time. The former command is also used to compute the maximum amount of physical memory allocated to the verifier. When it is required to set a maximum amount of physical memory that the process can use, we use the Linux kernel feature \texttt{cgroup}. All the experiments are performed on a virtual machine with 103 GB of RAM and Ubuntu 20.04.4 LTS, running on a server with an Intel Xeon Gold 6148 2.40GHz.}

\begin{figure*}[t]
  \centering
    \subfloat[Number of verified instances of the test set when varying the time limit in seconds for the verification.\label{fig:time-efficiency}]{\includegraphics[width=.42\textwidth]{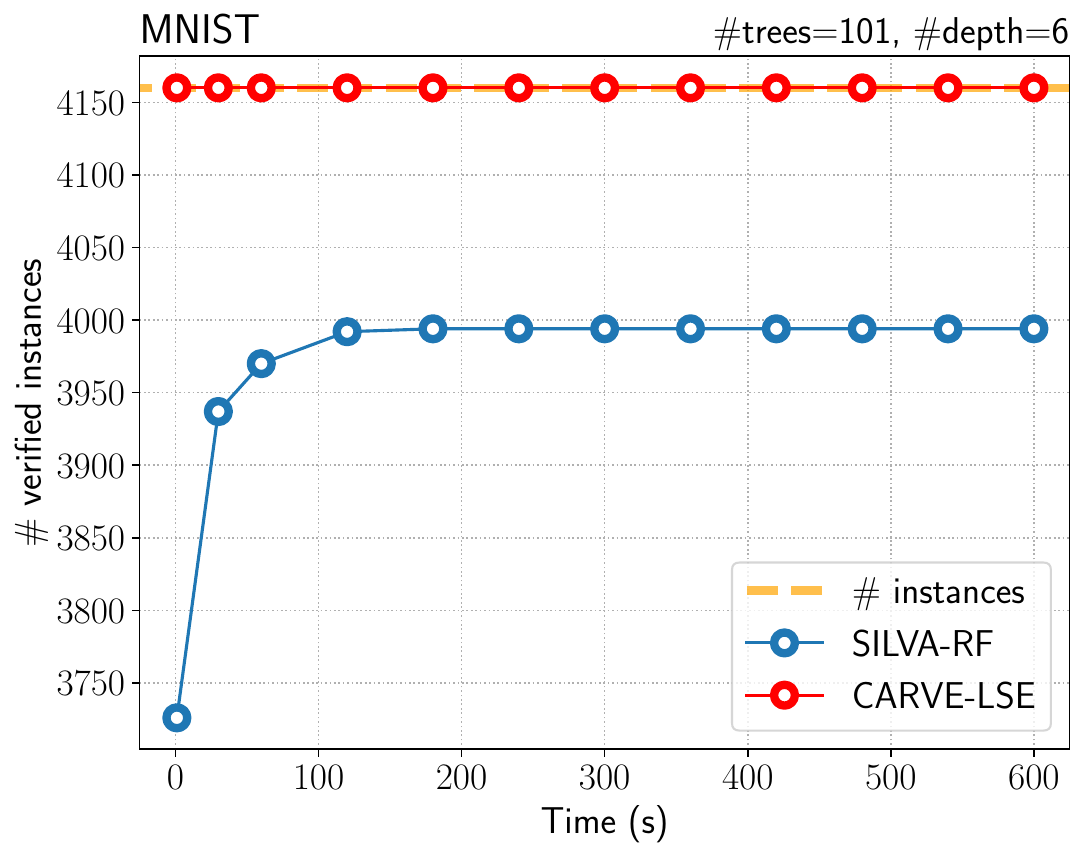}}
  \hfill
    \subfloat[Number of verified instances of the test set when varying the maximum memory consumption limit for the verification.\label{fig:memory-efficiency}]{\includegraphics[width=.42\textwidth]{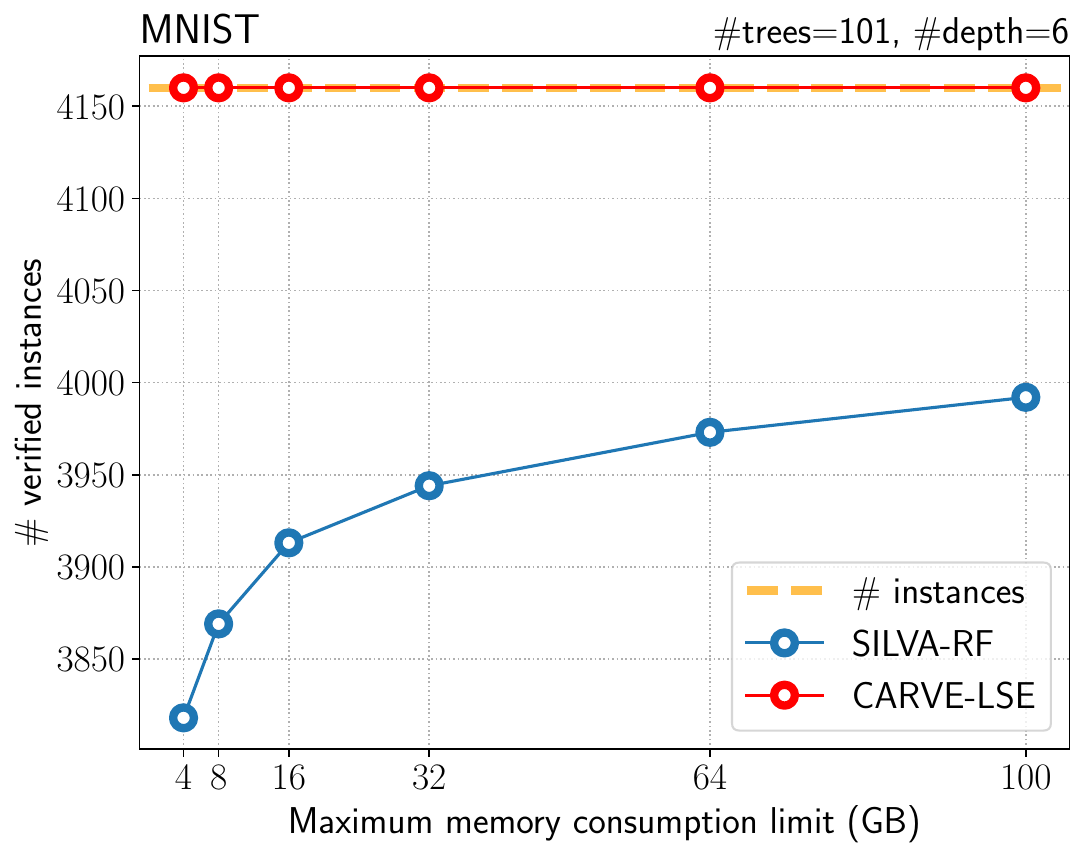}}
  \caption{Comparison of the time and memory efficiency of SILVA and CARVE on the MNIST dataset (we consider ensembles with 101 trees of maximum depth 6).}
  \label{fig: efficiency}
\end{figure*}

\subsubsection{Time Efficiency}
In our first experiment we compare the robustness verification times for traditional tree ensembles using SILVA and the robustness verification times for large-spread ensembles using CARVE. This way, we compare a state-of-the-approach for adversarial machine learning models (i.e., what we would do today) against our custom algorithm designed to take advantage of the large-spread condition (i.e., what we put forward in this paper). \writtenbyLC{In the experiments of Section~\ref{sec: acc-rob}, we set the maximum verification time per instance of SILVA to one second. However, SILVA may complete the verification also on more difficult instances if more time is granted, e.g., 60 seconds~\cite{RanzatoZ20}. In order to perform a fair comparison, we compare how many instances of the MNIST test set can be verified under growing time limits per instance, i.e., from one second to 10 minutes. This methodology allows us to figure out on how many instances the verification is really difficult. Note that the timeout of 10 minutes per instance is already extremely large, since test sets normally include thousands of instances.}

\writtenbyLC{Figure~\ref{fig:time-efficiency} shows the results of our experiment. The plot shows that SILVA is not able to verify the robustness of the traditional tree ensemble on 434 instances in less than one second and on 190 instances in less than one minute, providing just approximate robustness estimates with an uncertainty of 0.10 ($\pm 0.05$) and 0.05 ($\pm 0.025$) respectively. On the other hand, our tool CARVE requires \emph{less than one second} per instance to verify the robustness of the large-spread ensemble on all the instances of the test set, providing an exact estimate of the robustness of the model. As the maximum amount of verification time per instance increases, the number of instances on which SILVA is not able to verify the robustness of the model further decreases, e.g., 168 instances with a timeout of 120 seconds and 166 instances with a timeout of 180 seconds. Even though the robustness estimate of SILVA becomes more precise as the timeout per instance increases, i.e., the uncertainty on robustness decreases to 0.04 ($\pm 0.02$) with a timeout per instance of 180 seconds, this process eventually hits a wall: the remaining 166 instances cannot be verified even when the timeout increases to 10 minutes per instance. Moreover, the improved precision comes at the cost of an higher total verification time: with a timeout of 120 seconds, SILVA requires in total 22,220 seconds to verify the traditional tree ensemble on the entire MNIST test set, while CARVE requires just 129 seconds in total, i.e., a reduction of two orders of magnitude. As expected, the results show the pitfalls of the complete robustness verification on traditional tree-ensembles and the improvements in the verification time enabled by the large-spread condition. Since the verification problem is NP-complete, there may be instances on which the verification time increases exponentially, while the large-spread condition allows one to train tree ensembles whose robustness can always be verified in polynomial time.}

%\SC{La discussione sul tempo totale non è convincente: il conto assume che SILVA vada sempre in timeout, ma in realtà questo capita solo su un piccolo numero di istanze. Riportare il tempo effettivo.}

%\SC{Piccole note generali: quando riferisci una sezione specifica, Section va in maiuscolo. Usalo minuscolo se usi il termine genericamente, es. ``in a previous section''. Attenzione ai nomi dei dataset: o lo chiamiamo MNIST o lo chiamiamo MNIST26 in tutti i punti. Analogamente per gli altri dataset se ci fossero inconsistenze simili. Scegli se usare dataset, test set o test set. Più concerning: quando parli dei limiti di tempi, specifica sempre che sono ``per instance'', non un timeout globale (vedi anche la caption della figura). C'è un ``the reason'' che è rimasto appeso, non so se è da cancellare o se volevi aggiungere qualcosa. Altra nota: qui ci sono 100 istanze su circa 4100 dove SILVA non termina, questo è circa il 2\%. La robustness però sembra ballare più di così: abbiamo visto $\pm 0.07$, forse riusciamo a spingere queste forbice a $\pm 0.05$ con timeout maggiore, ma il gap di incertezza sulla robustness rimane sempre di 0.10, che è grande! Questo va chiarito, perchè presentata così stiamo facendo underselling.}

\subsubsection{Memory Efficiency}
\writtenbyLC{Our first experiment provides only a partial picture of the efficiency of the robustness verification and the reasons for the potential inefficiency of SILVA. Indeed, memory constraints should also be taken into account during robustness verification, since a high memory consumption may make the verification unfeasible on standard commercial systems.}

\writtenbyLC{In our second experiment, we compare the memory efficiency of SILVA and CARVE. In particular, we compare how many instances can be verified given a growing maximum memory consumption limit per instance, setting the maximum amount of verification time per instance to 10 minutes. The results of our experiment are shown in Figure~\ref{fig:memory-efficiency}. The results highlight that SILVA may consume a lot of memory in order to provide precise robustness estimates. In the best scenario, with 100 GB of memory available, SILVA is still unable to verify the robustness of the model on 168 instances, providing just an approximate estimate of the robustness of the traditional tree-based ensemble with an uncertainty of 0.04 ($\pm 0.02$). Even though the interval on which the robustness approximation is not so large in this setting, the plot shows that the number of instances that SILVA can not verify increases as the memory consumption limit decreases, expanding also the uncertainty of SILVA in the robustness estimation. For example, SILVA is not able to verify the robustness of the model on 216 and 342 instances with the memory consumption limit of 32 GB and 4 GB respectively, providing an uncertainty in the robustness estimates of 0.05 ($\pm 0.025$) and 0.08 ($\pm 0.04$). Instead, CARVE manages to verify the robustness of the large-spread ensemble on all the MNIST test set using \emph{less than 4GB of memory} per instance, providing an exact value of robustness. More precisely, the maximum memory consumption by CARVE is less than 1 GB in practice. The results confirm the efficiency in terms of memory consumption of our proposal and the unfeasibility of obtaining an exact value of robustness on traditional tree ensembles using a state-of-the-art verifier like SILVA when memory consumption constraints are imposed.}

%\SC{Anche qui evitare gli around. Forse partirei discutendo il caso di memoria massima, cioè 96 GiB. Questo perchè parti dicendo che SILVA may consume a lot of memory, quindi è strano che parti discutendo il caso dei 4 GiB. Meglio partire discutendo il caso dei 96 e chiudere dicendo che il quadro peggiore quando la memoria disponibile è ridotta. Potremmo poi dire che noi riusciamo sempre in meno di 4 GiB e che in realtà il bound è ben più basso.}

%\SC{A questa sezione si applicano i commenti della precedente. Un po' strano cappare a 95, meglio 96 che almeno ricorda una potenza del 2 (64 + 32). Ulteriore commento, che si aggiunge a quanto detto prima: i numeri assoluti in generale dicono poco. E' importante riportare la percentuale (se non ci ammazza) o fare capire le implicazioni sulla robustness, come suggerivo sopra. Questo perchè 100 istanze possono essere importanti o irrilevanti, a seconda del numero di istanze complessive.}

\subsubsection{Efficiency Under Time and Memory Constraints}
\writtenbyLC{We finally perform a comparison between CARVE and SILVA when enforcing both a maximum verification time limit and a maximum memory consumption limit. In particular, we compare the total verification time, the maximum memory consumption and the number of instances on which the tool is not able to return an answer given a maximum verification time of 60 seconds per instance and a maximum memory consumption limit of 64 GB.} 

\begin{table}
\caption{Comparison of total verification time and maximum memory consumption of SILVA and CARVE on the MNIST test set. The last column reports the number of instances on which the verifier was not able to provide an answer because it exceeded the time or memory limits.}
\label{tab:efficiency-two-constraints}
    \centering
    \begin{tabular}{c|c|c|c}
    \toprule
    \textbf{Tool} & \textbf{Total Time (s)} & \textbf{Memory (GB)} & \textbf{\# Failures}\\
    \midrule
    SILVA & 14,448 & 64 & 190 \\
    CARVE & 129 & 0.03 & 0 \\
    \bottomrule
    \end{tabular}
\end{table}

%SILVA & 9836 & 64 & 115 \\
%    CARVE & 129 & 0.03 & 0 \\
    
Table~\ref{tab:efficiency-two-constraints} shows the results of our experiment. The results confirm the observations from the previous sections: CARVE is far more efficient of SILVA in terms of 
both verification time and memory consumption. In particular, CARVE outperforms SILVA on the total verification time on the MNIST test set, verifying the large-spread ensemble on all the instances in just 129 seconds, thus being 112 times faster than SILVA (that requires 14,448 seconds). Moreover, the memory consumption of CARVE is more than 2,000 times lower than the one of SILVA, using just 0.03 GB of memory capacity, thus CARVE is usable on commodity hardware. Finally, SILVA is not able to provide an answer on 190 instances of the test set, providing an approximated robustness estimate with an uncertainty of 0.05 ($\pm 0.025$), while CARVE is able to provide the exact robustness value. \revise{This provides clear evidence of the challenges of robustness verification for traditional tree ensembles: since robustness verification is NP-hard in general, even a state-of-the-art tool like SILVA is bound to fail on specific inputs.}

\subsection{Efficiency of the Training Algorithm}
\label{sec: performance}
\revise{Finally, we evaluate the time efficiency of the training algorithm for large-spread ensembles (Algorithm~\ref{alg:training}). Intuitively, the difficulty of enforcing the large-spread conditions depends on two factors: the model size and the adversarial perturbation $k$. Indeed, the larger is $k$, the higher becomes the distance to be enforced across thresholds in different trees. We then perform two experiments, each for different values of $k$: in the first, we fix the maximum tree depth at six and we vary the number of trees in $\{25, 51, 75, 101\}$; in the second, we fix the number of trees at $101$ and we vary the maximum depth of the trees in $\{3, 4, 5, 6\}$. The presented times are measured for a specific hyper-parameter choice enabling successful training in all settings ($MAX\_ITER = 500$, $MULT = 6$, $INTV = [k,1.5k]$, $l = 6$).}

\subsubsection{Number of Trees}
\revise{Figure~\ref{fig:time-trees} shows the results of our first experiment. We observe that the time required for training a large-spread ensemble depends on the dataset, most likely because enforcing the large-spread condition might be easier or harder for different training data. When considering a number of trees less than or equal to $75$, the time required for training a large-spread ensemble is less than 150 seconds for all the considered datasets and adversarial perturbations. For example, the time required for training a large-spread ensemble of $75$ trees is 28 seconds on MNIST and 145 seconds on Webspam when considering the largest adversarial perturbation. Similarly, training a large-spread ensemble is efficient when considering smaller adversarial perturbations: for the smallest perturbations, training time ranges from one second on the REWEMA dataset to 16 seconds on the Webspam dataset. This result is encouraging, because the trained models already obtain a reasonable accuracy on the test set and the range of adversarial perturbations might be small in practical cases.}
%e.g., because the attacker wants to evade intrusion detection systems.

\revise{On the downside, when considering larger models with 101 trees, the role of the adversarial perturbations on the training time becomes more significant. For example, training a large-spread ensemble with $101$ trees under the largest adversarial perturbations required 137 seconds on MNIST and 1,835 seconds on Webspam. The motivation is that the cost of adding a tree to the ensemble increases as the size of the ensemble increases, because all the thresholds of the current ensemble must be adjusted with respect to the new tree. Fixing such violations to the large-spread condition is difficult for larger adversarial perturbations, because thresholds must be pushed farther away. This fact particularly affects the time required for training large-spread ensembles on the Webspam dataset: since some important features for the ensemble have a very skewed empirical distribution, the thresholds learned by the traditional tree-based ensembles for these features are close, thus separating them in an effective way is difficult and may require the training algorithm to perform many iterations.}

\subsubsection{Maximum Tree Depth}
\revise{Figure~\ref{fig:time-depth} shows the results of our second experiment. We observe that training a large-spread ensemble of depth at most five requires at most 122 seconds for all the considered datasets and adversarial perturbations. For example, training a large-spread ensemble of $101$ trees with maximum depth five takes 55 seconds on the Fashion-MNIST dataset and 122 seconds on the Webspam dataset. Moreover, the results confirm that training a large-spread ensemble considering small adversarial perturbations is efficient, e.g., the maximum time required for training a large-spread ensemble of $101$ trees with maximum depth six, considering the smallest adversarial perturbation for each dataset, is 35 seconds.}

\revise{However, we observe that, when considering large-spread ensembles with deeper trees, choosing a higher adversarial perturbation may determine a considerable increase in the time required for the training. The worst case is observed on the Webspam dataset, where the time required for training a large-spread ensemble of $101$ trees with maximum depth six and $k=0.0006$ is 1,835 seconds. Indeed, increasing the value of the depth of the trees in the ensemble causes an exponential growth in the number of nodes of the ensemble and enforcing the large-spread condition for higher perturbations is more difficult, thus more violations of the large-spread condition need to be fixed to add a single tree to the ensemble.}

\subsubsection{Discussion}
\revise{Our experimental evaluation shows that the training algorithm for large-spread ensembles is efficient when the model size is relatively limited ($\leq 75$ trees) or the adversarial perturbation is small. Concretely, the most challenging model including 75 trees could be trained in 145 seconds, while the most challenging model for the smallest adversarial perturbation could be trained in 35 seconds. When combining large model size with large adversarial perturbations, however, the training time can become higher. The worst case was observed on the Webspam dataset, where a model with 101 trees required 1,835 seconds to be trained under the largest adversarial perturbation. Nevertheless, this price is just paid for training: once the model is trained, robustness can be verified in polynomial time for thousands of instances. Also, such extreme cases only occurred on the Webspam dataset: for example, the most challenging models to train on Fashion-MNIST and REWEMA took just 113 seconds and 13 seconds respectively. We find these results appropriate for our first evaluation of large-spread ensembles, in particular because our implementation of LSE is not heavily optimized, and we plan to design more efficient training algorithms for large-spread ensembles as future work.}

\begin{figure*}[t]
    \centering
    \subfloat[]{\includegraphics[width=.24\textwidth]{./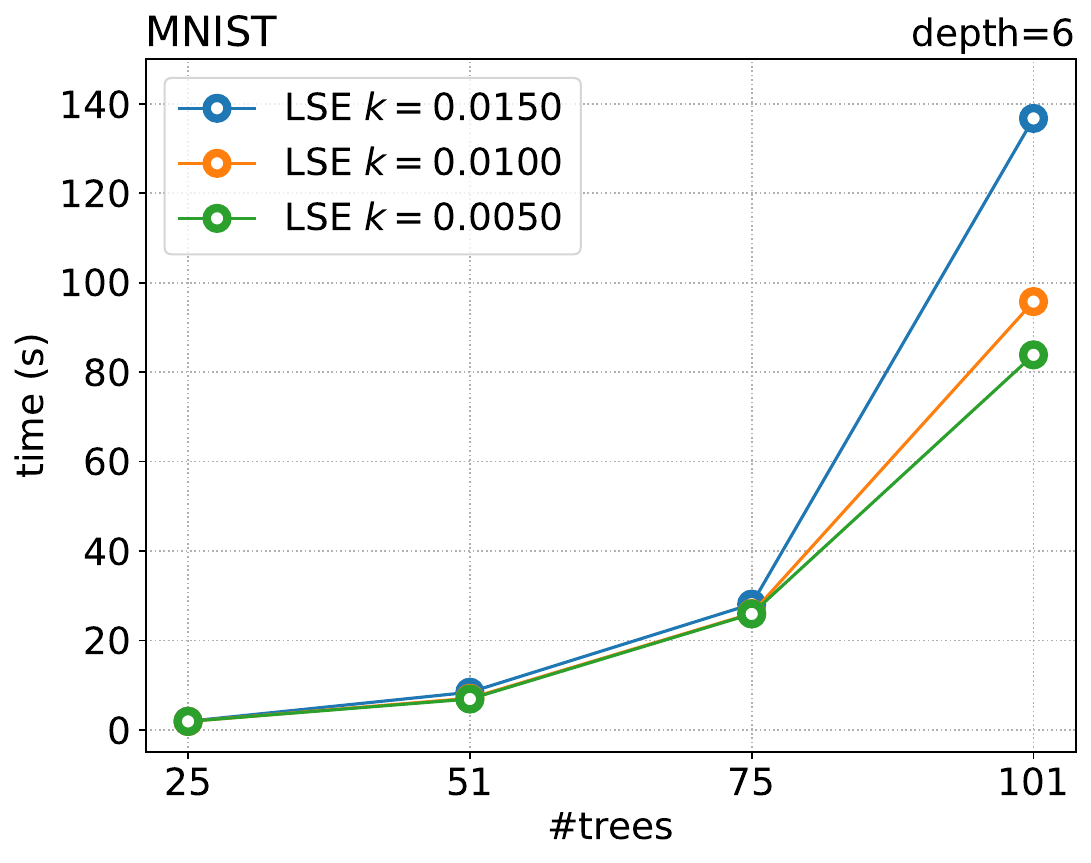}}
  \hfill
    \subfloat[]{\includegraphics[width=.24\textwidth]{./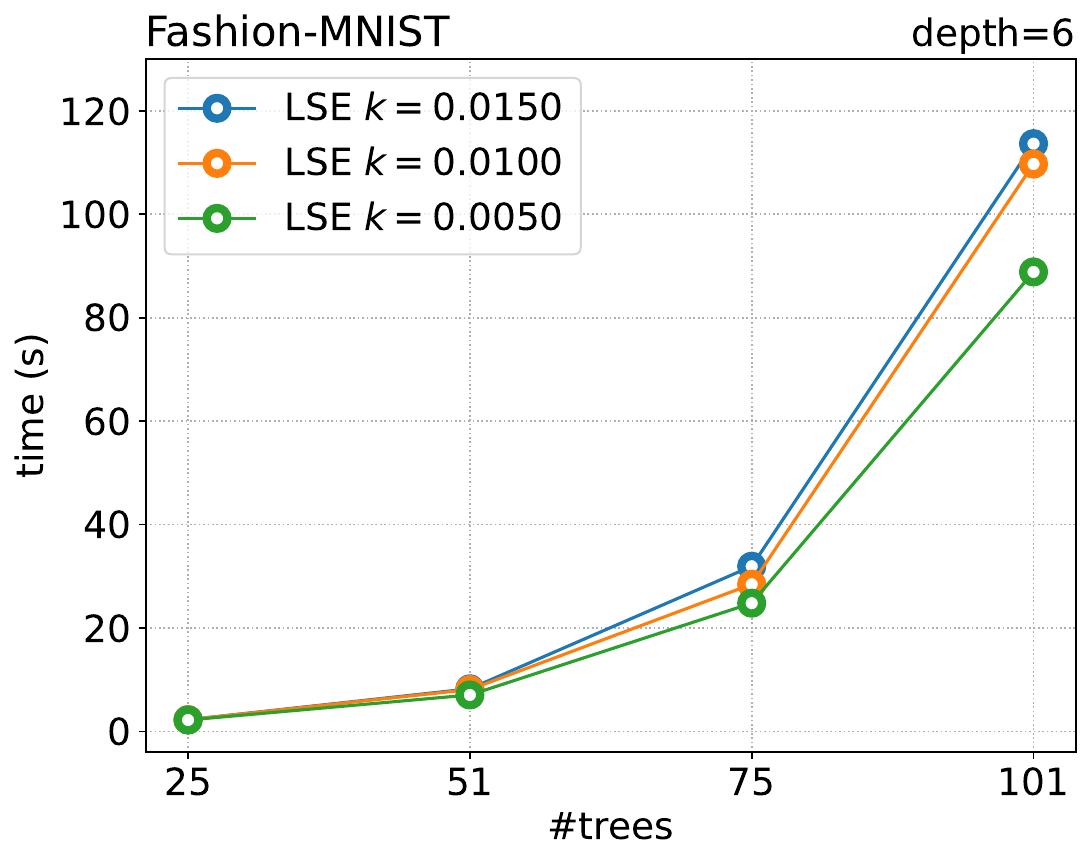}}
  \hfill
    \subfloat[]{\includegraphics[width=.24\textwidth]{./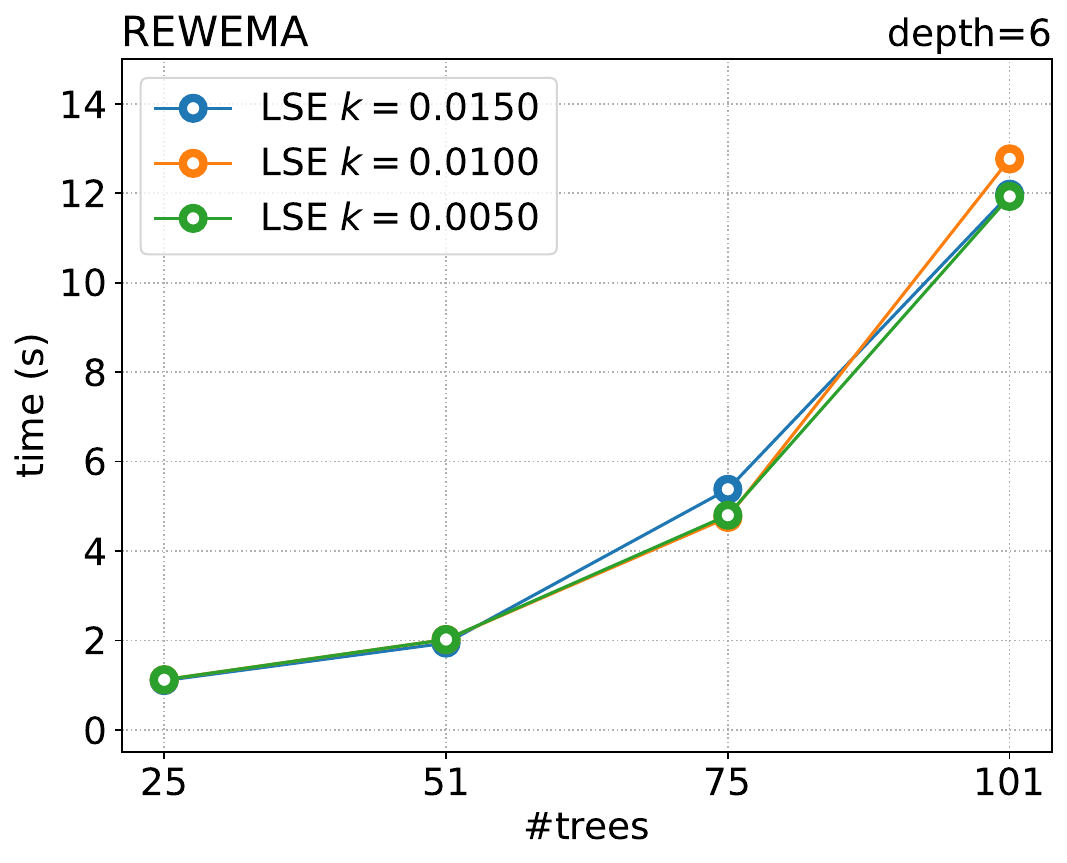}}
  \hfill
    \subfloat[]{\includegraphics[width=.24\textwidth]{./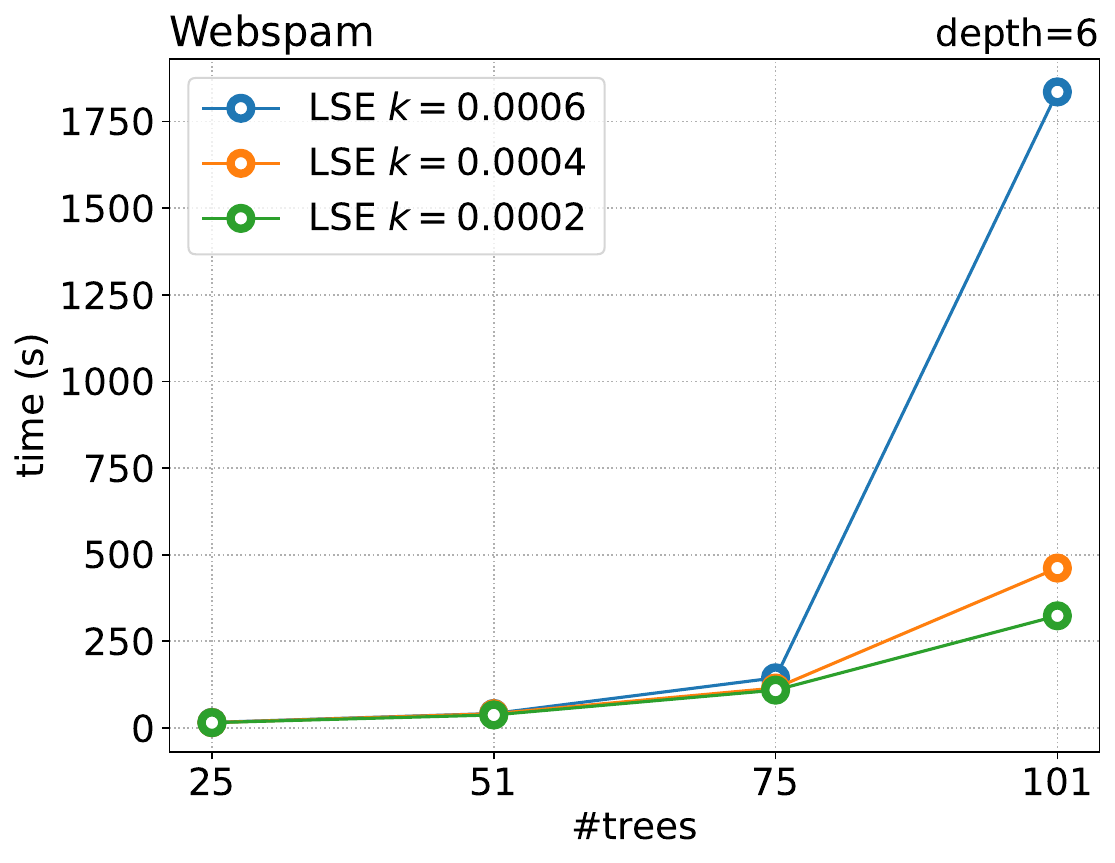}}
  \caption{Efficiency of LSE when varying the number of trees of the large-spread ensemble.}
  \label{fig:time-trees}
\end{figure*}

\begin{figure*}[t]
    \centering
    \subfloat[]{\includegraphics[width=.24\textwidth]{./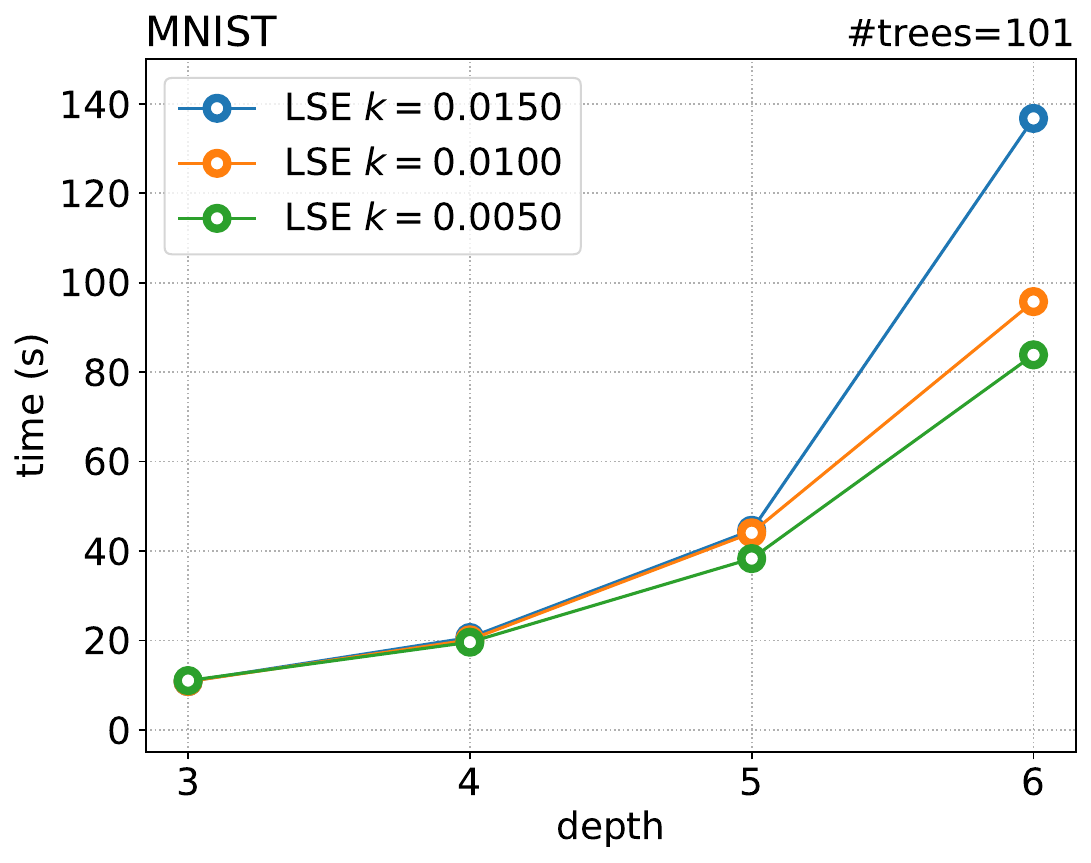}}
  \hfill
    \subfloat[]{\includegraphics[width=.24\textwidth]{./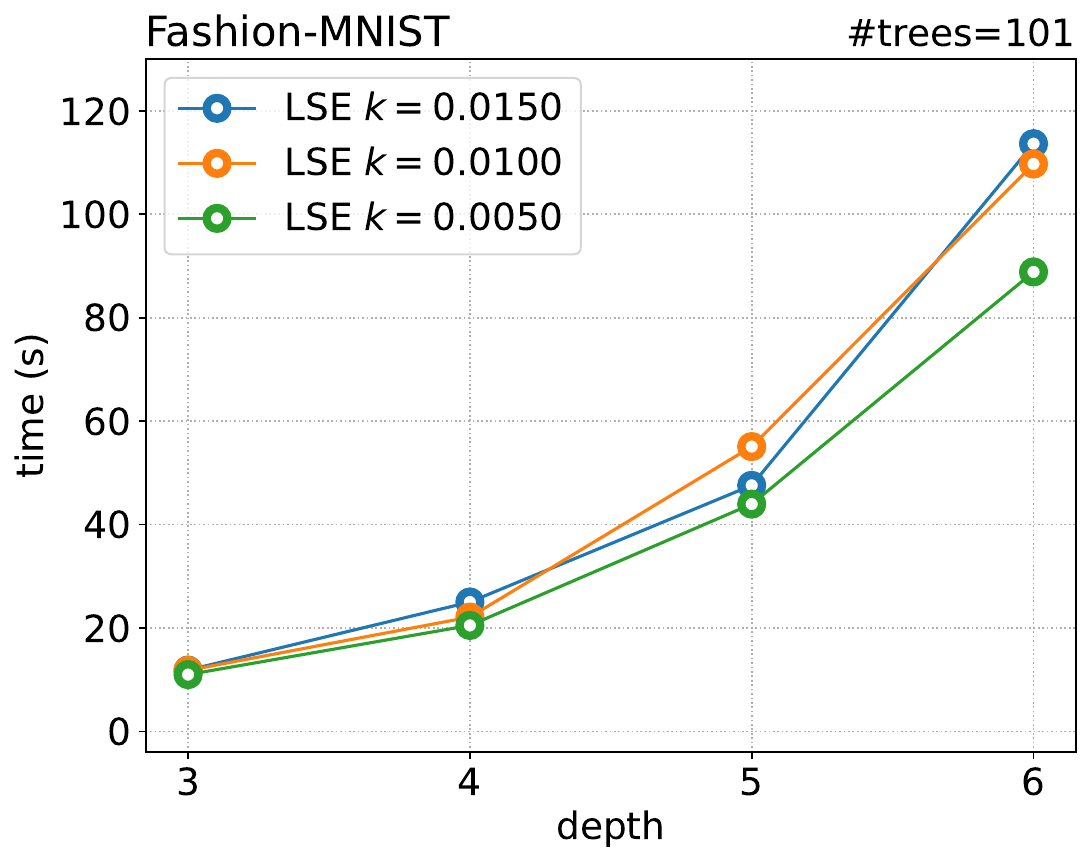}}
  \hfill
    \subfloat[]{\includegraphics[width=.24\textwidth]{./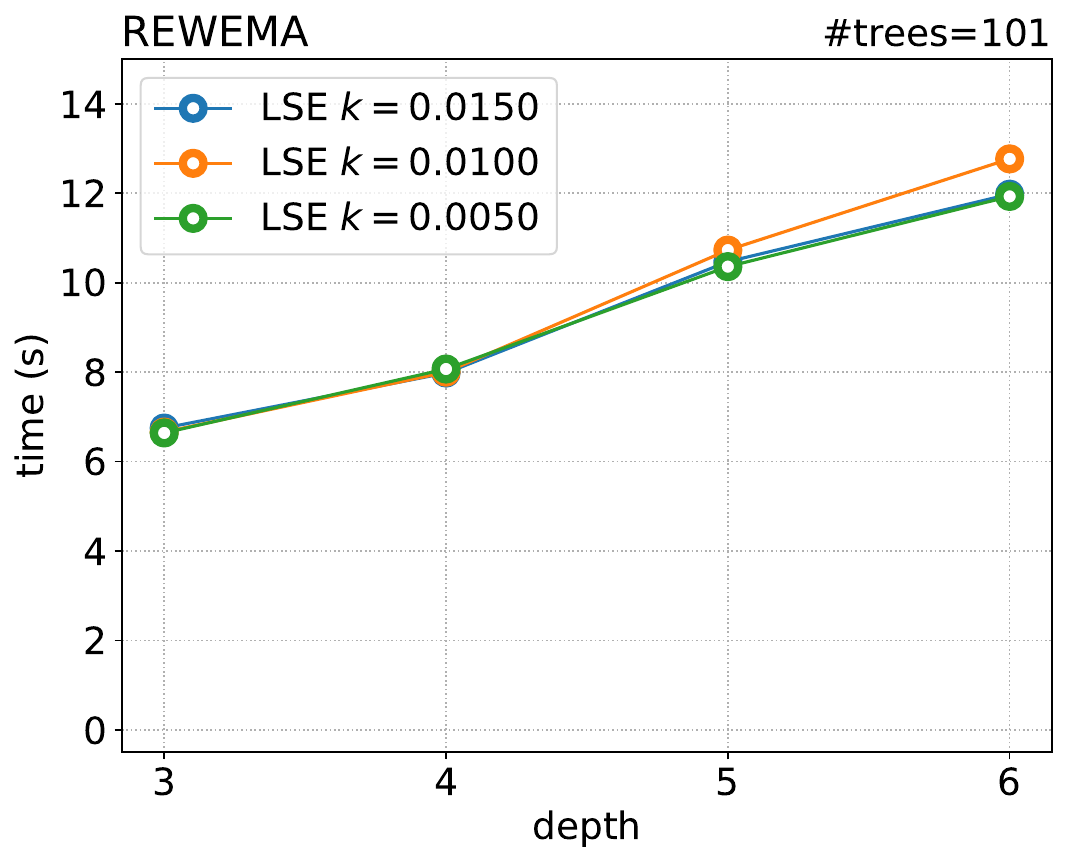}}
  \hfill
    \subfloat[]{\includegraphics[width=.24\textwidth]{./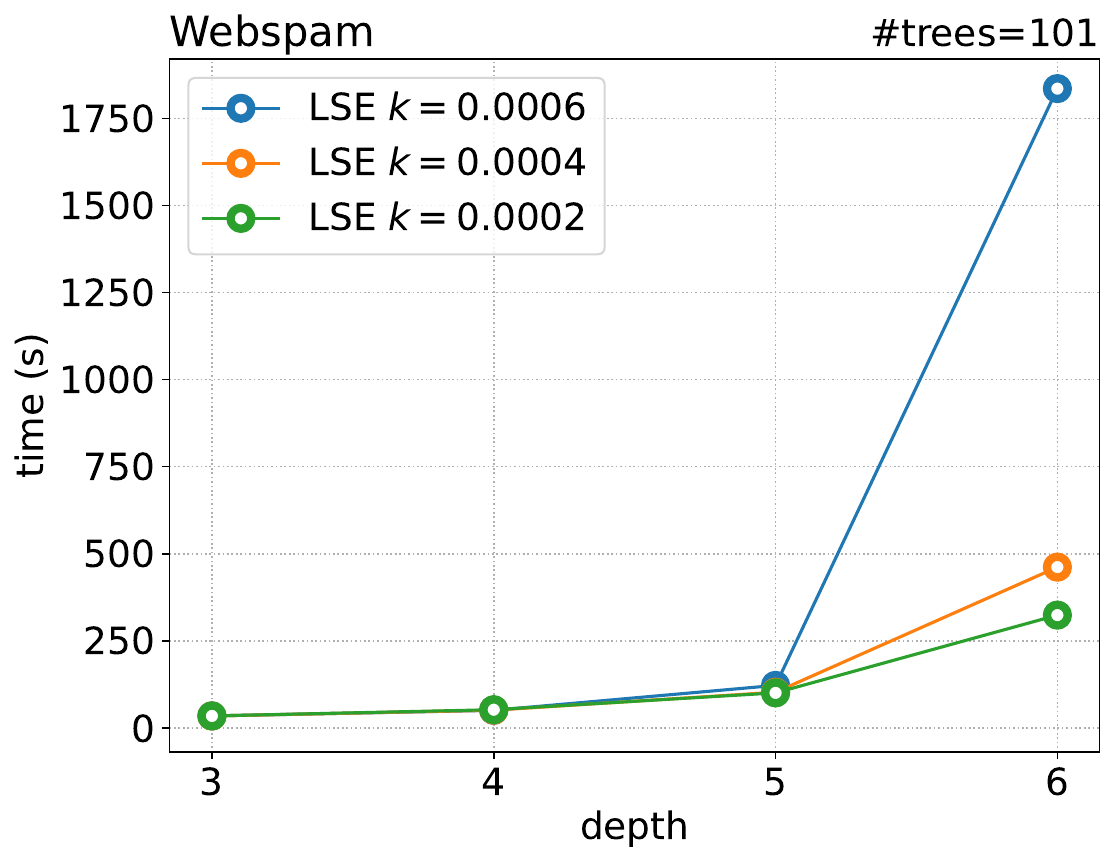}}
  \caption{Efficiency of LSE when varying the maximum depth of the trees of the large-spread ensemble.}
  \label{fig:time-depth}
\end{figure*}

\subsection{Take-Away Messages}
Our experimental evaluation shows that:
\begin{itemize}
    \item Large-spread ensembles sacrifice some predictive power with respect to traditional tree ensembles, yet their accuracy remains way higher than the majority class of the test set. Even better, in several cases the accuracy of large-spread ensembles is equal to the accuracy of traditional tree ensembles.

    \item Large-spread ensembles are generally more robust than traditional tree ensembles. This empirical observation is a useful byproduct of the large-spread condition, which makes it harder to craft evasion attacks which are effective against multiple trees in the ensemble.

    \item Our verification tool for large-spread ensemble CARVE is much more efficient than SILVA, a state-of-the-art verifier for traditional tree ensembles. Improvements are due to both verification time and memory consumption.

    \item \revise{Our training tool LSE is efficient for training large-spread ensembles of moderate size or when considering small adversarial perturbations. The time required for training a large-spread ensemble may increase when the model is large and the considered perturbation is high for the chosen dataset.}
\end{itemize}

Moreover, we showed that SILVA can provide just approximate robustness estimates in some experimental settings, even when provided with extremely high time and memory bounds (10 minutes per instance, 100 GB of RAM). Conversely, CARVE can compute the exact value of robustness using just limited time and memory (1 second per instance, 1 GB of RAM). This shows the effectiveness of the verifiable learning paradigm: models trained with formal verification in mind can be verified in a matter of seconds even on traditional commercial hardware, contrary to traditional machine learning models which cannot be accurately verified even when extremely powerful servers are available.
\section{Related Work}
We already mentioned that prior work studied the complexity of the robustness verification problem for decision tree ensembles~\cite{KantchelianTJ16,WangZCBH20}. This problem was proved to be NP-complete for arbitrary $L_p$-norm attackers, even when restricting the model shape to decision stump ensembles~\cite{Andriushchenko019,WangZCBH20}. To the best of our knowledge, we are the first to identify a specific class of decision tree ensembles enabling robustness verification in polynomial time. Prior work on robustness verification for decision tree ensembles proposed different techniques, such as exploiting equivalence classes extracted from the tree ensemble~\cite{TornblomN20}, integer linear programming~\cite{KantchelianTJ16}, a reduction to the max clique problem~\cite{ChenZS0BH19}, abstract interpretation~\cite{RanzatoZ20, CalzavaraFL20} and satisfiability modulo theory (SMT) solving~\cite{DevosMD21, EinzigerGSS19, SatoKNO20}. Though effective in many cases, these techniques still have to deal with the exponential complexity of the robustness verification problem, so they are bound to fail for large ensembles and complex datasets. We experimentally showed that a state-of-the-art verifier like SILVA~\cite{RanzatoZ20} is much less efficient than our verifiable learning approach, supporting verification in polynomial time, and can only compute approximate robustness estimates in practical cases. Moreover, our LSE training algorithm produces tree ensembles that are in general more robust than the traditional counterparts as a side-effect of imposing that the thresholds of different trees are sufficiently far away. Several papers in the literature discussed new algorithms for training tree ensembles that are robust to evasion attacks~\cite{CalzavaraLMO21, CalzavaraLTAO20, CalzavaraLT19, ChenZBH19, KantchelianTJ16, Andriushchenko019, VosV21, RanzatoZ21, GuoTGZ22, VosV22-1, VosV22-2, Chen0JCJ21}, but our work is complementary to them. Indeed, our primary goal is not enforcing robustness, which is a byproduct of our training algorithm, but supporting efficient robustness verification of the trained models. \revise{We also acknowledge that our work solely focuses on the classic definition of robustness, known as \emph{local} robustness in more recent literature discussing global robustness and related properties~\cite{Chen0QLJW21,CalzavaraCLMO22,LeinoWF21}. This line of research aims to achieve security verification independently of the choice of a specific test set, enhancing the credibility of security proofs. Given that local robustness remains popular and is easier to deal with, we stick to it in this paper and we leave the extension of our framework to global robustness as future work.}

It is worth mentioning that a lot of work has been done on the robustness verification of deep neural networks (DNNs). Classic approaches for exact verification often do not scale to large DNNs, as for tree ensembles, and they are typically based on SMT ~\cite{KatzBDJK17, KatzHIJLLSTWZDK19, HuangKWW17} and integer linear programming ~\cite{BastaniILVNC16, LomuscioM17, TjengXT19, DuttaJST18}. To mitigate the scalability problems of robustness verification, different proposals have been done, such as shrinking the original DNN through pruning~\cite{GuidottiLPT20} and finding specific classes of DNNs that empirically enable more efficient robustness verification~\cite{JiaR20}. Xiao et. al. ~\cite{XiaoTSM19} proposed the idea of \textit{co-designing} model training and verification, i.e., training models that show reasonable accuracy and robustness, while better enabling exact verification. In particular, their work proposes a training algorithm for DNNs that encourages weight sparsity and ReLU stability, two properties that improve the efficiency of verification through SMT solving. There are significant differences between these lines of work and ours. First, prior techniques only provide empirical efficiency guarantees, while our proposal leads to a formal complexity reduction of the robustness verification problem through the design of a polynomial time algorithm. Moreover, our research deals with tree ensembles rather than DNNs.

Finally, we observe that recent work explored the adversarial robustness of model ensembles~\cite{YangLXK0L22}. The main result of this work proved that the combination of ``diversified gradient'' and ``large confidence margin'' are sufficient and necessary conditions for certifiably robust ensemble models. While this result cannot be directly applied to non-differentiable models such as decision tree ensembles, the intuition of diversifying models is similarly captured by our large-spread condition. We plan to explore any intriguing connections with this proposal as future work.
\section{Conclusion}
We introduced the general idea of \emph{verifiable learning}, i.e., the adoption of training algorithms designed to learn restricted model classes amenable for efficient security verification. We applied this idea to decision tree ensembles, identifying the class of \emph{large-spread} ensembles.
We showed that this class of ensembles
admits robustness verification in polynomial time, whereas the problem is NP-hard for general decision tree models.
We then proposed a pruning-based training algorithm to learn large-spread ensembles from traditional decision tree ensembles.
Our experiments on public datasets show that large-spread ensembles \revise{sacrifice a limited amount} of the predictive power of traditional tree ensembles, but their robustness is normally higher and much more efficient to verify. This makes large-spread ensembles appealing in the adversarial setting.

As future work, we plan to investigate the use of verifiable learning also for other popular model classes, e.g., neural networks. Moreover, we want to explore different training algorithms for large-spread ensembles and compare their effectiveness against the pruning-based approach proposed in this paper.

\paragraph{Acknowledgements.} 
We thank the reviewers for their constructive feedback, which has greatly contributed to the improvement of this paper. This research was supported by project SERICS (PE00000014) under the MUR National Recovery and Resilience Plan funded by the European Union - NextGenerationEU and by project EFRA (101093026) under the Horizon Europe programme.

\bibliographystyle{ACM-Reference-Format}
\bibliography{main}

\iffull

\newpage

\appendix
\section{Proof of Theorem~\ref{thm:ensembles}}
\label{sec:proofs}
We introduce the following auxiliary notation:
$\opt(t,p,k,\vec{x},y) = \argmin_{ \vec{\delta} \in S } ||\vec{\delta}||_p$
where $S=\{ \vec{\delta} \,|\, ||\vec{\delta}||_p \in \Call{Reachable}{t,p,k,\vec{x},y}\}$.
% \[
% \opt(t,p,k,\vec{x},y) = \argmin_{||\cdot||_p} \Big\{ \vec{\delta} \,|\, ||\vec{\delta}||_p \in \Call{Reachable}{t,p,k,\vec{x},y} \Big\}.
% \]
Note that $\Call{Reachable}{t,p,k,\vec{x},y}$ visits every leaf of $t$.
Therefore, $\opt(t,p,k,\vec{x},y)$ is undefined iff
$\Call{Reachable}{t,p,k,\vec{x},y}=\varnothing$, i.e.,
$\Call{Reachable}{}$ does not visit
any leaf with label different from $y$.

\begin{lemma}
\label{lem:trees-reach}
Let $\vec{x}$ be an instance with label $y$ and let $\vec{\delta} = \opt(t,p,k,\vec{x},y)$. Then the following properties hold:
\begin{enumerate}
    \item If $\vec{\delta}$ is undefined, then there does not exist any $\vec{z} \in A_{p,k}(\vec{x})$ such that $t(\vec{z}) \neq y$.
    \item If $\vec{\delta}$ is defined, then $\vec{z}_o = \vec{x} + \vec{\delta} \in A_{p,k}(\vec{x})$ and $t(\vec{z}_o) \neq y$.
    Moreover, for every $\vec{z} \in A_{p,k}(\vec{x})$ such that $t(\vec{z}) \neq y$ we have $||\vec{z}_o||_p \leq ||\vec{z}||_p$, i.e.,
    $||\vec{z}_o||_p$ is the attack whose perturbation $\vec{\delta}$ has minimal norm.
\end{enumerate}
\end{lemma}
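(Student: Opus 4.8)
\emph{Proof strategy.} The plan is to first make the geometry behind \textsc{Reachable} explicit and then read both statements off it. For every leaf $\ell=\lambda(y')$ of $t$ let $H_\ell=(l_1,r_1]\times\cdots\times(l_d,r_d]$ be the hyper-rectangle obtained by intersecting the half-spaces imposed by the split nodes on the root-to-$\ell$ path, i.e. exactly the state of the global variable $H$ held by \textsc{Traverse} when it reaches $\ell$ (Eqs.~\ref{eq:update-left}--\ref{eq:update-right}). By a straightforward induction on the structure of $t$ I would establish that an instance $\vec{w}\in\feats$ reaches $\ell$ (hence $t(\vec{w})=y'$) iff $\vec{w}\in H_\ell$, and that the $\{H_\ell\}_\ell$ partition $\feats$. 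In the same induction I would verify that the scalar $\Delta$ held by \textsc{Traverse} at $\ell$ equals $\|\var{dist}(\vec{x},H_\ell)\|_p$: each call to \textsc{Update-Norm} swaps the contribution of the single coordinate that just changed, and by Facts~\ref{fact:pairwise-orthogonality-0}, \ref{fact:pairwise-orthogonality-inf} and~\ref{fact:pairwise-orthogonality} this coordinate-wise bookkeeping reproduces the $L_p$-norm of the whole distance vector. It follows that $\Call{Reachable}{t,p,k,\vec{x},y}$ is precisely the set of values $\|\var{dist}(\vec{x},H_\ell)\|_p$ over leaves $\ell=\lambda(y')$ with $y'\neq y$ and $\|\var{dist}(\vec{x},H_\ell)\|_p\le k$, so that $\opt(t,p,k,\vec{x},y)$, when defined, equals $\var{dist}(\vec{x},H_{\ell^*})$ for a leaf $\ell^*=\lambda(y^*)$, $y^*\neq y$, of minimum such norm.

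The second ingredient is a domination property of $\var{dist}$: for any non-empty hyper-rectangle $H$ and any $\vec{z}\in H$, the perturbation $\vec{\delta}=\var{dist}(\vec{x},H)$ satisfies $|\delta_i|\le|z_i-x_i|$ coordinate-wise, hence $\|\vec{\delta}\|_p\le\|\vec{z}-\vec{x}\|_p$ for every $p$ by monotonicity of $L_p$-norms in the coordinate-wise absolute values. This is an immediate case analysis on Eq.~\ref{eq:lambda}: if $x_i\in(l_i,r_i]$ then $\delta_i=0$; if $x_i>r_i$ then $\delta_i=r_i-x_i$ and $z_i\le r_i$ gives $|z_i-x_i|\ge x_i-r_i=|\delta_i|$; if $x_i\le l_i$ then $\delta_i=l_i-x_i+\varepsilon$ and $z_i>l_i$ gives $z_i-x_i>l_i-x_i$, with the $\varepsilon$ correction chosen exactly so that $\delta_i\le z_i-x_i$ while still $\vec{x}+\vec{\delta}\in H$. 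The same analysis also shows $\vec{x}+\vec{\delta}\in H$, which I will reuse below.

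Given these two facts the claims follow quickly. For (1), if $\opt$ is undefined then by the first step every leaf $\ell$ with label $\neq y$ has $\|\var{dist}(\vec{x},H_\ell)\|_p>k$; were there some $\vec{z}\in A_{p,k}(\vec{x})$ with $t(\vec{z})=y'\neq y$, then $\vec{z}$ lies in $H_\ell$ for the leaf $\ell$ it reaches, and domination gives $\|\vec{z}-\vec{x}\|_p\ge\|\var{dist}(\vec{x},H_\ell)\|_p>k$, contradicting $\vec{z}\in A_{p,k}(\vec{x})$. For (2), with $\vec{\delta}=\var{dist}(\vec{x},H_{\ell^*})$ we get $\vec{z}_o=\vec{x}+\vec{\delta}\in H_{\ell^*}$, so $t(\vec{z}_o)=y^*\neq y$, and $\|\vec{z}_o-\vec{x}\|_p=\|\vec{\delta}\|_p\le k$ since $\|\vec{\delta}\|_p\in\Call{Reachable}{t,p,k,\vec{x},y}$, so $\vec{z}_o\in A_{p,k}(\vec{x})$. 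For optimality, any $\vec{z}\in A_{p,k}(\vec{x})$ with $t(\vec{z})\neq y$ lies in some $H_\ell$ with label $\neq y$, so $\|\vec{z}-\vec{x}\|_p\ge\|\var{dist}(\vec{x},H_\ell)\|_p$; as this value is $\le k$ it belongs to $\Call{Reachable}{t,p,k,\vec{x},y}$ and hence is $\ge\|\vec{\delta}\|_p$ by minimality of $\ell^*$, giving $\|\vec{z}-\vec{x}\|_p\ge\|\vec{\delta}\|_p=\|\vec{z}_o-\vec{x}\|_p$ (I read the stated inequality $\|\vec{z}_o\|_p\le\|\vec{z}\|_p$ as comparing perturbation norms).

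\emph{Main obstacle.} The case analyses and the final chaining are routine; the delicate points are (a) the $\varepsilon$ floating-point correction in Eq.~\ref{eq:lambda}, which must simultaneously guarantee $\vec{x}+\vec{\delta}\in H_{\ell^*}$ (so the constructed attack really reaches $\ell^*$) and $\delta_i\le z_i-x_i$ on the open side (so domination is not lost), and (b) the inductive bridge tying together the per-node hyper-rectangle semantics of Eqs.~\ref{eq:update-left}--\ref{eq:update-right}, the single global hyper-rectangle actually maintained by Algorithm~\ref{alg:tree-annotation}, and the scalar $\Delta$, where Facts~\ref{fact:pairwise-orthogonality-0}--\ref{fact:pairwise-orthogonality} are doing the real work.
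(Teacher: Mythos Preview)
Your proposal is correct and follows the same semantic route as the paper: both arguments rest on the fact that \textsc{Reachable} enumerates, for every wrong-label leaf $\ell$, the $L_p$-norm of $\var{dist}(\vec{x},H_\ell)$, and that this quantity is a lower bound on the cost of any attack reaching $\ell$. The paper's proof, however, is a two-line sketch that simply \emph{asserts} the latter fact (``$\vec{\delta}$ is the minimum perturbation'') and ends with a spurious equivalence $\|\vec{z}_o-\vec{x}\|_p\le\|\vec{z}-\vec{x}\|_p\iff\|\vec{z}_o\|_p\le\|\vec{z}\|_p$, which is false in general and which you rightly read as a typo for the perturbation-norm inequality.

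Your version actually supplies the missing content: the coordinate-wise domination $|\delta_i|\le|z_i-x_i|$ for any $\vec{z}\in H$, derived by case analysis on Eq.~\ref{eq:lambda}, is exactly what is needed to justify both that $\vec{x}+\vec{\delta}\in H_{\ell^*}$ (so the attack lands in the intended leaf) and that $\var{dist}$ really minimises the $L_p$-cost over $H_\ell$. You also make explicit the inductive link between the global $(H,\Delta)$ state maintained by Algorithm~\ref{alg:tree-annotation} and the per-leaf hyper-rectangle semantics of Eqs.~\ref{eq:update-left}--\ref{eq:update-right}, which the paper leaves implicit. So: same approach, but your write-up is the one that would survive a careful referee, and your identification of the $\varepsilon$ handling and the $\Delta$-invariant as the two places requiring care is accurate.
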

\begin{proof}
% This result was originally presented for the $L_\infty$-norm in~\cite{ChenZS0BH19}. Later work generalized this result to an arbitrary $L_p$-norm~\cite{WangZCBH20}.
(1) If $\vec{\delta}$ is undefined then
$\Call{Reachable}{t,p,k,\vec{x},y}=\varnothing$.
This means there is no perturbation $\vec{\delta'}=\vec{z}-\vec{x}$
computed during a call to \Call{Reachable}{}
such that $||\vec{\delta'}|| \leq k$
and is able to push $\vec{z}$ into a leaf with label different from $y$.
That is, $t(\vec{z}) \neq y$.
(2) If $\vec{\delta}$ is defined, then $||\vec{z_o}-\vec{x}|| = ||\vec{\delta}|| \leq k$,
hence $\vec{z_o} \in A_{p,k}(\vec{x})$.
In particular, $\vec{z}_o = \vec{x}+\vec{\delta}$ is the attack of minimum cost
since $\vec{\delta}$ is the perturbation whose norm is minimal, hence it must be $t(\vec{z_o}) \neq y$
and $||\vec{\delta}||_p \leq ||\vec{\delta'}||_p$ for any other $\vec{\delta'}$
such that $\vec{z}=\vec{x}+\vec{\delta'} \in A_{p,k}(\vec{x})$.
But $||\vec{\delta}||_p \leq ||\vec{\delta'}||_p \iff ||\vec{z_o}-\vec{x}||_p \leq ||\vec{z}-\vec{x}||_p \iff ||\vec{z_o}||_p \leq ||\vec{z}||_p$.
\end{proof}

In the following we refer to $\vec{\delta} = \opt(t,p,k,\vec{x},y)$
that is the perturbation whose norm is minimal as just the ``minimal perturbation''.

\begin{lemma}
\label{lem:spread}
Assume $\spread(T) > 2k$ and consider two distinct trees $t,t' \in T$. Let $\vec{x}$ be an instance with label $y$
and let $\vec{\delta} = \opt(t,p,k,\vec{x},y)$ and $\vec{\delta}' = \opt(t',p,k,\vec{x},y)$ be defined, then $\supp(\vec{\delta}) \cap \supp(\vec{\delta}') = \varnothing$.
\end{lemma}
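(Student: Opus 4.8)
My plan is to argue by contradiction: I assume some feature $f$ satisfies $\delta_f \neq 0$ and $\delta'_f \neq 0$ (i.e.\ $f \in \supp(\vec\delta) \cap \supp(\vec\delta')$) and then exhibit a threshold on feature $f$ in $t$ and one on feature $f$ in $t'$ that lie within $L_p$-distance $2k$ of each other, contradicting $\spread(T) > 2k$.

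The main step I would isolate is the following local claim: \emph{if $\delta_f \neq 0$, then $t$ contains a node $\sigma(f,v)$ with $|x_f - v| \leq k$}. To prove it, I would recall that, by definition of $\opt$, $\vec\delta = \var{dist}(\vec x, H)$ for a hyper-rectangle $H = (l_1,r_1]\times\cdots\times(l_d,r_d]$ annotating some leaf $\lambda(y')$ of $t$ with $y'\neq y$, and that $H$ arises from $(-\infty,+\infty]^d$ by intersecting, along the root-to-leaf path, with half-lines $(-\infty,v]$ or $(v,+\infty)$ prescribed by the traversed nodes $\sigma(f,v)$; hence every \emph{finite} endpoint of $(l_f,r_f]$ is the threshold of some $\sigma(f,\cdot)\in t$ on that path. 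Since $\delta_f = \var{dist}(\vec x, H_f)\neq 0$, Eq.~\ref{eq:lambda} forces either $x_f \leq l_f$ with $l_f$ finite, or $x_f > r_f$ with $r_f$ finite; let $v$ denote that finite endpoint, so $\sigma(f,v) \in t$. A short check on Eq.~\ref{eq:lambda} shows $x_f + \delta_f \in (l_f,r_f]$ with $v$ lying weakly between $x_f$ and $x_f + \delta_f$, so $|x_f - v| \leq |\delta_f|$. Finally, by Lemma~\ref{lem:trees-reach}(2) the point $\vec x + \vec\delta$ belongs to $A_{p,k}(\vec x)$, so $\|\vec\delta\|_p \leq k$; and since a single coordinate obeys $|\delta_f| \leq \|\vec\delta\|_p$ (for $p\in\mathbb{N}$, $p\geq 1$) and $|\delta_f|\leq\|\vec\delta\|_\infty$ (for $p=\infty$), I conclude $|x_f - v| \leq |\delta_f| \leq k$.

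I would then apply the claim to $t$ and to $t'$, obtaining $\sigma(f,v)\in t$ and $\sigma(f,v')\in t'$ with $|x_f - v|\leq k$ and $|x_f - v'|\leq k$, whence $|v - v'| \leq 2k$ by the triangle inequality. Since $t\neq t'$, the number $\|v-v'\|_p = |v-v'|$ is one of the values whose minimum is $\spread(T)$, so $|v-v'| \geq \spread(T) > 2k$, a contradiction; hence $\supp(\vec\delta)\cap\supp(\vec\delta') = \varnothing$. It remains to dispatch the degenerate case $p = 0$, for which the bound $|\delta_f|\leq\|\vec\delta\|_p$ need not hold: there $\|v-v'\|_0\in\{0,1\}$, so $\spread(T) > 2k$ forces either $k < \tfrac12$ --- in which case $A_{0,k}(\vec x)=\{\vec x\}$, every defined $\opt$ equals $\vec 0$, and both supports are empty --- or else no two trees of $T$ share a feature, in which case $\supp(\vec\delta)$ and $\supp(\vec\delta')$, each contained in the set of split features of its own tree, are trivially disjoint.

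\textbf{Main obstacle.} The step I expect to require the most care is the local claim: identifying each relevant hyper-rectangle endpoint with an actual node threshold of $t$, verifying that reaching the leaf forces $x_f$ to cross that threshold at a cost of at most $|\delta_f|$, and then bounding $|\delta_f|$ by $\|\vec\delta\|_p \leq k$; the $p = 0$ anomaly is a minor, separate nuisance handled as above.
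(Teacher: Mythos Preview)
Your proof is correct and follows essentially the same contradiction-and-triangle-inequality route as the paper: assume a shared feature $f$ in both supports, locate a threshold on $f$ in each tree within distance $k$ of $x_f$, and conclude $|v-v'|\leq 2k$ against the large-spread hypothesis. Your version is in fact more careful than the paper's---you explicitly tie the relevant threshold to a finite endpoint of the leaf's hyper-rectangle and justify $|\delta_f|\leq\|\vec\delta\|_p$, and you separately dispatch the $p=0$ case, which the paper's proof simply glosses over.
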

\begin{proof}
By contradiction. Assume that there exists a feature $f$ such that $\delta_f \neq 0$ and $\delta_f' \neq 0$. Since $\vec{\delta},\vec{\delta}'$ are the minimal perturbations with respect to $||\cdot||_p$ by Lemma~\ref{lem:trees-reach}, the feature $f$ must be tested when performing the predictions $t(\vec{x} + \vec{\delta})$ and $t'(\vec{x} + \vec{\delta}')$ respectively, i.e., the prediction $t(\vec{x} + \vec{\delta})$ must perform a test $x_f \leq v$ and the prediction $t'(\vec{x} + \vec{\delta}')$ must perform a test $x_f \leq v'$ for some thresholds $v,v'$.
% Since $\vec{\delta},\vec{\delta'}$ are minimal attacks, the outcome of the two tests must be flipped with respect to $\vec{x}$, hence we must have $||v - x_f||_p \leq k$ and $||v' - x_f||_p \leq k$.
Since $||v' - x_f||_p = ||x_f - v'||_p$, we get $|| v - v' ||_p \leq ||v - x_f||_p + ||x_f - v'||_p$ by triangle inequality.
Since $||v - x_f||_p \leq k$ and $||v' - x_f||_p \leq k$ because both $\vec{\delta},\vec{\delta'}$ are valid
minimal perturbations, this implies $|| v - v' ||_p \leq 2k$ which contradicts the hypothesis $\spread(T) > 2k$.
\end{proof}

\begin{lemma}
\label{lem:spread2}
Assume $\spread(T) > 2k$ and consider two distinct trees $t,t' \in T$. Let $\vec{x}$ be an instance with label $y$
and let $\vec{z} \in A_{p,k}(\vec{x})$ be such that $t(\vec{z}) \neq y$ and $t'(\vec{z}) \neq y$.
Then there exist $\vec{\delta}$ and $\vec{\delta}'$ such:
\begin{enumerate}
    \item $\vec{z}=\vec{x}+\vec{\delta}+\vec{\delta}'$ with $\supp(\vec{\delta}) \cap \supp(\vec{\delta}') = \varnothing$;
    \item $t(\vec{x}+\vec{\delta}) \neq y$;
    \item $t'(\vec{x}+\vec{\delta}') \neq y$.
\end{enumerate}
\end{lemma}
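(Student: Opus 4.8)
The plan is to decompose the attack vector $\vec{\gamma} := \vec{z}-\vec{x}$ coordinate by coordinate, sending each component either to $\vec{\delta}$ or to $\vec{\delta}'$, and to choose this assignment so that both $t$ and $t'$ retain the prediction path they take on $\vec{z}$. The basic quantitative ingredient is that the attacker's budget is small relative to the spread: since $\vec{z} \in A_{p,k}(\vec{x})$ we have $|\gamma_f| \leq ||\vec{z}-\vec{x}||_p \leq k$ for every feature $f$ when $p \in \N \cup \{\infty\}$ (the $L_0$ case is degenerate, since $\spread(T) > 2k$ under the $L_0$ metric forces either $\vec{z}=\vec{x}$ or the absence of any feature shared by two distinct trees, and the statement is then immediate). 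Unlike Lemma~\ref{lem:spread}, which only decomposes the \emph{minimal} perturbations, here we are handed a specific $\vec{z}$, so I would work directly with prediction paths. Concretely, let $F_t$ (resp.\ $F_{t'}$) be the set of features tested along the root-to-leaf path taken by $\vec{z}$ in $t$ (resp.\ in $t'$), and say that $t$ is \emph{safe for} $f \in F_t$ if $x_f$ and $z_f$ lie on the same side of every threshold $v$ at which $f$ is tested on that path, i.e.\ $x_f \leq v \iff z_f \leq v$; define "$t'$ safe for $f$" analogously for $f \in F_{t'}$.

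The crux is the claim that every feature $f \in F_t \cap F_{t'}$ satisfies: $t$ is safe for $f$, or $t'$ is safe for $f$. I would prove this by contradiction. If neither holds, there are a threshold $v$ of $f$ on $t$'s path and a threshold $v'$ of $f$ on $t'$'s path that each separate $x_f$ from $z_f$; by case analysis on the $\leq/>$ outcome of the tests, both $v$ and $v'$ then lie in the half-open interval with endpoints $x_f$ and $z_f$, whose length is $|\gamma_f|\leq k$. Hence $||v-v'||_p = |v-v'| < |\gamma_f| \leq k$, contradicting $\spread(T) > 2k$, because $v$ is a threshold of feature $f$ appearing in $t$, $v'$ one appearing in $t'$, and $t \neq t'$.

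Given the claim, I would build $\vec{\delta},\vec{\delta}'$ as follows: put $\gamma_f$ into $\vec{\delta}$ if $f \in F_t \setminus F_{t'}$, into $\vec{\delta}'$ if $f \in F_{t'} \setminus F_t$, and if $f \in F_t \cap F_{t'}$ put $\gamma_f$ into $\vec{\delta}'$ when $t$ is safe for $f$ and into $\vec{\delta}$ otherwise (so that $t'$ is safe for $f$ by the claim); the remaining coordinates go to $\vec{\delta}$, arbitrarily. Then $\delta_f + \delta'_f = \gamma_f$ for every $f$ and at most one of $\delta_f,\delta'_f$ is nonzero, so $\vec{z}=\vec{x}+\vec{\delta}+\vec{\delta}'$ with $\supp(\vec{\delta}) \cap \supp(\vec{\delta}') = \emptyset$, which is property~(1). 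For property~(2) I would show, by induction along the path taken by $\vec{z}$ in $t$, that $\vec{x}+\vec{\delta}$ follows exactly the same path: at a node testing $f$ at $v$ we have $f \in F_t$, and either $\delta_f = \gamma_f$, so $(\vec{x}+\vec{\delta})_f = z_f$ and the test has the same outcome, or $\delta_f = 0$, which by construction happens only when $t$ is safe for $f$, so $(\vec{x}+\vec{\delta})_f = x_f$ lies on the same side of $v$ as $z_f$; therefore $t(\vec{x}+\vec{\delta}) = t(\vec{z}) \neq y$, and property~(3) follows by the same argument with the roles of $t$ and $t'$ swapped. The main obstacle is the claim in the second paragraph: it is the only place the large-spread hypothesis enters, and the only real care needed there is in matching the $\leq/>$ conventions of the decision tests and in allowing a feature to be tested several times along a single path (which is precisely why the "safe" predicate is required to hold at all thresholds of that feature on the path, not just one).
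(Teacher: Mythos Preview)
Your proof is correct and takes essentially the same approach as the paper: both partition the coordinates of $\vec{z}-\vec{x}$ according to which tree's prediction path they ``unsafely'' affect, and both rest on the key observation (your ``claim'') that the large-spread hypothesis forbids any single feature from being unsafe for both $t$ and $t'$ simultaneously. The paper's construction is slightly more asymmetric---it traverses only $t$, collects the features on that path where $x_f$ and $z_f$ land on opposite sides of a threshold into a set $F$, and sends those coordinates to $\vec{\delta}$ and all others to $\vec{\delta}'$---but the underlying idea and the place where large-spread enters are identical to yours.
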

\begin{proof}
Let $\vec{z}=\vec{x}+\vec{\delta}^*$. We show an algorithm that constructs $\vec{\delta}$ and $\vec{\delta}'$ with the desired properties (1) -- (3). In particular, the algorithm identifies how to split $\vec{\delta}^*$ into $\vec{\delta}$ and $\vec{\delta}'$ by partitioning the set of features. The algorithm takes as input the decision tree $t$, $\vec{x}$ and $\vec{z}$, and operates as follows:
\begin{itemize}
    \item Initialize the set $F$ to $\emptyset$;
    \item Let the root of the tree be $\sigma(f,v,t_l,t_r)$;
    \item If $x_f \leq v$ and $z_f > v$: let $F = F \cup \{f\}$ and recursively visit $t_r$;
    \item If $x_f > v$ and $z_f \leq v$: let $F = F \cup \{f\}$ and recursively visit $t_l$;
    \item If $x_f \leq v$ and $z_f \leq v$: recursively visit $t_l$;
    \item If $x_f > v$ and $z_f > v$: recursively visit $t_r$.
\end{itemize}

The algorithm terminates by returning $F$ when the visit reaches a leaf. We then construct $\vec{\delta}$ by setting the features in $F$ to their corresponding value in $\vec{\delta}^*$ and setting the other features to 0; conversely, we construct $\vec{\delta'}$ by setting all the features not in $F$ to their corresponding value in $\vec{\delta}^*$ and setting the other features to 0. We can show that $\vec{\delta}$ and $\vec{\delta}'$ have the desired properties as follows:
\begin{enumerate}
    \item We have $\supp(\vec{\delta}) \cap \supp(\vec{\delta'}) = \emptyset$ by construction. The fact that $\vec{\delta}^* = \vec{\delta} + \vec{\delta}'$ follows by definition of $\vec{\delta}$ and $\vec{\delta'}$, because all the components of $\vec{\delta}^*$ are distributed between them;
    
    \item We have $t(\vec{x} + \vec{\delta}) \neq y$, because $t(\vec{x} + \vec{\delta})$ follows the same prediction path of $t(\vec{z})$ by construction; 

    \item This property follows from the large-spread assumption. In particular, we define $\vec{z}'$ such that $\forall i \in F: z_i' = x_i$ and $\forall j \not\in F: z_j' = z_j$. We have $t'(\vec{z}') \neq y$, because $t'(\vec{z}')$ follows the same prediction path of $t'(\vec{z})$ by construction. Indeed, features in $F$ cannot affect the prediction path of $t'(\vec{z})$ w.r.t. $t'(\vec{x})$, since such features already affect the prediction path of $t(\vec{z})$ w.r.t. $t(\vec{x})$ and $T$ is large-spread. The property $t'(\vec{x} + \vec{\delta}') \neq y$ follows by observing that $\vec{x} + \vec{\delta}' = \vec{z}'$.
\end{enumerate}
\end{proof}

\begin{lemma}
\label{lem:good-spread}
Assume $\spread(T) > 2k$ and pick any $T' \subseteq T$. Let $\vec{x}$ be an instance with label $y$ and assume that $\vec{\delta}_i = \opt(t_i,p,k,\vec{x},y)$ is defined for every tree $t_i \in T'$.
Then for every $\vec{z} \in A_{p,k}(\vec{x})$ such that $\forall t_i \in T': t_i(\vec{z}) \neq y$ we have $||\vec{x} + \sum_i \vec{\delta}_i||_p \leq ||\vec{z}||_p$.
\end{lemma}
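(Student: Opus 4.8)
The plan is to show that $\sum_i \vec{\delta}_i$ has minimum $L_p$-norm among all perturbations that simultaneously flip every tree in $T'$ away from $y$; here, as in the proofs of Lemmas~\ref{lem:trees-reach}--\ref{lem:spread2}, a norm of the form $\|\vec{x}+\vec{w}\|_p$ is understood as the perturbation norm $\|\vec{w}\|_p$ relative to $\vec{x}$, and similarly $\|\vec{z}\|_p$ stands for $\|\vec{z}-\vec{x}\|_p$. Fix an arbitrary $\vec{z} \in A_{p,k}(\vec{x})$ with $t_i(\vec{z}) \ne y$ for all $t_i \in T'$ (if no such $\vec{z}$ exists the statement is vacuous). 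I would first decompose the perturbation $\vec{z}-\vec{x}$ into one disjoint-support piece per tree, each of which is on its own a valid attack against the corresponding tree, and then bound $\|\sum_i \vec{\delta}_i\|_p$ against $\|\vec{z}-\vec{x}\|_p$ using the minimality of $\opt$ together with the orthogonality Facts~\ref{fact:pairwise-orthogonality-0},~\ref{fact:pairwise-orthogonality-inf}, and~\ref{fact:pairwise-orthogonality}.

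For the decomposition I would run, for each $t_i \in T'$, the feature-partitioning procedure from the proof of Lemma~\ref{lem:spread2} on the inputs $t_i$, $\vec{x}$, $\vec{z}$: follow the prediction path of $t_i(\vec{z})$ from the root and collect into a set $F_i$ every feature $f$ labelling a node $\sigma(f,v)$ on that path at which $\vec{x}$ and $\vec{z}$ take different branches, i.e. $\min(x_f,z_f)\le v<\max(x_f,z_f)$. Let $\vec{\gamma}_i$ be the restriction of $\vec{z}-\vec{x}$ to $F_i$ (all other components $0$), so $\supp(\vec{\gamma}_i)\subseteq F_i$. The key claim — the only place the large-spread hypothesis enters here — is that the $F_i$ are pairwise disjoint: if some feature $f$ lay in both $F_i$ and $F_j$ with $i\ne j$, there would be thresholds $v$ in $t_i$ and $v'$ in $t_j$ on feature $f$ both inside $[\min(x_f,z_f),\max(x_f,z_f))$, an interval of width $|z_f-x_f|\le\|\vec{z}-\vec{x}\|_p\le k$; hence $\|v-v'\|_p=|v-v'|\le k<\spread(T)$, contradicting Definition~\ref{def:spread}. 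Given disjointness, a node-by-node comparison of prediction paths shows $t_i(\vec{x}+\vec{\gamma}_i)$ reaches the same leaf as $t_i(\vec{z})$ — along that path the features in $F_i$ take exactly their value in $\vec{z}$, while features outside $F_i$ keep their value in $\vec{x}$ and, by definition of $F_i$, were not causing branch divergence anyway — so $t_i(\vec{x}+\vec{\gamma}_i)\ne y$; and $\vec{x}+\vec{\gamma}_i\in A_{p,k}(\vec{x})$, since discarding coordinates never increases an $L_p$-norm, so $\|\vec{\gamma}_i\|_p\le\|\vec{z}-\vec{x}\|_p\le k$.

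To conclude I would chain three observations. By Lemma~\ref{lem:trees-reach}(2), $\vec{\delta}_i=\opt(t_i,p,k,\vec{x},y)$ has minimum $L_p$-norm among perturbations in $A_{p,k}(\vec{x})$ flipping $t_i$, so $\|\vec{\delta}_i\|_p\le\|\vec{\gamma}_i\|_p$ for every $i$. By Lemma~\ref{lem:spread} the vectors $\{\vec{\delta}_i\}_i$ have pairwise disjoint supports, hence $\|\sum_i\vec{\delta}_i\|_p=\bigoplus_i\|\vec{\delta}_i\|_p$ by Facts~\ref{fact:pairwise-orthogonality-0},~\ref{fact:pairwise-orthogonality-inf}, and~\ref{fact:pairwise-orthogonality}. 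Finally the $\vec{\gamma}_i$ also have pairwise disjoint supports (being supported on the disjoint $F_i$), so $\bigoplus_i\|\vec{\gamma}_i\|_p=\|\sum_i\vec{\gamma}_i\|_p$, and $\sum_i\vec{\gamma}_i$ is the restriction of $\vec{z}-\vec{x}$ to $\bigcup_i F_i$, giving $\|\sum_i\vec{\gamma}_i\|_p\le\|\vec{z}-\vec{x}\|_p$. Since $\bigoplus$ is monotone non-decreasing in each argument (immediate from Eq.~\ref{eq:operator_oplus} in all of the cases $p=0$, $p\in\mathbb{N}$, $p=\infty$), these combine to $\|\sum_i\vec{\delta}_i\|_p=\bigoplus_i\|\vec{\delta}_i\|_p\le\bigoplus_i\|\vec{\gamma}_i\|_p\le\|\vec{z}-\vec{x}\|_p$, which is the claimed inequality in the paper's notation.

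The main obstacle is the decomposition step: the pairwise disjointness of the $F_i$ and the check that $t_i(\vec{x}+\vec{\gamma}_i)\ne y$ are what actually carry the large-spread condition into the argument, lifting Lemma~\ref{lem:spread2} from two trees to an arbitrary subset $T'$. Once that is in place the rest is bookkeeping with $\bigoplus$; the only mild care needed is the uniform treatment of the three regimes of $p$ — in each case one uses monotonicity of $\bigoplus$ and the fact that zeroing coordinates cannot increase an $L_p$-norm — and noting that the case $p=0$ is degenerate, since $\spread(T)>2k$ with $p=0$ forces $k=0$ and hence a trivial attacker.
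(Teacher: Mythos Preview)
Your proposal is correct and follows essentially the same approach as the paper: decompose the global perturbation $\vec{z}-\vec{x}$ into per-tree pieces with disjoint support, bound each piece below by $\|\vec{\delta}_i\|_p$ via Lemma~\ref{lem:trees-reach}, and recombine using orthogonality (Lemma~\ref{lem:spread} and the Facts). You are in fact more careful than the paper on two points---you explicitly extend the two-tree decomposition of Lemma~\ref{lem:spread2} to an arbitrary $T'$ (the paper simply invokes Lemma~\ref{lem:spread2} as if it already covered $|T'|$ trees), and you correctly note that $\sum_i\vec{\gamma}_i$ is only a restriction of $\vec{z}-\vec{x}$, so one gets $\|\sum_i\vec{\gamma}_i\|_p\le\|\vec{z}-\vec{x}\|_p$ rather than the exact equality $\vec{\delta}=\sum_i\vec{\delta}'_i$ the paper asserts.
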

\begin{proof}
Let $\vec{\delta} = \vec{z} - \vec{x}$.
We want to show that
$||\sum_i \vec{\delta}_i||_p \leq ||\vec{\delta}||_p$, which yields the conclusion.
Since $\vec{\delta}$ is a perturbation inducing an attack against all trees
in $T'$ and $T'$ is large-spread, then $\vec{\delta}$ can be written as the sum of $|T'|$ pairwise-orthogonal
perturbations $\{\vec{\delta'_i}\}_i$ by Lemma~\ref{lem:spread2}, 
each inducing an attack against $t_i \in T'$, i.e., $\vec{\delta}=\sum_i \vec{\delta'_i}$.
Hence $||\vec{\delta}||_p = ||\sum_i \vec{\delta'_i}||_p = \bigoplus_i ||\vec{\delta'_i}||_p$.
By Lemma~\ref{lem:trees-reach} (part 2), it must be 
$||\vec{\delta'_i}||_p \geq ||\vec{\delta_i}||_p$ because
$\vec{\delta_i}=\opt(t_i,p,k,\vec{x},y)$.
Since $\{ \vec{\delta_i} \}_i$ are also pairwise-orthogonal by Lemma~\ref{lem:spread}, we conclude $||\vec{\delta}||_p = \bigoplus_i ||\vec{\delta'_i}||_p \geq \bigoplus_i ||\vec{\delta_i}||_p = ||\sum_i \vec{\delta}_i||_p$. 
\end{proof}

\begin{lemma}
\label{lem:soundness}
If $\Call{Stable}{T,p,k,\vec{x},y}$ returns True and $\spread(T) > 2k$, then $T$ is stable on $\vec{x}$ against $A_{p,k}$.
\end{lemma}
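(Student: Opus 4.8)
The plan is to prove the statement directly: assuming $\Call{Stable}{T,p,k,\vec{x},y}$ returns True and $\spread(T) > 2k$, I would show that $T(\vec{z}) = y$ for every $\vec{z} \in A_{p,k}(\vec{x})$. Since $\vec{x} \in A_{p,k}(\vec{x})$ (as $0 \leq k$), this in particular gives $T(\vec{x}) = y$, so $T(\vec{z}) = T(\vec{x})$ for all adversarial manipulations, which is exactly stability. The first step is to read off what the algorithm computes: by Lemma~\ref{lem:trees-reach}, a tree $t_i$ increments $\var{num\_unstable\_trees}$ precisely when $\opt(t_i,p,k,\vec{x},y)$ is defined, and in that case the value stored is $\Delta_i = \min\Call{Reachable}{t_i,p,k,\vec{x},y} = ||\opt(t_i,p,k,\vec{x},y)||_p$. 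Write $U$ for this set of ``unstable'' trees, so that $|U| = \var{num\_unstable\_trees}$.

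Next I would split on which branch lets the algorithm return True. If $|U| < \frac{m-1}{2}+1$, then for any $\vec{z} \in A_{p,k}(\vec{x})$ the set of trees with $t_i(\vec{z}) \neq y$ is contained in $U$ (again by Lemma~\ref{lem:trees-reach}), hence has size at most $\frac{m-1}{2}$; since $m$ is odd the remaining $\geq \frac{m+1}{2}$ trees form a majority predicting $y$, so $T(\vec{z}) = y$. This case is immediate.

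The substantive case is $|U| \geq \frac{m-1}{2}+1$ together with $\Delta = \bigoplus_{i=0}^{(m-1)/2+1}\Delta_i > k$, where the $\Delta_i$ have been sorted non-decreasingly. Suppose towards a contradiction that some $\vec{z} \in A_{p,k}(\vec{x})$ makes at least $\frac{m-1}{2}+1$ trees misclassify. Choose $T' \subseteq U$ with $|T'| = \frac{m-1}{2}+1$ and $t_i(\vec{z}) \neq y$ for every $t_i \in T'$, and set $\vec{\delta}_i = \opt(t_i,p,k,\vec{x},y)$. By Lemma~\ref{lem:spread} the $\vec{\delta}_i$ are pairwise orthogonal, so Facts~\ref{fact:pairwise-orthogonality-0}, \ref{fact:pairwise-orthogonality-inf} and \ref{fact:pairwise-orthogonality} give $||\sum_{t_i \in T'}\vec{\delta}_i||_p = \bigoplus_{t_i \in T'}||\vec{\delta}_i||_p = \bigoplus_{t_i \in T'}\Delta_i$. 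On the other hand, Lemma~\ref{lem:good-spread} applied to this $T'$ yields $||\sum_{t_i \in T'}\vec{\delta}_i||_p \leq ||\vec{z}-\vec{x}||_p \leq k$. Finally, because $\bigoplus$ is a symmetric and coordinatewise monotone non-decreasing function of its scalar arguments, and the algorithm aggregates precisely the $\frac{m-1}{2}+1$ \emph{smallest} of the $\Delta_i$, we obtain $\Delta \leq \bigoplus_{t_i \in T'}\Delta_i \leq k$, contradicting $\Delta > k$. Hence every $\vec{z} \in A_{p,k}(\vec{x})$ flips at most $\frac{m-1}{2}$ trees, a majority still votes $y$, and $T(\vec{z}) = y$.

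I expect the bookkeeping in this last case to be the only real obstacle: one must (i) shrink an arbitrarily large set of erring trees down to exactly $\frac{m-1}{2}+1$ so that Lemma~\ref{lem:good-spread} applies cleanly, (ii) pass between the vector norm $||\sum\vec{\delta}_i||_p$ and the scalar aggregate $\bigoplus\Delta_i$ via orthogonality, and (iii) exploit monotonicity of $\bigoplus$ together with the ``smallest $\frac{m-1}{2}+1$'' choice to dominate an arbitrary such subset. All three points ultimately rest on the large-spread hypothesis $\spread(T) > 2k$, which is consumed through Lemma~\ref{lem:spread} (pairwise orthogonality of the $\opt$ perturbations) and Lemma~\ref{lem:good-spread} (decomposition of an arbitrary attack into orthogonal per-tree attacks via Lemma~\ref{lem:spread2}).
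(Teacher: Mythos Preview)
Your proposal is correct and follows essentially the same case analysis as the paper's own proof: both split on whether $\var{num\_unstable\_trees}$ reaches $\frac{m-1}{2}+1$, invoke Lemma~\ref{lem:trees-reach} in the easy case, and in the hard case reach a contradiction via Lemma~\ref{lem:good-spread}. The only difference is that you spell out two steps the paper leaves implicit---the monotonicity of $\bigoplus$ that lets the sorted-smallest aggregate $\Delta$ dominate the aggregate over the particular subset $T'$, and the observation $T(\vec{x})=y$ needed to match the literal definition of stability---which makes your write-up slightly more careful but not a different argument.
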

\begin{proof}
% Assume without loss of generality that $y=+1$ and $y'=-1$.
We perform a case analysis:
\begin{itemize}
    \item \emph{The number of unstable trees is at least $\frac{m-1}{2}+1$}.
    This means that there exist at least $\frac{m-1}{2} + 1$ trees $t_i \in T$ such that $\Call{Reachable}{t_i,p,k,\vec{x},y} \neq \emptyset$, let $\Delta_i$ be the minima of these sets.
    The quantity $\Delta$ computed by the algorithm is the $L_p$-norm of the vector $\vec{\delta} = \sum_{i=1}^{(m-1)/2+1} \opt(t_i,p,k,\vec{x},y)$ by Facts 1-3.
    Note that it must be $\Delta > k$ because the algorithm returns True.
    Assume now by contradiction that there exists $T' \subseteq T$ with $|T'| \geq \frac{m-1}{2}+1$ such that there exist $\vec{z} \in A_{p,k}(\vec{x})$ such that $\forall t' \in T': t'(\vec{z}) \neq y$, i.e., $T$ is not stable on $\vec{x}$. Lemma~\ref{lem:good-spread} ensures that $\Delta \leq ||\vec{z} - \vec{x}||_p$, hence $||\vec{z} - \vec{x}||_p > k$ by transitivity. This is a contradiction, because $\vec{z} \in A_{p,k}(\vec{x})$, i.e., we conclude that $T$ is stable on $\vec{x}$.
    
    \item \emph{The number of unstable trees is less than $\frac{m-1}{2}+1$}.
    This means that there are less than $\frac{m-1}{2}+1$ trees $t_i \in T$
    such that $\Call{Reachable}{t_i,p,k,\vec{x},y} \neq \emptyset$.
    By Lemma~\ref{lem:trees-reach} (part 1) there exist at least $\frac{m-1}{2}+1$ trees $t_i \in T$ such that $\forall \vec{z} \in A_{p,k}(\vec{x}): t_i(\vec{z}) = y$. This implies $\forall \vec{z} \in A_{p,k}(\vec{x}): T(\vec{z}) = y$, i.e., $T$ is stable on $\vec{x}$.
\end{itemize}
\end{proof}

\begin{lemma}
\label{lem:composition}
Assume $\spread(T) > 2k$ and pick any $T' \subseteq T$. Let $\vec{x}$ be an instance with label $y$ and assume that $\vec{\delta}_i = \opt(t_i,p,k,\vec{x},y)$ is defined for every tree $t_i \in T'$. If $\vec{z} = \vec{x} + \sum_{i} \vec{\delta}_i \in A_{p,k}(\vec{x})$, then $\forall t_i \in T': t_i(\vec{z}) \neq y$.
\end{lemma}
\begin{proof}
Let $T' = \{t_1,...,t_q\}$, the proof is by induction on $q$. The base case is $q = 1$. Since $\vec{\delta}_1 = \opt(t_1,p,k,\vec{x},y)$ is defined, Lemma~\ref{lem:trees-reach} (part 1) guarantees that there exists $\vec{z}_1 = \vec{x} + \vec{\delta}_1 \in A_{p,k}(\vec{x})$ such that $t_1(\vec{z}_1) \neq y$. The inductive case is when $q > 1$. Let $\vec{\delta}_i = \opt(t_i,p,k,\vec{x},y)$ be defined for all $i$. Note that $\sum_{i = 1}^q \vec{\delta}_i =  \sum_{i = 1}^{q-1} \vec{\delta}_i + \vec{\delta}_q$, at first we show that $||\sum_{i = 1}^{q-1} \vec{\delta}_i||_p \leq k$. Since $supp(\sum_{i=1}^{q-1}\vec{\delta}_i) = \bigcup_{i=1}^{q-1}supp(\vec{\delta}_i)$ and $\forall 1 \leq i \leq q-1: supp(\vec{\delta}_i) \cap supp(\vec{\delta}_q) = \emptyset$ by Lemma~\ref{lem:spread}, we have that $supp(\sum_{i=1}^{q-1}\vec{\delta}_i) \cap supp(\vec{\delta}_q) = \emptyset$. By observing that $||\sum_{i = 1}^q \vec{\delta}_i||_p \leq k$ by definition and $supp(\sum_{i=1}^{q-1}\vec{\delta}_i)$ and $supp(\vec{\delta}_q)$ are disjoint, we have that $||\sum_{i = 1}^{q-1} \vec{\delta}_i||_p \leq k$. This implies $\vec{x} + \sum_{i = 1}^{q-1} \vec{\delta}_i \in A_{p,k}(\vec{x})$, hence $\forall 1 \leq i \leq q - 1: t_i(\vec{x} + \sum_{i = 1}^{q-1}\vec{\delta}_i) \neq y$ by induction hypothesis. Moreover, we know that $t_q(\vec{x} + \vec{\delta}_q) \neq y$ by Lemma~\ref{lem:trees-reach} (part 1) and $\vec{z} = \vec{x} + \sum_{i = 1}^q \vec{\delta}_i \in A_{p,k}(\vec{x})$ since $||\sum_{i = 1}^q \vec{\delta}_i||_p \leq k$ by definition. We now show that $\forall 1 \leq i \leq q: t_i(\vec{z}) \neq y$. Since $supp(\sum_{i=1}^{q-1} \vec{\delta}_i)$ and $supp(\vec{\delta}_q)$ are disjoint, for every feature $f$ we have $(\sum_{i=1}^{q} \vec{\delta}_i)_f = (\sum_{i=1}^{q-1} \vec{\delta}_i)_f$ or $(\sum_{i=1}^{q} \vec{\delta}_i)_f = (\vec{\delta}_q)_f$. In addition, we observe that:
\begin{itemize}
    \item $t_q(\vec{z})$ follows the same prediction path of $t_q(\vec{x} + \vec{\delta}_q)$. Indeed features in $supp(\sum_{i=1}^{q-1} \vec{\delta}_i)$ cannot affect the prediction path of $t_q(\vec{z})$ w.r.t $t_q(\vec{x})$ because they already affect the prediction path of $\vec{z}$ in the trees in $\{t_1, \dots, t_{q-1}\}$ w.r.t the path of $\vec{x}$ in the trees in $\{t_1, \dots, t_{q-1}\}$ and $T$ is large-spread.
    \item for every $1 \leq i \leq q-1$, $t_i(\vec{z})$ follow the same prediction path of $t_i(\vec{x} + \sum_{i=1}^{q-1} \vec{\delta_i})$. Indeed features in $supp(\vec{\delta}_q)$ cannot affect the prediction path of $\vec{z}$ in the trees in $\{t_1, \dots, t_{q-1}\}$ w.r.t the path of $\vec{x}$ in the trees in $\{t_1, \dots, t_{q-1}\}$ because they already affect the prediction path of $t_q(\vec{z})$ w.r.t $t_q(\vec{x})$ and $T$ is large-spread.
\end{itemize}
We thus conclude that $\forall 1 \leq i \leq q: t_i(\vec{z}) \neq y$. 
%\LC{Specificare prima che il supporto è disgiunto altrimenti non vale l'ipotesi induttiva. Dire poi che il supporto disgiunto (e perchè le perturbazioni son supposte essere minime) si può escludere il terzo caso problematico, per poi spiegare i primi due casi sulla falsa riga del ragionamento nei punti 2-3 del Lemma A.6.}
\end{proof}

\begin{lemma}
\label{lem:completeness}
If $\:T$ is stable on $\vec{x}$ against $A_{p,k}$ and $\spread(T) > 2k$, then $\Call{Stable}{T,p,k,\vec{x},y}$ returns True.
\end{lemma}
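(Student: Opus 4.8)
The plan is to argue the contrapositive: assuming $\Call{Stable}{T,p,k,\vec{x},y}$ returns \textbf{False}, I will construct an adversarial manipulation $\vec{z} \in A_{p,k}(\vec{x})$ that witnesses the failure of stability. Inspecting Algorithm~\ref{alg:ensembles}, the only way $\Call{Stable}{}$ returns \textbf{False} is that the number of unstable trees is at least $q := \frac{m-1}{2}+1$ and the scalar $\Delta = \bigoplus_{i}^{q}\Delta_i$ obtained from the $q$ smallest norms $\Delta_{i_1}\le\dots\le\Delta_{i_q}$ among the unstable trees satisfies $\Delta \leq k$. Let $T'=\{t_{i_1},\dots,t_{i_q}\}\subseteq T$ be the corresponding trees and, for each $j$, let $\vec{\delta}_j=\opt(t_{i_j},p,k,\vec{x},y)$; each $\vec{\delta}_j$ is defined (the tree is unstable) and $||\vec{\delta}_j||_p=\Delta_{i_j}$.

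The first substantive step is to verify that $\vec{z}=\vec{x}+\sum_{j=1}^{q}\vec{\delta}_j$ is a legal attack. By Lemma~\ref{lem:spread} the perturbations $\vec{\delta}_1,\dots,\vec{\delta}_q$ have pairwise disjoint supports, hence are pairwise orthogonal, so Facts~\ref{fact:pairwise-orthogonality-0}, \ref{fact:pairwise-orthogonality-inf} and~\ref{fact:pairwise-orthogonality} — as summarised by the operator $\bigoplus$ of Eq.~\ref{eq:operator_oplus} — give $||\sum_{j}\vec{\delta}_j||_p=\bigoplus_{j}||\vec{\delta}_j||_p=\Delta\leq k$, so $\vec{z}\in A_{p,k}(\vec{x})$.

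The crux is to show $t_{i_j}(\vec{z})\neq y$ for every $j$, i.e., that adding the perturbations of the \emph{other} selected trees does not disturb the prediction path of $t_{i_j}$ on $\vec{x}+\vec{\delta}_j$, which already ends in a leaf labelled $\neq y$ by Lemma~\ref{lem:trees-reach}(2). Fix $j$ and a node $\sigma(f,v)$ on the path of $t_{i_j}(\vec{x}+\vec{\delta}_j)$. The value compared against $v$ changes from $(\vec{x}+\vec{\delta}_j)_f$ to $z_f=(\vec{x}+\vec{\delta}_j)_f+\sum_{l\neq j}\delta_{l,f}$. If $\sum_{l\neq j}\delta_{l,f}=0$ the branch taken is unchanged. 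Otherwise some $\delta_{l,f}\neq 0$ with $l\neq j$; by the disjointness of supports (Lemma~\ref{lem:spread}) this $l$ is unique, $\delta_{j,f}=0$ so $(\vec{x}+\vec{\delta}_j)_f=x_f$, and $z_f=x_f+\delta_{l,f}$ with $|\delta_{l,f}|\leq||\vec{\delta}_l||_p\leq k$. Since $\vec{\delta}_l$ is a \emph{minimal} attack against $t_{i_l}$ and $\delta_{l,f}\neq 0$, the reasoning used in the proof of Lemma~\ref{lem:spread} (unwinding Eq.~\ref{eq:lambda} and the structure of the annotating hyper-rectangles) shows that $f$ is tested in $t_{i_l}$ against a threshold $v'$ with $|x_f-v'|\leq k$; as $t_{i_j}\neq t_{i_l}$, Definition~\ref{def:spread} forces $|v-v'|>2k$, whence $|v-x_f|\geq|v-v'|-|v'-x_f|>k\geq|z_f-x_f|$. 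Therefore $x_f$ and $z_f$ lie strictly on the same side of $v$ and the test yields the same branch. Iterating over the path, $t_{i_j}(\vec{z})$ reaches exactly the leaf reached by $t_{i_j}(\vec{x}+\vec{\delta}_j)$, so $t_{i_j}(\vec{z})\neq y$.

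Finally, $q=\frac{m+1}{2}>\frac{m}{2}$ of the $m$ trees predict a label different from $y$ on $\vec{z}$, so majority voting gives $T(\vec{z})\neq y$. Since whenever $\Call{Stable}{}$ is reached we have $T(\vec{x})=y$ (the standing assumption being that $\vec{x}$ has label $y$ and is classified correctly, as checked by $\Call{Robust}{}$), this yields $T(\vec{z})\neq T(\vec{x})$ with $\vec{z}\in A_{p,k}(\vec{x})$, contradicting the stability of $T$ on $\vec{x}$. I expect the main obstacle to be precisely the step linking a nonzero component of a minimal perturbation to the existence of a tree threshold within distance $k$ of the original feature value: this is exactly where "the perturbation touches feature $f$'' is coupled with the $p$-spread bound via the triangle inequality, and it requires care with the definition of $\var{dist}$ and with the fact that the endpoints of the annotating hyper-rectangles are thresholds met along the prediction path. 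The remainder is bookkeeping with the orthogonality facts.
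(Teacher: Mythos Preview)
Your proof is correct and follows the same contrapositive route as the paper: from $\Delta\le k$ you build the composite perturbation $\vec{z}=\vec{x}+\sum_j\vec{\delta}_j$ and conclude that a majority of trees misclassify $\vec{z}$. Your version is in fact more thorough than the paper's, which simply asserts the final implication (``at least $\frac{m-1}{2}+1$ trees admit individual attacks $\Rightarrow$ $T$ is not stable'') without re-deriving the compositionality argument inside the proof, whereas you explicitly verify---using the large-spread condition and the triangle inequality---that the foreign perturbation components cannot flip any threshold test along the prediction path of each selected tree.
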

\begin{proof}
By contradiction. Assume that $\Call{Stable}{T,p,k,\vec{x},y}$ returns False. This means that there exist at least $\frac{m-1}{2}+1$ trees $t_i \in T$ such that $\Call{Reachable}{t_i,p,k,\vec{x},y} \neq \emptyset$, let $\Delta_i$ be the minima of these sets. The quantity $\Delta$ computed by the algorithm is the $L_p$-norm of the vector $\vec{\delta} = \sum_{i=1}^{(m-1)/2+1} \opt(t_i,p,k,\vec{x},y)$ by Facts 1-3. Note that it must be $\Delta \leq k$ because the algorithm returns False. Let $T' \subseteq T$ be the set of trees used in the computation of $\Delta$ and let $\vec{z} = \vec{x} + \vec{\delta}$. We have that $\vec{z} \in A_{p,k}(\vec{x})$ and $\forall t_i \in T': t_i(\vec{z}) \neq y$ by Lemma~\ref{lem:composition}, hence $T$ is not stable on $\vec{x}$ against $A_{p,k}$. This is a contradiction, since $T$ is stable on $\vec{x}$ against $A_{p,k}$, i.e., we conclude that $\Call{Stable}{T,p,k,\vec{x},y}$ returns True.
\end{proof}

\ensembles*
\begin{proof}
Since $\spread(T) > 2k$, Lemma~\ref{lem:soundness} and Lemma~\ref{lem:completeness} ensure that $\Call{Stable}{T,p,k,\vec{x},y}$ returns True if and only if $T$ is stable on $\vec{x}$ against the attacker $A_{p,k}(\vec{x})$. The conclusion follows by observing that $T$ is robust on $\vec{x}$ if and only if $T(\vec{x}) = y$ and $T$ is stable on $\vec{x}$.
\end{proof}

\section{Proof of Theorem~\ref{thm:np-hard}}
\label{sec:proofs2}
We here prove the NP-hardness of the large-spread subset problem.

\nphard*
\begin{proof}
We show how to reduce an instance of the max-clique problem to the large-spread subset problem. Let $G = (V,E)$ be an undirected graph, we construct an ensemble $T$ such that $G$ contains a clique of size $s$ if and only if there exists $T' \subseteq T$ of size $s$ such that $T'$ is large-spread for the attacker $A_{0,0}$ (who does not perturb any feature). In particular, the construction operates as follows:
\begin{enumerate}
    \item We construct the complementary graph $\overline{G} = (V,\overline{E})$, where $\{u,v\} \in \overline{E}$ if and only if $\{u,v\} \not\in E$.
    \item We introduce a feature $\pi_E(e)$ for each $e \in \overline{E}$, i.e., the dimensionality of the feature space is $|\overline{E}|$. We assume features are totally ordered, using any ordering convention.
    \item We construct a decision tree $\pi_V(v)$ for each $v \in V$ as follows:
    \begin{itemize}
        \item If $\deg(v) = 0$, we let $\pi_V(v)$ be a leaf with label +1.
        \item If $\deg(v) > 0$, we let $\pi_V(v)$ be a decision tree of depth $\deg(v)$ built on top of the feature set $\{\pi_E(e) ~|~ \exists u \in V: e = \{u,v\} \in \overline{E}\}$. Each level of the tree tests a single feature from this set with threshold arbitrarily set to 1, following the total order assumed on features. Leaves are arbitrarily set so that the left child has label -1 and the right child has label +1.
    \end{itemize}
\end{enumerate}

By construction, decision trees $\pi_V(u),\pi_V(v) \in T$ share a feature if and only if $\{u,v\} \not\in E$, i.e., $\pi_V(u),\pi_V(v) \in T$ do not share any feature if and only if $\{u,v\} \in E$. Given that all thresholds are set to the same value 1 and the attacker does not perturb any feature, any $T' \subseteq T$ is large-spread if and only if the trees in $T'$ do not share any feature. This implies that $G$ has a clique of size $s$ if and only if there exists $T' \subseteq T$ of size $s$ such that $T'$ is large-spread.
\end{proof}
\section{Parameter Tuning for LSE Training}
\label{sec:appendix-tuning}
\revise{Our training algorithm for large-spread ensembles has four hyper-parameters: the maximum number of iterations $MAX\_ITER$ to fix violations to the large-spread condition, the multiplicative factor $MULT$ determining the size of the initially trained forest, the interval $INTV$ of the perturbation applied to fix the forest and the size of the feature partition $l$. Each hyper-parameter can affect the performance of the trained large-spread ensemble, as well as the successful termination of the training algorithm.} 

\revise{As it is customary for tree-based models, we deal with hyper-parameter tuning by means of \emph{grid search}, i.e., we try out all the possible combinations of specific hyper-parameter values to identify the one performing best on a validation set including $20\%$ of the training data, extracted via stratified random sampling. Specifically, we look for the combination of hyper-parameters optimizing the average between accuracy and robustness on the validation set, and we perform a grid search by considering the following possible values for the hyper-parameters:
\begin{itemize}
\item $MAX\_ITER \in \{100, 500\}$
\item $MULT \in \{2,4,6\}$
\item $INTV \in \{[0.5k, k], [k,1.5k]\}$
\item $l \in \{1,2,3,4,5,6\}$
\end{itemize}}

\begin{table*}[t]
\caption{Grid search results for large-spread ensembles trained using the LSE tool. For each dataset, model size (number of trees \& maximum depth) and perturbation $k$, the table reports the value of the hyper-parameters leading to the highest accuracy on the validation set. The large-spread ensembles are trained with norm $p=\infty$.}
\label{tab:parameter-tuning}
    \centering
    \begin{tabular}{c|c|c|c|c|c|c|c|c|c}
    \toprule
    \textbf{Dataset} & \textbf{Trees} & \textbf{Depth} & $k$ & $MAX\_ITER$ & $MULT$ & $INTV$ & $l$ & \textbf{Accuracy} & \textbf{Robustness} \\
    \midrule
    \multirow{6}{*}{\revise{Fashion-MNIST}} & \multirow{3}{*}{\revise{25}} & \multirow{3}{*}{\revise{4}} & \revise{0.0050} & \revise{100} & \revise{4} & \revise{$[0.5k,k]$} & \revise{1} & \revise{0.92} & \revise{0.90} \\
    & & & \revise{0.0100} & \revise{100} & \revise{4} & \revise{$[0.5k,k]$} & \revise{1} & \revise{0.92} & \revise{0.87} \\
    & & & \revise{0.0150} & \revise{100} & \revise{6} & \revise{$[k,1.5k]$} & \revise{1} & \revise{0.91} & \revise{0.88} \\
    \cline{2-10}
    & \multirow{3}{*}{\revise{101}} & \multirow{3}{*}{\revise{6}} & \revise{0.0050} & \revise{500} & \revise{2} & \revise{$[0.5k,k]$} & \revise{1} & \revise{0.96} & \revise{0.93}\\
    & & & \revise{0.0100} & \revise{500} & \revise{6} & \revise{$[0.5k,k]$} & \revise{1} & \revise{0.94} & \revise{0.91} \\
    & & & \revise{0.0150} & \revise{100} & \revise{4} & \revise{$[0.5k,k]$} & \revise{5} & \revise{0.92} & \revise{0.89} \\
    \midrule
    \multirow{6}{*}{\revise{MNIST}} & \multirow{3}{*}{\revise{25}} & \multirow{3}{*}{\revise{4}} & \revise{0.0050} & \revise{100} & \revise{6} & \revise{$[0.5k,k]$} & \revise{1} & \revise{0.97} & \revise{0.96} \\
    & & & \revise{0.0100} & \revise{100} & \revise{2} & \revise{$[0.5k,k]$} & \revise{1} & \revise{0.97} & \revise{0.90} \\
    & & & \revise{0.0150} & \revise{100} & \revise{2} & \revise{$[0.5k,k]$} & \revise{1} & \revise{0.97} & \revise{0.83} \\
    \cline{2-10}
    & \multirow{3}{*}{\revise{101}} & \multirow{3}{*}{\revise{6}} & \revise{0.0050} & \revise{100} & \revise{6} & \revise{$[0.5k,k]$} & \revise{1} & \revise{0.99} & \revise{0.97} \\
    & & & \revise{0.0100} & \revise{500} & \revise{6} & \revise{$[0.5k,k]$} & \revise{1} & \revise{0.99} & \revise{0.97} \\
    & & & \revise{0.0150} & \revise{500} & \revise{2} & \revise{$[0.5k,k]$} & \revise{4} & \revise{0.99} & \revise{0.94} \\
    \midrule
    \multirow{6}{*}{\revise{REWEMA}} & \multirow{3}{*}{\revise{25}} & \multirow{3}{*}{\revise{4}} & \revise{0.0050} & \revise{100} & \revise{2} & \revise{$[0.5k,k]$} & \revise{1} & \revise{0.88} & \revise{0.87}\\
    & & & \revise{0.0100} & \revise{100} & \revise{6} & \revise{$[k,1.5k]$} & \revise{1} & \revise{0.88} & \revise{0.87}\\
    & & & \revise{0.0150} & \revise{100} & \revise{6} & \revise{$[k,1.5k]$} & \revise{1} & \revise{0.88} & \revise{0.85}\\
    \cline{2-10}
    & \multirow{3}{*}{\revise{101}} & \multirow{3}{*}{\revise{6}} & \revise{0.0050} & \revise{500} & \revise{2} & \revise{$[0.5k,k]$} & \revise{1} & \revise{0.89} & \revise{0.89} \\
    & & & \revise{0.0100} & \revise{500} & \revise{4} & \revise{$[0.5k,k]$} & \revise{1} & \revise{0.89} & \revise{0.88} \\
    & & & \revise{0.0150} & \revise{500} & \revise{4} & \revise{$[k,1.5k]$} & \revise{2} & \revise{0.88} & \revise{0.88} \\
    \midrule
    \multirow{6}{*}{\revise{Webspam}} & \multirow{3}{*}{\revise{25}} & \multirow{3}{*}{\revise{4}} & \revise{0.0002} & \revise{100} & \revise{2} & \revise{$[k,1.5k]$} & \revise{1} & \revise{0.90} & \revise{0.87}\\
    & & & \revise{0.0004} & \revise{100} & \revise{2} & \revise{$[0.5k,k]$} & \revise{1} & \revise{0.89} & \revise{0.86}\\
    & & & \revise{0.0006} & \revise{100} & \revise{2} & \revise{$[k,1.5k]$} & \revise{1} & \revise{0.89} & \revise{0.85} \\
    \cline{2-10}
    & \multirow{3}{*}{\revise{101}} & \multirow{3}{*}{\revise{6}} & \revise{0.0002} & \revise{500} & \revise{4} & \revise{$[0.5k,k]$} & \revise{5} & \revise{0.91} & \revise{0.90}\\
    & & & \revise{0.0004} & \revise{500} & \revise{4} & \revise{$[0.5k,k]$} & \revise{6} & \revise{0.89} & \revise{0.86}\\
    & & & \revise{0.0006} & \revise{500} & \revise{4} & \revise{$[0.5k,k]$} & \revise{6} & \revise{0.85} & \revise{0.82}\\
    \bottomrule
    \end{tabular}
\end{table*}

\revise{Table~\ref{tab:parameter-tuning} reports for each dataset, model size (trees and depth) and perturbation $k$ the value of the hyper-parameters leading to the best performance on the validation set. By looking at the results, we can catch some insights about the influence of each hyper-parameter on training the large-spread ensembles.}

\revise{We first examine the values of $MAX\_ITER$, the number of iterations needed to train the best-performing large-spread ensembles. We observe that the chosen value of $MAX\_ITER$ depends on the size of the model: typically, only $100$ iterations are needed to train the best-performing large-spread ensembles of 25 trees, while $500$ iterations are needed for training the best-performing large-spread ensembles of 101 trees. The intuitive reason is that more iterations are needed for successfully training large-spread ensembles with many trees, since more thresholds need to be adjusted to fulfill the large-spread condition. Large-spread ensembles with fewer trees can be trained even with $100$ iterations, instead, and a lower number of iterations is often beneficial there, since less noise needs to be applied to adjust the original thresholds.}

\revise{As to the size of the feature partition $l$, the results show that $l = 1$ is the value leading to more than half of the best-performing large-spread ensembles. In particular, $l=1$ is used for training almost all the best large-spread ensembles on MNIST, Fashion-MNIST and REWEMA. This suggests that avoiding partitioning the features is the best choice for training the best large-spread ensembles on most datasets: an ensemble trained on all the available features may exhibit better accuracy and robustness than the ones of an ensemble built of sub-forests trained on subsets of features, since the sub-forests have only a partial view of the set of available features and some patterns might not be learned. Nevertheless, partitioning the features can be useful for training the best-performing large-spread ensembles in some cases. For example, when training the best-performing large-spread ensembles with $101$ trees and maximum depth six, the best choice is $l = 4$ on the MNIST dataset when considering the perturbation $k=0.0150$, while $l \in \{5,6\}$ for all the models trained on the Webspam dataset. The result highlights the potential benefits of hierarchical training: training smaller large-spread ensembles on subsets of features and then merging them can enable the training of bigger large-spread ensembles.}

\revise{Finally, we observe that the values of $MULT$ and $INTV$ used for training the best-performing large-spread ensembles are strongly dependent on the specific experiment and do not show particularly insightful patterns. It is common to identify different optimal values of $MULT$ and $INTV$ even for the same dataset and model size. Grid search is a standard practice to deal with the unpredictable nature of these hyper-parameters.}

\fi

\end{document}